\icmltitlerunning{Value Alignment Verification}
\newtheorem{theorem}{Theorem}
\newtheorem{lemma}{Lemma}
\newtheorem{corollary}{Corollary}
\newtheorem{definition}{Definition}
\newtheorem{proposition}{Proposition}
\newcommand{\wb}{\mathbf{w}}
\DeclareMathOperator{\support}{support}
\begin{document}

\twocolumn[

\icmltitle{Value Alignment Verification}

\icmlsetsymbol{equal}{*}

\begin{icmlauthorlist}
\icmlauthor{Daniel S. Brown}{equal,berkeley}
\icmlauthor{Jordan Schneider}{equal,austin}
\icmlauthor{Anca Dragan}{berkeley}
\icmlauthor{Scott Niekum}{austin}
\end{icmlauthorlist}

\icmlaffiliation{berkeley}{University of California, Berkeley, USA}
\icmlaffiliation{austin}{University of Texas at Austin, USA}

\icmlcorrespondingauthor{Daniel Brown}{dsbrown@berkeley.edu}
\icmlcorrespondingauthor{Jordan Schneider}{joschnei@cs.utexas.edu}

\icmlkeywords{Value Alignment, AI Safety, Reinforcement Learning}

\vskip 0.3in
]

\printAffiliationsAndNotice{\icmlEqualContribution}

\begin{abstract}
As humans interact with autonomous agents to perform increasingly complicated, potentially risky tasks, it is important to be able to efficiently evaluate an agent's performance and correctness. In this paper we formalize and theoretically analyze the problem of efficient \textit{value alignment verification}: how to efficiently test whether the behavior
of another agent is aligned with a human's values. The goal is to construct a kind of ``driver's test” that a human can give to any agent which will verify value alignment via a minimal number of queries.
We study alignment verification problems with both idealized humans that have an explicit reward function as well as problems where they have implicit values. We analyze verification of exact value alignment for rational agents and propose and analyze heuristic and approximate value alignment verification tests in a wide range of gridworlds and a continuous autonomous driving domain. Finally, we prove that there exist sufficient conditions such that we can verify exact and approximate alignment across an infinite set of test environments via a constant-query-complexity alignment test.
\end{abstract}

\section{Introduction}
If we desire autonomous agents that can interact with and assist humans and other agents in performing complex, risky tasks, then it is important that humans can verify that these agents' policies are aligned with what is expected and desired. This alignment is often termed \textit{value alignment} and is defined in the Asilomar AI Principles\footnote{
\url{https://futureoflife.org/ai-principles/} } as follows:
"Highly autonomous AI systems should be designed so that their goals and behaviors can be assured to align with human values throughout their operation."
In this paper, we provide a theoretical analysis of the problem of \textbf{efficient value alignment verification}: \textit{how to efficiently test whether a robot is aligned with a human's values.}

Existing work on value alignment often focuses on qualitative evaluation of trust~\cite{huang2018establishing} or asymptotic alignment of an agent's performance via interactions and active learning~\cite{hadfield2016cooperative,christiano2017deep,sadigh2017active}. By contrast, our work analyzes the difficulty of efficiently evaluating another agent's correctness by formally defining value alignment and seeking efficient tests for value alignment verification that are applicable when two or more agents already have learned a policy or reward function and want to efficiently test compatibility. 
To the best of our knowledge, we are the first to define and analyze the problem of value alignment verification. In particular, we propose exact, approximate, and heuristic tests that one agent can use to quickly and efficiently verify value alignment with another agent. 

As depicted in Figure~\ref{fig:huma_testing_robot_or_human}, the goal of value alignment verification is to construct a kind of “driver's test” that a human can give to any agent which will verify value alignment and consists of only a small number of queries. We define values in the reinforcement learning sense, i.e., with respect to a reward function: a robot is exactly value aligned with a human if the robot's policy is optimal under the human's reward function.
The two agents in a value alignment verification problem (human and robot) may have different communication mechanisms and different value introspection abilities. Thus, the way we analyze value alignment verification will depend on 
whether the human's and robot's access to their values is \textit{explicit}, e.g., able to write down a value function or reward function or \textit{implicit}, e.g., able to answer preference queries or sample actions from a policy.
The most general version of value alignment verification involves a human with implicit values who seeks to verify the value alignment of a robot with implicit values, e.g. a black-box policy. This setting motivates our work;
however, it is challenging and we postpone many questions for future research.

We follow a ground-up approach where we analyze the difficulty of value alignment verification starting in the most idealized setting, and then gradually relax our assumptions. We first analyze sufficient conditions under which efficient exact value alignment verification is possible in the \textit{explicit human, explicit robot} setting, where an idealized human tester knows their reward function and so does the robot. When the robot is rational with respect to a reward function that is a linear combination of known features, we show that it is possible to provably verify the alignment of any rational explicit robot via a succinct test consisting of either reward queries, value queries, or trajectory preference queries.  
We next consider the \textit{explicit human, implicit robot} setting, where an idealized human knows their reward function, but seeks to efficiently verify the alignment of a black-box policy via action queries. We study heuristics for generating value alignment verification tests in this setting and compare their performance on a range of gridworlds. 

Finally, in Section~\ref{sec:human} we study the most general setting of \textit{implicit human, implicit robot}. We propose an algorithm for approximate value alignment verification in continuous state and action spaces and provide empirical results in a continuous autonomous driving domain where the human can only query the robot for preferences over trajectories. 
We conclude with a brief discussion of the challenge of designing value alignment verification tests that generalize across multiple MDPs. Somewhat surprisingly, we provide initial theory demonstrating that if the human can create the test environment for the robot, then exact and approximate value alignment across an infinite family of MDPs can be verified by observing the robot's policy in only two carefully constructed test environments.

Source code and videos are available at \url{https://sites.google.com/view/icml-vav}.

\begin{figure*}[th]
    \centering
    \includegraphics[width=0.8\linewidth]{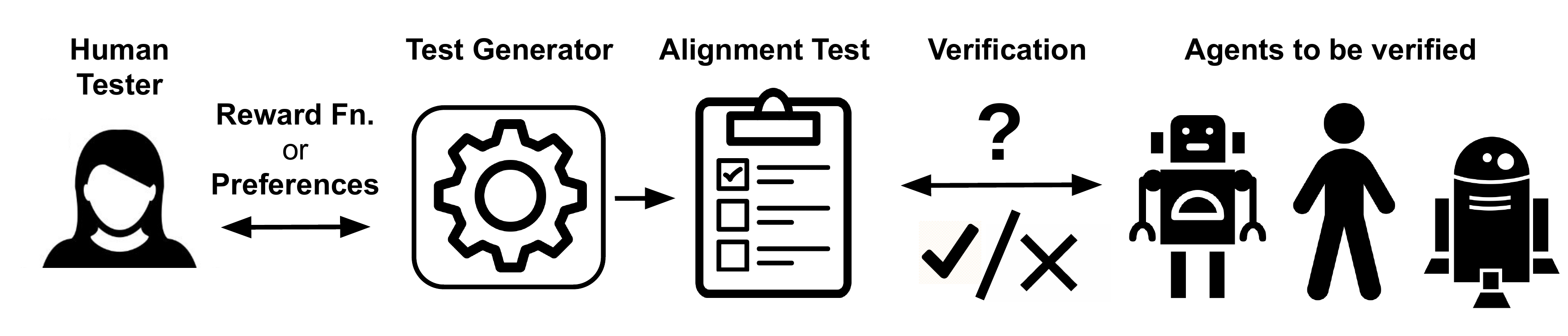}
    \caption{The tester provides a reward function either explicitly or implicitly to a test generation algorithm which distills the human's values into a succinct alignment test. This single test is used to efficiently verify the value alignment of any agent.
    }
    \label{fig:huma_testing_robot_or_human}
\end{figure*}

\section{Related work} 

\paragraph{Value Alignment:} Most work on value alignment focuses on how to iteratively train a learning agent such that its final behavior is aligned with a user's intentions~\cite{leike2018scalable, russell2015research,amodei2016concrete}.
One example is cooperative inverse reinforcement learning (CIRL)~\cite{hadfield2016cooperative,fisac2020pragmatic,shahbenefits}, which formulates value alignment as a game between a human and a robot, where both try to maximize a shared reward function that is only known by the human. CIRL and other research on value alignment focus on ensuring the learning agent asymptotically converges to the same values as the human teacher, but do not provide a way to check whether value alignment has been achieved. By contrast, we are interested in value alignment \textit{verification}. Rather than assuming a cooperative setting, we assume the robot being tested has already learned a policy or reward function and the human wants to efficiently verify whether the robot is value aligned.

\paragraph{Reward Learning:}
Inverse reinforcement learning (IRL)~\cite{ng2000algorithms,abbeel2004apprenticeship,arora2018survey} and active preference learning~\cite{wirth2017survey,christiano2017deep,biyik2019asking} algorithms aim to determine the reward function of a human via offline demonstrations or online queries. In contrast, value alignment verification only seeks to answer the question of whether two agents are aligned, without concern for the exact reward function of the robot. In Section~\ref{sec:omnipotent_vav} we prove that value alignment verification can be performed in a constant number of queries whereas active reward learning requires a logarithmic number of queries~\cite{amin2016towards,amin2017repeated}. 
In cases where the human has implicit values, active reward learning can be used to infer the reward function of the human tester, and then this inferred reward function can be used to automatically generate a high-confidence value alignment test. While active reward learning may be a subcomponent of value alignment verification, it focuses on customizing reward inference queries for a single agent, whereas value alignment verification seeks to design a single alignment test that works for all agents.

\paragraph{Machine Teaching:}
In machine teaching~\cite{zhu2018overview}, a teacher seeks to optimize a minimal set of training data such that a student (running a particular learning algorithm) learns a desired set of model parameters. Value alignment verification can be seen as a form of machine \textit{testing} rather than teaching---machine teaching algorithms typically search for a minimal set of training data that will teach a learner a specific model, whereas we seek a minimal set of questions that will allow a tester to verify whether another agent's learned model is correct. Thus, in machine teaching, the teacher provides examples and their answers, but in machine testing the tester provides examples and then queries the testee for the answer. While machine teaching has been applied to sequential decision making domains~\cite{cakmak2012algorithmic,huang2017enabling,brown2019machine}, we are not aware of any work that considers the problem of value alignment verification. 

\paragraph{Policy Evaluation}
Policy evaluation~\cite{sutton1998introduction} aims to answer the question, "How much return would another agent achieve according to my values?" By focusing on the simpler decision problem, "Is the robot value aligned with the human?", we seek tests that are much more sample-efficient than running a full policy evaluation. Off-Policy Evaluation (OPE) seeks to perform policy evaluation without executing the testee's policy~\cite{precup2000eligibility,thomas2015high,hanna2017bootstrapping}. However, OPE is often sample-inefficient, provides high-variance estimates, and typically assumes explicit access to the tester's reward function, and the tester and testee policies. Value alignment verification is applicable in settings where the policies and reward functions of both agents may be implicit and only accessible indirectly.

\section{Notation}

We adopt notation proposed by Amin et al.~\cite{amin2017repeated} where a Markov Decision Process (MDP) $M$ consists of an environment $E = (\mathcal{S}, \mathcal{A}, P, S_0, \gamma)$ and a reward function $R: \mathcal{S} \to \mathbb{R}$. An environment $E$, consists of a set of states $\mathcal{S}$, a set of actions $\mathcal{A}$, a transition function $P:\mathcal{S} \times \mathcal{A} \times \mathcal{S} \to [0,1]$ from state-action pairs to a distribution over next states, a discount factor $\gamma \in [0,1)$, and a distribution over initial states $S_0$. A policy $\pi : \mathcal{S} \times \mathcal{A} \to [0,1]$ is a mapping from states to a distribution over actions.
The state and state-action values of a policy $\pi$ are 
$V^\pi_{R}(s) = \mathbb{E}_{\pi}[\sum_{t=0}^\infty \gamma^t R(s_t) \mid s_0 = s]$ and
$Q^{\pi}_{R}(s,a) = \mathbb{E}_{\pi} [\sum_{t=0}^\infty\gamma^t R(s_t) \mid s_0 = s, a_0 = a]$ for $s \in \mathcal{S}$ and $a \in \mathcal{A}$. We denote $V^*_{R}(s) = \max_{\pi} V^{\pi}_{R}(s)$ and $Q^*_{R}(s,a) = \max_{\pi} Q^{\pi}_{R}(s,a)$. The expected value of a policy is denoted by $V^\pi_{R} = \mathbb{E}_{s \in S_0} [V_{R}^\pi(s)]$. 

We assume that the $\arg\max$ operator returns a set, i.e., $\arg \max_x f(x) := \{x \mid f(y) \leq f(x), \forall y \}$. We let $\pi^*_R \in \arg\max_\pi V^\pi_{R}$ denote an optimal policy under reward function $R$. We also let $\mathcal{A}_R(s) = \arg\max_{a'\in \mathcal{A}} Q_{R}^*(s,a')$ denote the set of all optimal actions at state $s$ under reward function $R$. Thus, $\mathcal{A}_R(s) = \{ a \in \mathcal{A} \mid \pi^*_R(a|s) > 0  \}$

As is common~\cite{ziebart2008maximum,barreto2017successor,brown2020safe}, we assume that the reward function is linear under features $\phi: \mathcal{S} \mapsto \mathbb{R}^k$, so that $R(s) = \mathbf{w}^T \phi(s)$, where $\mathbf{w} \in \mathbb{R}^k$. Thus, we use $R$ and $\mathbf{w}$ interchangeably.
Note that this assumption of a linear reward function is not restrictive as these features can be arbitrarily complex nonlinear functions of the state and could be obtained via unsupervised learning from raw state observations~\cite{laskin2020curl,brown2020safe}. Given that $R(s) = \mathbf{w}^T \phi(s)$, the state-action value function can be written in terms of discounted expectations over features~\cite{abbeel2004apprenticeship}:
$Q^{\pi}_{R}(s,a) = \mathbf{w}^T \Phi_{\pi}^{(s,a)}$,
where 
$\Phi_{\pi}^{(s,a)} = \mathbb{E}_{\pi} [\sum_{t=0}^\infty\gamma^t \phi(s_t) \mid s_0 = s, a_0 = a]$.

\section{Value Alignment Verification}\label{sec:vav}
In this section we first explicitly define value alignment and value alignment verification. Next, we discuss how assuming rationality of the robot enables efficient provable value alignment verification.
We then examine how to perform (approximate) value alignment verification in tabular MDPs under different forms of test queries, including reward, value, preference, and action queries. We conclude this section by presenting a method for approximate value alignment verification when the tester is a human with implicit values and the state and action spaces are continuous.

We first formalize value alignment. 
Consider two agents: a human and a robot. We will assume that the human has a (possibly implicit) reward function that provides the ground truth for determining value alignment verification of the robot. We define (approximate) value alignment as follows:
\begin{definition}\label{def:eps_va}
Given reward function $R$, policy $\pi'$ is $\epsilon$-\textbf{value aligned} in environment $E$ if and only if
\begin{equation}
V^{*}_{R}(s) - V^{\pi'}_{R}(s)  \leq \epsilon,  \forall s \in \mathcal{S}.
\end{equation}
\end{definition}
Exact value alignment is achieved when $\epsilon=0$.

We are interested in \textbf{efficient value alignment verification} where we can correctly classify agents as aligned or misaligned within certain error tolerances while keeping the total test size small. Formally, efficient  (approximate) value alignment verification is a solution to the following:
\begin{align} \label{eq:evav_problem}
    &\min_{T \subseteq \mathcal{T}} |T|, \text{s.t.}\; \forall \pi' \in \Pi, \forall s \in \mathcal{S} \\
    &V^{*}_{R}(s) - V^{\pi'}_{R}(s)  > \epsilon \Rightarrow Pr[\text{$\pi'$ passes test $T$}] \leq \delta_{\rm fpr} \nonumber \\
    &V^{*}_{R}(s) - V^{\pi'}_{R}(s)  \leq \epsilon \Rightarrow Pr[\text{$\pi'$ fails test $T$}] \leq \delta_{\rm fnr} \nonumber
\end{align}
where $\mathcal{T}$ is the choice set of possible test queries, $\Pi$ denotes the set of robot policies for which we design the test, $\delta_{\rm fp}, \delta_{\rm fn} \in [0,1]$ denote the allowable false positive rate and false negative rate, and $|T|$ denotes the cardinality, or complexity of the test, $T$. If $\epsilon = \delta_{\rm fpr}= 0$, then we seek the test that enables exact value alignment verification. 

\subsection{Query Types and Rational Agents}
The difficulty of solving Equation~\ref{eq:evav_problem} can change significantly as a function of $\epsilon$, $\delta_{\rm fpr}$, $\delta_{\rm fnr}$, the set of policies for which we design the test $\Pi$, and the type of queries available in the choice set $\mathcal{T}$.
For example, exact alignment is impossible in settings where one can only query for actions (see Appendix~\ref{app:bb_impossible}). Even when possible, achieving high confidence may require multiple action queries at every state.

One of the main goals of this paper is to understand under what settings we can achieve efficient, provable value alignment verification. Towards this end, we assume that the robot behaves rationally with respect to some reward function $R'$.
A \textit{rational agent} is one that picks actions to maximize its utility~\cite{russell2016artificial}. Formally $\pi'$ is a rational agent if:
\begin{equation}\label{def:rational_agent}
\forall a \in \mathcal{A}, \pi'(a|s) > 0 \implies a \in \arg \max_a Q^{*}_{R'}(s,a),
\end{equation}
where $\arg \max_a Q^{*}_{R'}(s,a)$ returns the set of all optimal actions at state $s$ under $R'$.

Note that rationality in itself does not restrict the set of policies $\Pi$ for which we can test, since all policies are rational under the trivial all zero reward function~\cite{ng2000algorithms}. Rationality also does not limit the choice set $\mathcal{T}$ since a rational agent can answer any question related to its policy or values.
The rationality assumption is helpful because it directly connects the behavior of the agent to a reward function: given behavior we can infer rewards and given rewards we can infer behavior. It also allows us to extrapolate robot behavior to new situations, enabling efficient value alignment verification.

\subsection{Exact Value Alignment}\label{sec:robot_robot_vav}
We start with the idealized query setting of \textit{explicit human, explicit robot}.
In this section we discuss exact value alignment ($\epsilon = 0 ,\delta_{\rm fpr}=0$) of a rational robot and review related work by \citep{ng2000algorithms} on sets of rewards consistent with an optimal policy. Then in the next section we will examine how to construct verification tests for exact alignment.
We assume that both the human and robot know the states reward features $\phi(s)$, and that the robot acts rationally with respect to a reward function linear in these features. 

Consider two rational agents with reward functions $R$ and $R'$. Because there are infinite reward functions that lead to the same optimal policy~\cite{ng2000algorithms}, determining that $\exists s \in S, R(s) \neq R'(s)$ does not necessarily imply misalignment.  
For ease of notation, we define 
\begin{equation}
    OPT(R) = \{\pi \mid \pi(a|s) > 0 \Rightarrow a \in \arg\max_a Q^*_{R}(s,a) \}, \nonumber
\end{equation} 
as the set of all optimal (potentially stochastic) policies in MDP $(E,R)$.
Combining Definition~\eqref{def:eps_va} and Equation~\eqref{def:rational_agent} immediately gives us that a rational robot is aligned with a human if all optimal policies under the robot's reward function are also optimal policies under the human's reward function. We formally state this as the following Corollary.
\begin{corollary}\label{lem:va_explicit}
We have \textbf{exact value alignment} in environment $E$ between a rational robot with reward function $R'$ and a human with reward function $R$ if
$OPT(R') \subseteq OPT(R)$.
\end{corollary}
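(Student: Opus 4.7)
The plan is to unwind the two definitions ($\epsilon$-value alignment with $\epsilon=0$, and rationality) and observe that the hypothesis $OPT(R') \subseteq OPT(R)$ directly bridges them. The whole argument is really a chain of set containments applied to the robot's policy $\pi'$.

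First I would fix the robot's policy $\pi'$ and note that rationality with respect to $R'$, as stated in Equation~\eqref{def:rational_agent}, is exactly the statement that $\pi'(a \mid s) > 0 \Rightarrow a \in \arg\max_{a} Q^*_{R'}(s,a)$, which is the membership condition defining $OPT(R')$. Hence $\pi' \in OPT(R')$. Next, I would invoke the hypothesis $OPT(R') \subseteq OPT(R)$ to conclude $\pi' \in OPT(R)$, which means that $\pi'$ only puts positive probability on actions that are optimal under $R$.

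Then I would translate membership in $OPT(R)$ back into a statement about value functions: any policy that selects only actions in $\arg\max_a Q^*_R(s,a)$ at every reachable state attains $V^{\pi'}_R(s) = V^*_R(s)$ for every $s \in \mathcal{S}$ (this is the standard Bellman optimality argument; stochasticity over optimal actions preserves the optimal value). This immediately gives $V^*_R(s) - V^{\pi'}_R(s) = 0 \leq \epsilon$ for all $s$ with $\epsilon=0$, which is the condition of Definition~\ref{def:eps_va} for exact value alignment.

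There is not really a hard step here; the corollary is essentially a one-line consequence of chaining the definitions. The only place worth being careful is the last translation from ``$\pi' \in OPT(R)$'' (a condition on action support) to ``$V^{\pi'}_R = V^*_R$'' (a condition on returns), because $\pi'$ may be stochastic. I would justify this briefly by noting that if every action in the support of $\pi'(\cdot \mid s)$ achieves $Q^*_R(s,a) = V^*_R(s)$, then $V^{\pi'}_R(s) = \sum_a \pi'(a\mid s) Q^{\pi'}_R(s,a)$ together with a standard induction on the Bellman backup yields $V^{\pi'}_R(s) = V^*_R(s)$ at every state.
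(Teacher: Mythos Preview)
Your proposal is correct and follows exactly the same route as the paper's proof: rationality gives $\pi' \in OPT(R')$, the hypothesis gives $\pi' \in OPT(R)$, and membership in $OPT(R)$ implies optimality under $R$ and hence exact alignment. The paper's version is terser (it simply asserts the last step ``by construction''), whereas you spell out the Bellman justification for why a policy supported on optimal actions attains $V^*_R$; this extra care is fine but not required.
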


We now review foundational work on IRL by Ng and Russell~\cite{ng2000algorithms} which inspires our proposed approach for efficient value alignment verification.
\begin{definition}
Given an environment $E$, the \textbf{consistent reward set} (CRS) of a policy $\pi$ in environment $E$ is defined as the set of reward functions under which $\pi$ is optimal:
\begin{equation}\label{eq:CRS}
\text{CRS}(\pi) =  
\{R \mid \pi \in OPT(R) \}. \end{equation}
\end{definition}
When $R(s) = \wb^T \phi(s)$, the CRS is the following polytope:
\begin{corollary} \label{corr:cont_crs}~\cite{ng2000algorithms,brown2019machine}
Given an environment $E$, the $CRS(\pi)$ is given by the following intersection of half-spaces:
\begin{equation}
\begin{aligned}
\{\mathbf{w} \in \mathbb{R}^k \mid \mathbf{w}^T (\Phi_{\pi}^{(s,a)} - \Phi_{\pi}^{(s,b)}) \geq 0, \\
\forall a \in \arg\max_{a' \in \mathcal{A}} Q^\pi_R(s,a'), b \in \mathcal{A}, s \in \mathcal{S} \}.
\end{aligned}
\end{equation}
\end{corollary}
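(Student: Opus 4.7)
The plan is to derive the half-space characterization directly from Bellman optimality, using the linear-in-features structure of $R$ to translate each optimality inequality into a linear constraint on $\mathbf{w}$.

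First I would unpack the definition of the CRS. By Equation~\eqref{eq:CRS}, $\mathbf{w} \in \text{CRS}(\pi)$ iff $\pi \in OPT(R)$ for $R(s) = \mathbf{w}^T\phi(s)$. The standard Bellman optimality characterization (invoked implicitly by Ng and Russell) says that $\pi$ is optimal under $R$ if and only if, at every state $s$, every action $a$ that $\pi$ might choose is at least as good as every alternative $b$ when evaluated under $\pi$'s own Q-function, i.e.
\begin{equation}
Q^{\pi}_R(s,a) \;\geq\; Q^{\pi}_R(s,b), \quad \forall s \in \mathcal{S},\; a \in \mathrm{support}(\pi(\cdot|s)),\; b \in \mathcal{A}. \nonumber
\end{equation}
The ($\Rightarrow$) direction is immediate from $Q^\pi_R = Q^*_R$ when $\pi$ is optimal; the ($\Leftarrow$) direction is a one-step policy-improvement argument showing that no deviation from $\pi$ can strictly increase return.

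Second, I would plug in the linear reward assumption. Since $Q^\pi_R(s,a) = \mathbf{w}^T \Phi_\pi^{(s,a)}$ (stated in the notation section), each of the above inequalities rewrites as the single half-space constraint
\begin{equation}
\mathbf{w}^T \bigl(\Phi_\pi^{(s,a)} - \Phi_\pi^{(s,b)}\bigr) \;\geq\; 0. \nonumber
\end{equation}
Taking the conjunction over all $(s,a,b)$ then expresses $\text{CRS}(\pi)$ as an intersection of half-spaces in $\mathbb{R}^k$, which is the desired polytope.

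Finally, I would reconcile the index set $a \in \mathrm{support}(\pi(\cdot|s))$ with the form $a \in \arg\max_{a'} Q^\pi_R(s,a')$ appearing in the statement. This is where the main subtlety lies: the index set as written appears to depend on $R$ (the variable being quantified over), which looks circular. The resolution is that whenever $\mathbf{w} \in \text{CRS}(\pi)$, $\pi$ is optimal under $R$, so $Q^\pi_R = Q^*_R$ and $\arg\max_{a'} Q^\pi_R(s,a')$ coincides with the support of $\pi(\cdot|s)$ (by the rationality condition in Equation~\eqref{def:rational_agent}); conversely, outside the CRS the constraints only need to be violated for some $(s,a,b)$ with $a$ in the support of $\pi$, and those constraints are already included by the same identification. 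So the two index-set descriptions pick out the same family of half-spaces, and the equality of the two sets holds. I expect this identification — justifying why the $R$-dependent $\arg\max$ can be used as the index set without introducing circularity — to be the only step that requires real care; the rest is essentially substitution.
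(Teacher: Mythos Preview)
Your main argument (the first three steps) is correct and mirrors the paper's own proof in the appendix: optimality of $\pi$ is equivalent to $Q^\pi_R(s,a) \geq Q^\pi_R(s,b)$ for all $s$, all $a$ in the support of $\pi(\cdot\mid s)$, and all $b$; linearity of $R$ in the features then turns each such inequality into a half-space constraint on $\mathbf{w}$. In fact the paper's appendix states and proves the corollary with the index set $a \in \support(\pi(\cdot\mid s))$ rather than $a \in \arg\max_{a'} Q^\pi_R(s,a')$, so your first three steps already match the paper's argument essentially verbatim, and you are more careful than the paper in spelling out both directions of the Bellman-optimality equivalence.

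Your fourth step, however, has a gap. The claim that $\arg\max_{a'} Q^\pi_R(s,a')$ coincides with $\support(\pi(\cdot\mid s))$ whenever $\mathbf{w} \in \mathrm{CRS}(\pi)$ is not correct: the rationality condition only gives $\support(\pi(\cdot\mid s)) \subseteq \arg\max_{a'} Q^*_R(s,a')$, not equality, since an optimal policy may decline to take some optimal actions. More seriously, the $\arg\max$ formulation read literally is vacuous: since $Q^\pi_R(s,a') = \mathbf{w}^T \Phi_\pi^{(s,a')}$, any $a \in \arg\max_{a'} \mathbf{w}^T \Phi_\pi^{(s,a')}$ automatically satisfies $\mathbf{w}^T(\Phi_\pi^{(s,a)} - \Phi_\pi^{(s,b)}) \geq 0$ for every $b$, so every $\mathbf{w} \in \mathbb{R}^k$ would lie in the displayed set. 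The main-text statement should therefore be read with $\support(\pi(\cdot\mid s))$ in place of the $\mathbf{w}$-dependent $\arg\max$, which is exactly how the paper proves it in the appendix. You were right to flag the circularity; the resolution is not that the two index sets agree, but that the intended index set is the $\mathbf{w}$-independent one.
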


\begin{figure}
     \centering
     \begin{subfigure}[b]{0.34\linewidth}
         \centering
         \includegraphics[width=\textwidth]{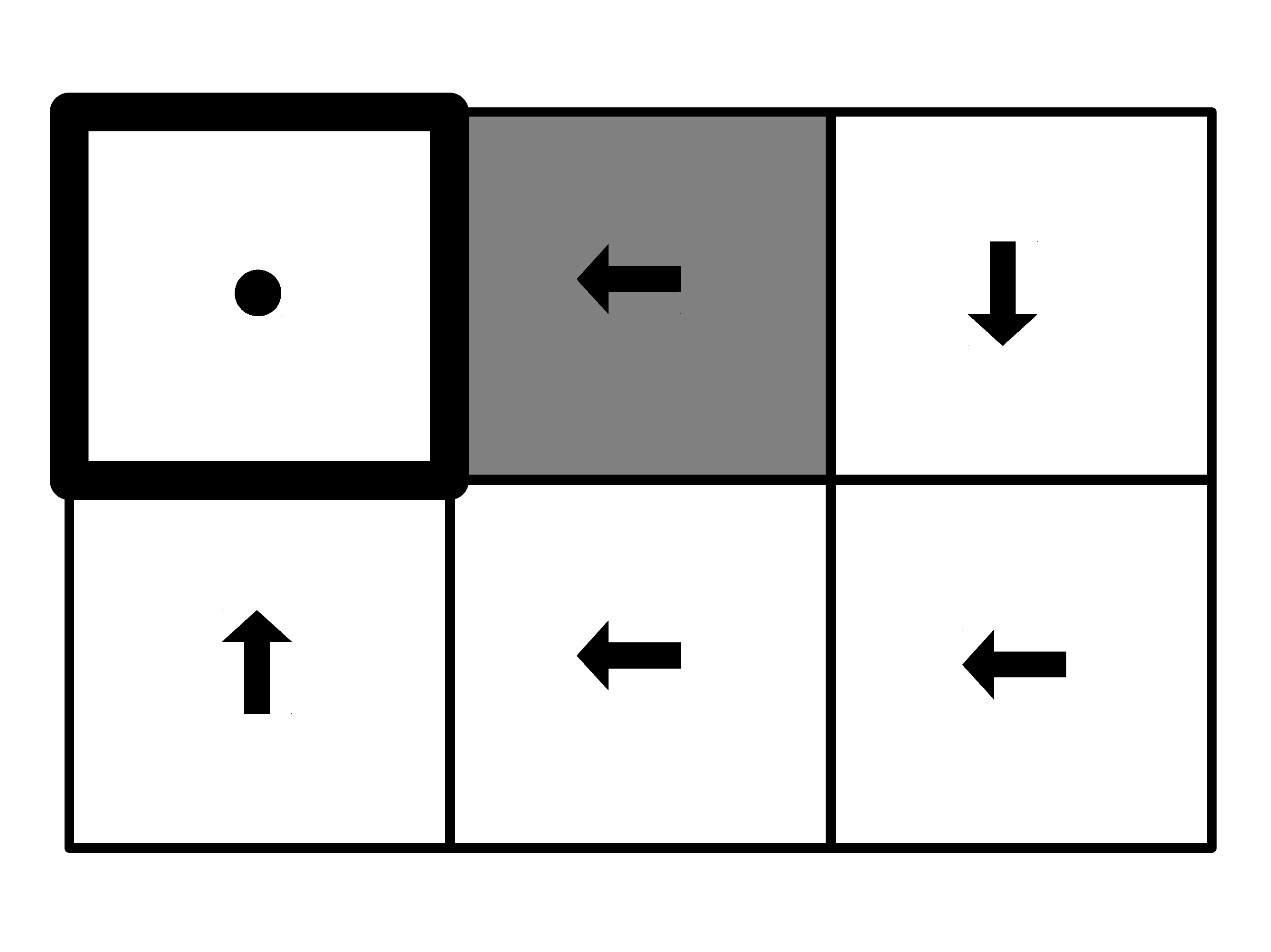}
         \caption{Policy $\pi$}
         \label{fig:crs_policy}
     \end{subfigure}
     \begin{subfigure}[b]{0.48\linewidth}
         \centering
         \includegraphics[width=\textwidth]{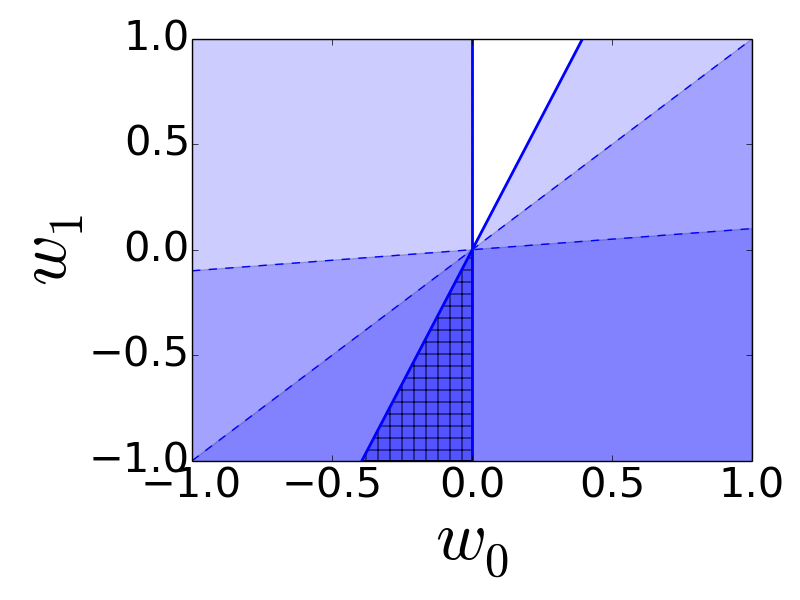}
         \caption{$CRS(\pi)$.}
         \label{fig:crs_halfspaces}
     \end{subfigure}
        \caption{An example of the consistent reward set (CRS) for a policy $\pi$ in a simple gridworld and a linear reward function with two binary reward features (white and gray) with reward weights $w_0$ and $w_1$, respectively.}
        \label{fig:crs_example}
\end{figure}

As an example consider the grid world MDP shown in Figure~\ref{fig:crs_example}. The CRS is an intersection of half-spaces which define all reward functions under which $\pi$ is optimal. Note that the all zero reward function and the reward function where white cells have zero reward are included; however, not all optimal policies under these reward functions lead to the policy shown in Figure~\ref{fig:crs_policy}.

Thus, we cannot directly use Corollary~\ref{corr:cont_crs} to verify alignment with a human's optimal policy---Corollary~\ref{corr:cont_crs} only provides a necessary, but not sufficient, condition for testing whether a reward function $R'$ is value aligned with a policy $\pi$. Consider the example of the trivial all zero reward function: it is always in the CRS of any policy; however, an agent optimizing the zero reward can result in any arbitrary policy. Even ignoring the all zero reward, rewards can be on the boundaries of the CRS polytope that are consistent with a policy, but not value aligned since they lead to more than one optimal policy, one or more of which may not be optimal under the human's reward function. In the next section we show that if we remove all such edge cases, we can construct an \textit{aligned reward polytope} (ARP) similar to the CRS, which enables provable value alignment verification. Furthermore, we show that the aligned reward polytope can be used for alignment verification even when the human cannot directly query for the robot's reward function.

\subsection{Sufficient Conditions for Provable Verification of Exact Value Alignment}\label{sec:exact_alignment_algos}

We seek an efficient value alignment verification test which enables a human to query the robot to determine exact value alignment as in Corollary~\ref{lem:va_explicit}. 
The following theorem demonstrates that provable verification of exact value alignment is possible under a variety of query types.

\begin{theorem}\label{thm:rational_vav_equivalence}
Under the assumption of a rational robot that shares linear reward features with the human, efficient exact  value alignment verification is possible in the following query settings: (1) Query access to reward function weights $\mathbf{w}'$, (2) Query access to samples of the reward function $R'(s)$, (3) Query access to $V^*_{R'}(s)$ and $Q^*_{R'}(s,a)$, and (4) Query access to preferences over trajectories.
\end{theorem}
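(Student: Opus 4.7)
The plan is to construct an \emph{aligned reward polytope} (ARP) describing the set of reward weight vectors $\wb'$ for which a rational robot is guaranteed to be exactly value aligned with the human, and then show that membership in this polytope can be certified with a finite test under each of the four query types. Concretely, I would define the ARP by tightening the CRS of Corollary~\ref{corr:cont_crs}: replace each inequality $\wb'^T(\Phi_{\pi^*_R}^{(s,a)} - \Phi_{\pi^*_R}^{(s,b)}) \geq 0$ with a \emph{strict} inequality whenever $a \in \mathcal{A}_R(s)$ and $b \notin \mathcal{A}_R(s)$, and retain equalities between pairs of actions both in $\mathcal{A}_R(s)$. The key lemma would assert that a rational robot with weights $\wb'$ satisfies $OPT(R') \subseteq OPT(R)$ (exact alignment, by Corollary~\ref{lem:va_explicit}) if and only if $\wb' \in \text{ARP}$. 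The ``if'' direction follows because the strict inequalities force every $\arg\max$ set under $R'$ to be contained in $\mathcal{A}_R(s)$; the ``only if'' direction is the contrapositive, where a robot-optimal action outside $\mathcal{A}_R(s)$ directly violates one of the strict constraints.

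Next, I would note that the ARP is an intersection of at most $|\mathcal{S}||\mathcal{A}|^2$ halfspace constraints, and thus value alignment verification reduces to certifying a constant number of sign conditions on linear functionals of $\wb'$. For query type (1), a single query returns $\wb'$ and the tester plugs it into each ARP constraint. For (2), one queries $R'(s) = \wb'^T\phi(s)$ at any $k$ states whose feature vectors are linearly independent, solves the resulting linear system to recover $\wb'$, and then reduces to case (1). For (3), I would use the identity
\begin{equation}
Q^*_{R'}(s,a) - Q^*_{R'}(s,b) = \wb'^T\bigl(\Phi_{\pi^*_{R'}}^{(s,a)} - \Phi_{\pi^*_{R'}}^{(s,b)}\bigr),
\end{equation}
so that each ARP halfspace can be checked directly from $Q^*$ and $V^*$ queries. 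For (4), I would design trajectory pairs $(\xi_1,\xi_2)$ whose feature-count differences $\Phi(\xi_1)-\Phi(\xi_2)$ align with the normals of the ARP facets; then each preference query returns exactly the sign of the corresponding linear form in $\wb'$.

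The main obstacle is query type (4), where the tester can only observe preference signs rather than numerical values and must still separate the interior of the ARP from misaligned reward weights lying on the closure of adjacent cones. The plan is to realize each facet normal $\Phi_{\pi^*_R}^{(s,a)} - \Phi_{\pi^*_R}^{(s,b)}$ as a difference of finite-horizon trajectory features (possible in deterministic environments; in stochastic ones, one can use expected trajectory features or surrogate constraints implied by the same halfspaces), and to handle ties on the boundary by including symmetric preference queries for pairs of actions both in $\mathcal{A}_R(s)$, which together pin down the equality constraints. Once this realizability is established, the number of preference queries required is bounded by the number of non-redundant ARP facets, giving an efficient test and establishing exact alignment verification for all four query types.
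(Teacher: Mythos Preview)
Your overall architecture---define an aligned reward polytope via strict half-space constraints $\wb'^T(\Phi_{\pi^*_R}^{(s,a)}-\Phi_{\pi^*_R}^{(s,b)})>0$ and then certify membership under each query type---matches the paper, and your treatment of cases (1) and (2) is essentially identical to the paper's. Two points deserve correction, however.

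\textbf{Case (3) contains a genuine gap.} The identity you write, $Q^*_{R'}(s,a)-Q^*_{R'}(s,b)=\wb'^T\bigl(\Phi_{\pi^*_{R'}}^{(s,a)}-\Phi_{\pi^*_{R'}}^{(s,b)}\bigr)$, has the feature expectations computed under the \emph{robot's} optimal policy $\pi^*_{R'}$, whereas the ARP constraints are stated in terms of $\Phi_{\pi^*_R}$, the \emph{human's} optimal policy. These are different vectors in general, so querying $Q^*_{R'}$ does not let you evaluate the ARP half-spaces as you claim. (There is a valid direct argument---checking $Q^*_{R'}(s,a)>Q^*_{R'}(s,b)$ for all $a\in\mathcal{A}_R(s),\,b\notin\mathcal{A}_R(s)$ immediately yields $\mathcal{A}_{R'}(s)\subseteq\mathcal{A}_R(s)$---but that is not the ARP check and would cost $O(|\mathcal{S}||\mathcal{A}|)$ queries.) The paper takes a different route: it uses the Bellman identity $R'(s)=Q^*_{R'}(s,a)-\gamma\,\mathbb{E}_{s'}[V^*_{R'}(s')]$ to recover $R'(s)$ pointwise, then reduces to case (2), costing at most $k(d_{\max}+1)$ queries.

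\textbf{Case (4) differs in approach and your plan is fragile.} You propose to realize each ARP facet normal exactly as a feature-count difference of two trajectories, which you yourself note may fail in stochastic environments; the suggested fallbacks (``expected trajectory features or surrogate constraints'') are not worked out. The paper does not attempt exact realization: it only requires a set $T$ of preference pairs such that $\{\wb:\wb^T(\Phi(\xi_B)-\Phi(\xi_A))>0,\ \forall (\xi_A,\xi_B)\in T\}\subseteq\bigcap\mathcal{H}^R_{s,a,b}$, and cites a concentration result to argue that a logarithmic number of random trajectory preferences suffices to shrink the intersection inside the ARP.

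Finally, your ``if and only if'' for ARP membership is stronger than what the paper proves (Lemma~\ref{lem:arp_va} is only sufficiency), and the ``only if'' direction is neither obvious nor needed: exact verification here demands $\delta_{\rm fpr}=0$, for which sufficiency is enough.
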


\subsubsection{Case 1: Reward Weight Queries}
We first consider the case where the human can directly query the robot for their reward function weights $\wb'$. While this problem setting is mainly of theoretical interest, we will show that Cases (2) and (3) also reduce to this setting. Querying directly for the robot's reward function is maximally efficient since by definition it only requires a single query.
Although one can solve for the optimal policy under a given $\wb'$ and evaluate it under the human's reward function $\wb$, this brute force approach is computationally demanding and must be repeated for each robot that needs to be tested.
By contrast, we will prove that there exists a single efficient verification test that does not require solving for the robot's optimal policy and can be used to verify the alignment of any robot.

As mentioned in the previous section, the CRS for the human's optimal policy does not provide a sufficient test for value alignment verification. Under the assumption of a rational robot, a sufficient condition for value alignment verification is to test whether a robot's reward function lies in the following set:
\begin{definition}\label{def:ARP}
Given an MDP $M$ composed of environment $E$ and reward function $R$, the \textbf{aligned reward set} (ARS) is defined as the following set of reward functions:
\begin{align}\label{eq:aec}
\text{ARS}(R) &=  
\{R' \mid \text{OPT}(R') \subseteq \text{OPT}(R) \}.
\end{align}
\end{definition}
Using Definition~\ref{def:ARP}, we prove the following lemma which will enable efficient verification of exact value alignment. As a reminder, we use the notation $Q^{\pi}_{R}(s,a) = \mathbf{w}^T \Phi_{\pi}^{(s,a)}$,
for 
$\Phi_{\pi}^{(s,a)} = \mathbb{E}_{\pi} [\sum_{t=0}^\infty\gamma^t \phi(s_t) \mid s_0 = s, a_0 = a]$, and
$\mathcal{A}_R(s) = \arg\max_{a'\in \mathcal{A}} Q_{R}^*(s,a')$.

\begin{lemma}\label{lem:arp_va}
Given an MDP $M = (E,R)$, assuming the human's reward function $R$, and the robot's reward function $R'$ can be represented as linear combinations of features $\phi(s) \in \mathbb{R}^k$, i.e., $R(s) = \wb^T \phi(s)$, $R'(s) = {\wb'}^T \phi(s)$, and given an optimal policy $\pi_R^*$ under $R$ then
\begin{equation}
\wb' \in \bigcap_{(s,a,b) \in \mathcal{O}} \mathcal{H}^{R}_{s,a,b} \implies R' \in ARS(R)
\end{equation}
where $\mathcal{H}^{R}_{s,a,b} = \big\{ \wb \mid \mathbf{w}^T (\Phi_{\pi^*_{R}}^{(s,a)} - \Phi_{\pi^*_{R}}^{(s,b)}) > 0 \big\}$ and $\mathcal{O} = \{ (s,a,b) | s \in \mathcal{S}, a \in \mathcal{A}_R(s), b \notin \mathcal{A}_R(s) \}$ .
\end{lemma}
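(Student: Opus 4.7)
The plan is to translate the half-space hypothesis into a statement about $Q$-values under $\pi^*_R$ and then use a Bellman-optimality argument to conclude $OPT(R') \subseteq OPT(R)$. I would first observe that $\wb'^T \Phi_{\pi^*_R}^{(s,c)} = R'(s) + \gamma \sum_{s'} P(s'|s,c)\, V^{\pi^*_R}_{R'}(s') = Q^{\pi^*_R}_{R'}(s,c)$ by the linear-reward decomposition reviewed in Section~3, so the hypothesis $\wb' \in \bigcap_{(s,a,b)\in\mathcal{O}} \mathcal{H}^R_{s,a,b}$ is equivalent to the strict inequalities $Q^{\pi^*_R}_{R'}(s,a) > Q^{\pi^*_R}_{R'}(s,b)$ for every $s$, every $a \in \mathcal{A}_R(s)$, and every $b \notin \mathcal{A}_R(s)$.

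Since $\pi^*_R$ is $R$-optimal, its support at every state lies in $\mathcal{A}_R(s)$, and $V^{\pi^*_R}_{R'}(s) = \sum_{a} \pi^*_R(a|s)\, Q^{\pi^*_R}_{R'}(s,a)$ is therefore a convex combination of the values $Q^{\pi^*_R}_{R'}(s,a)$ for $a \in \mathcal{A}_R(s)$. Combining this with the hypothesis immediately yields, for every $b \notin \mathcal{A}_R(s)$,
\[ V^{\pi^*_R}_{R'}(s) > Q^{\pi^*_R}_{R'}(s,b) = R'(s) + \gamma \sum_{s'} P(s'|s,b)\, V^{\pi^*_R}_{R'}(s'). \]
Equivalently, when the $R'$-Bellman optimality operator is applied to $V^{\pi^*_R}_{R'}$, every action $b \notin \mathcal{A}_R(s)$ is strictly dominated by some action in $\mathcal{A}_R(s)$, and the greedy argmax sits strictly inside $\mathcal{A}_R(s)$.

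To conclude, I would iterate this observation as an $R'$-policy-iteration starting from $\pi^*_R$: any policy $\pi_1$ greedy with respect to $V^{\pi^*_R}_{R'}$ satisfies $\pi_1(s) \in \mathcal{A}_R(s)$ (hence $\pi_1 \in OPT(R)$), and by the Policy Improvement Theorem $V^{\pi_1}_{R'} \ge V^{\pi^*_R}_{R'}$ pointwise. The plan is to show inductively that every subsequent greedy update also keeps the argmax inside $\mathcal{A}_R(s)$, so that the limit $V^*_{R'}$ is realized as a fixed point of the $R'$-Bellman operator restricted to actions in $\mathcal{A}_R(s)$ and, moreover, $Q^*_{R'}(s,b) < V^*_{R'}(s)$ for every $b \notin \mathcal{A}_R(s)$. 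By Bellman optimality this forces every $\pi' \in OPT(R')$ to be supported on $\mathcal{A}_R(s)$ at each state, i.e.\ $OPT(R') \subseteq OPT(R)$, which is exactly $R' \in ARS(R)$.

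The main obstacle is precisely this inductive propagation of the strict dominance. The base case is immediate from the hypothesis, but in the inductive step the improved value function $V^{\pi_n}_{R'}$ differs from $V^{\pi^*_R}_{R'}$ by a nonnegative gap whose contribution to $\sum_{s'} [P(s'|s,a) - P(s'|s,b)]\, V^{\pi_n}_{R'}(s')$ has indefinite sign, so the strict inequality from the hypothesis is not inherited by naive substitution. The cleanest route I foresee is to work with the $R$-optimal policy that maximizes $V^\pi_{R'}$ — whose value equals the optimum of the MDP with action sets restricted to $\mathcal{A}_R(s)$ — and to show that this restricted fixed point is already a fixed point of the full, unrestricted $R'$-Bellman operator, which then upgrades to argmax-containment via the strictness in the hypothesis.
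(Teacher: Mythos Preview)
Your opening step—rewriting the half-space hypothesis as $Q^{\pi^*_R}_{R'}(s,a) > Q^{\pi^*_R}_{R'}(s,b)$ for every $a \in \mathcal{A}_R(s)$ and $b \notin \mathcal{A}_R(s)$—is exactly how the paper begins. The divergence is in what comes next. You launch a policy-iteration sequence $\pi^*_R = \pi_0, \pi_1, \ldots$ and then, correctly, flag that the strict dominance need not propagate through the inductive step because the correction term $\gamma \sum_{s'}[P(s'|s,a)-P(s'|s,b)]\big(V^{\pi_n}_{R'}(s') - V^{\pi^*_R}_{R'}(s')\big)$ has indefinite sign. That leaves your argument incomplete.

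The paper sidesteps the whole iteration by claiming that $\pi^*_R$ is \emph{already} $R'$-optimal. Since every action outside $\mathcal{A}_R(s)$ has strictly smaller $Q^{\pi^*_R}_{R'}$-value than every action inside, and $\pi^*_R$ takes only actions in $\mathcal{A}_R(s)$, the paper invokes the policy improvement theorem to conclude $\pi^*_R \in OPT(R')$ in one line. Uniqueness of the optimal value function then gives $Q^{\pi^*_R}_{R'} = Q^*_{R'}$, so the hypothesis upgrades immediately to $Q^*_{R'}(s,a) > Q^*_{R'}(s,b)$, and any $\pi' \in OPT(R')$ must be supported on $\mathcal{A}_R(s)$. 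No induction, no restricted-MDP detour.

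So the idea you are missing is to argue that $\pi^*_R$ is a \emph{fixed point} of $R'$-policy-iteration rather than merely a starting point; that collapses your sequence to length zero. Your fallback proposal—pass to the MDP with action sets restricted to $\mathcal{A}_R(s)$ and show its $R'$-optimum is a fixed point of the unrestricted Bellman operator—is aiming at essentially the same conclusion from a more roundabout direction, and is arguably the more careful route when $|\mathcal{A}_R(s)|>1$ (where the paper's one-line appeal to policy improvement glosses over the possibility that distinct $R$-optimal actions carry different $Q^{\pi^*_R}_{R'}$-values). But the paper's intended shortcut is the direct $R'$-optimality of $\pi^*_R$.
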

\begin{proofsketch}
First we show $\pi_R^*$ is optimal under $R'$ using the policy improvement theorem. Then, using the uniqueness of the optimal value function, we show that all optimal actions under $R$ are also optimal actions under $R'$, and so all optimal policies under $R'$ are optimal under $R$.
    (see Appendix~\ref{proof:rr_vav_thm} for the full proof).
\end{proofsketch}


Lemma~\ref{lem:arp_va} provides a sufficient condition for verifying exact value alignment. We now have the necessary theory to construct an efficient value alignment verification test in the \textit{explicit human, explicit robot} setting. We aim to efficiently verify whether the robot's reward function, $R'$, is within the above intersection of half-spaces, which we call the Aligned Reward Polytope (ARP), as this gives a sufficient condition for $R'$ being value aligned with the human's reward function $R$. Our analysis in this section will be useful later when we consider approximate tests for value alignment verification when one or both of the agents have implicit values.\footnote{Our results may also be of interest in the analysis of \textit{explicit robot, explicit robot} teaming, e.g., ad hoc teamwork~\cite{stone2010ad} where value alignment verification could provide a framework for verifying whether two robots can work together.}


The verification test is constructed by precomputing
the following matrix representation of the ARP:
\begin{equation}\label{eq:arp_delta}
    \mathbf{\Delta} = 
    \begin{bmatrix} \Phi_{\pi^*_{R}}^{(s,a)} - \Phi_{\pi^*_{R}}^{(s,b)} \\
    \vdots
    \end{bmatrix},
\end{equation}
where each row corresponds to a tuple $(s,a,b) \in \mathcal{O}$. Thus, $a$ is an optimal action and $b$ is a suboptimal action under $R$ and each row of $\mathbf{\Delta}$
represents the normal vector for a strict half-space constraint based on feature count differences between an optimal and suboptimal action.
Note that, using this notation, exact value alignment can now be verified by checking whether
$\mathbf{\Delta} \mathbf{w'} > 0 $.
This test can be made more efficient by only including non-redundant half-space normal vectors in $\mathbf{\Delta}$. In Appendix~\ref{app:halfspace_redundancy_removal} we discuss a straightforward linear programming technique to efficiently obtain the minimal set of half-space constraints that define the intersection of half-spaces specified in Lemma~\ref{lem:arp_va}.

\subsubsection{Case 2: Reward Queries}
We now consider the case where the tester can query for samples of the robot's reward function $R'(s)$. Verifying alignment via queries to $R'(s)$ can be reduced to Case (1) by querying the robot for $R'(s)$ over a sufficient number of states and then solving for a system of linear equations to recover $\wb'$, since we assume both the human and robot have access to the reward features $\phi(s)$.\footnote{Note that our results also hold for rewards that are functions of $(s,a)$ and $(s,a,s')$.} Let $\mathbf{\Phi}$ be defined as the matrix where each row corresponds to the feature vector $\phi(s)^T$ for a distinct state $s \in \mathcal{S}$. Then, the number of required queries is equal to $rank(\mathbf{\Phi})$ since we only need samples corresponding to linearly independent rows of $\Phi$. Thus, if $\wb' \in \mathbb{R}^k$, in the worst case we only need $k$ samples from the robot's reward function, since we have $rank(\mathbf{\Phi}) \leq k$. If there is noise in the sampling procedure, then linear regression can be used to efficiently estimate the robot's weight vector $\wb'$. Given $\wb'$ we can verify value alignment by checking whether $\mathbf{\Delta} \mathbf{w'} > 0 $.

\subsubsection{Case 3: Value Function Queries}
Given query access to the robot's state and state-action value functions, $\wb'$ can be determined by noting that $R'(s) = {\wb'}^T \phi(s)$ and
\begin{equation}
    R'(s) = Q^*_{R'}(s,a) - \gamma \mathbb{E}_{s'} \left[V^*_{R'}(s')\right].
\end{equation} 
Computing the expectation requires enumerating successor states. If we define the maximum degree of the MDP transition function as 
\begin{equation}
    d_{\max} = \max_{s \in \mathcal{S},a \in \mathcal{A}} |\{s'\in \mathcal{S} \mid P(s,a,s')>0  \}|,
\end{equation} 
then at most the $d_{\max}$ possible next state value queries are needed to evaluate the expectation. Thus, at most $\mathrm{rank}(\Phi) (d_{\max} + 1)$ queries to the robot's value functions are needed to recover $\wb'$, and the tester can verify value alignment via Case (1). Since $\mathrm{rank}(\Phi) \leq k$ as before, at most $k (d_{\max} + 1)$ queries are required for $\wb' \in \mathbb{R}^k$.

\subsubsection{Case 4: Preference Queries}
Finally, we consider the \textit{implicit robot} setting where the tester can only query the robot for preferences over trajectories, $\xi$. Each preference over trajectories, $\xi_A \prec \xi_B$, induces the constraint ${\wb'}^T (\Phi(\xi_B) - \Phi(\xi_A))>0$, where $\Phi(\xi) = \sum_{i=1}^n \gamma^i \phi(s_i)$ is the cumulative discounted reward features along a trajectory. Thus, our choice set of tests, $\mathcal{T}$, consists of all trajectory preference queries, and we can guarantee value alignment if we have a test $T$ such that $\wb^T(\Phi(\xi_B) - \Phi(\xi_A)) > 0, \forall (\xi_A, \xi_B) \in T$ implies that $\wb \in \bigcap \mathcal{H}^{R}_{s,a,b}$.
We can then construct $\mathbf{\Delta}$ in a similar fashion as above, except each row corresponds to a half-space normal resulting from a preference over individual trajectories (see Appendix~\ref{proof:rr_vav_thm}). Only a logarithmic number of preferences over randomly generated trajectories are needed to accurately represent $\bigcap \mathcal{H}^{R}_{s,a,b}$ via intersection of half-spaces formed by the rows in $\mathbf{\Delta}$ \cite{brown2019drex}.

\subsection{Value Alignment Verification Heuristics}\label{sec:heuristics}
In the next section we relax our assumptions on the robot and consider the \textit{explicit human, implicit robot} setting, where the human seeks to verify value alignment but the robot has a black-box policy that only affords action queries. In this case, we resort to heuristics for value alignment as exact value alignment verification becomes impossible, and $\epsilon$-value alignment verification by directly attempting to solve Equation~\eqref{eq:evav_problem} when $\mathcal{T}$ consists of state-action queries is computationally intractable. As we discuss in detail in Appendix~\ref{app:brute_force_sa_vav}, a direct optimization approach would involve estimating $\Pi$ by computing the optimal policies for a large number of different reward functions, evaluating each policy under $\wb$ to determine which policies are not $\epsilon$-aligned with the tester's reward function $R$, and then solving a combinatorial optimization problem over all possible state queries.

Instead, we resort to efficient heuristics. We consider three heuristic alignment tests designed to work in the black-box value alignment verification setting, where the tester can only ask the robot policy action queries over states.
Each heuristic test consists of a method for selecting states at which to test the robot by querying for an action from the robot's policy and checking if that action is an optimal action under the human's reward function.
Note that querying only a subset of states for robot actions is fundamentally limited to value alignment verification tests with $\delta_{\rm fpr} > 0$ since we will never know for sure that the agent will not take a different action in that state if we query its policy again. Thus, receiving the ``right answer"---an optimal action under the tester's reward $R$---to an action query in a state is not a sufficient condition for exact value alignment.
We briefly discuss three action query heuristics with full details in Appendix~\ref{app:heuristics}. Figure~\ref{fig:island_nav} shows examples of the state queries generated by each heuristic in a simple gridworld.


\paragraph{Critical States Heuristic}
Our first heuristic is inspired by the notion of \textit{critical states}: states where $Q_{R}^*(s,\pi_{R}^*(s)) - \frac{1}{|\mathcal{A}|} \sum_{a \in \mathcal{A}} Q_{R}^*(s,a) > t$, and $t$ is  a user defined threshold~\cite{huang2018establishing}. We adapt this idea to form a critical state alignment heuristic test (CS) consisting of critical states under the human's reward function $R$.
Intuitively, these states are likely to be important; however, often many critical states will be redundant since different states are often important for similar reasons (see Figure~\ref{fig:island_nav}).


\paragraph{Machine Teaching Heuristic}
Our next heuristic is based on Set Cover Optimal Teaching (SCOT)~\cite{brown2019machine}, a machine teaching algorithm that approximates the minimal set of maximally informative state-action trajectories necessary to teach a specific reward function to an IRL agent.
\citet{brown2019machine} prove that the learner will recover a reward function in the intersection of halfspaces that define the CRS (Corollary~\ref{corr:cont_crs}).
We generate informative trajectories using SCOT, and turn them into alignment tests by querying the robot for their action at each state along the trajectories.
SCOT replaces the explicit checking of half-space constraints in Section~\ref{sec:exact_alignment_algos} with implicit half-space constraints that are inferred by querying for robot actions at states along trajectories, thus introducing approximation error and the possibility of false positives. Furthermore, generating a test using SCOT is more computationally intensive than generating a test via the CS heuristic; however, unlike CS, SCOT will seek to avoid redundant queries by reasoning about reward features over a collection of trajectories.

\paragraph{ARP Heuristic}
Our third heuristic takes inspiration from the definition of the ARP to define a black-box alignment heuristic (ARP-bb). ARP-bb first computes $\mathbf{\Delta}$ (see Equation~\eqref{eq:arp_delta}), removes redundant half-space constraints via linear programming, and then only queries for robot actions from the states corresponding to the non-redundant constraints (rows) in $\mathbf{\Delta}$.
Intuitively, states that are queried by ARP-bb are important in the sense that taking different actions reveals important information about the reward function. However, ARP-bb uses single-state action queries to approximate checking each half-space constraint. Thus, ARP-bb trades off smaller query and computational complexity with the potenital for larger approximation error.


\subsection{Implicit Value Alignment Verification}
\label{sec:human}
We now discuss value alignment verification in the \textit{implicit human, implicit robot} setting. Without an explicit representation of the human's values we cannot directly compute the aligned reward polytope (ARP) via enumeration over states and actions to create an intersection of half-spaces as described above.
Instead, we propose the pipeline outlined in Figure~\ref{fig:huma_testing_robot_or_human} where an AI system elicits and distills human preferences and then generates a test which can be used to approximately verify the alignment of any rational agent. 

As is common for active reward learning algorithms~\cite{biyik2019asking}, we assume that the preference elicitation algorithm outputs both a set of preferences over trajectories $\mathcal{P} = \{(\xi_i, \xi_j) : \xi_i \succ \xi_j \}$ and a set of reward weights $\wb$ sampled from the posterior distribution $\{\wb_i\} \sim P(\wb | \mathcal{P})$. Given $\mathcal{P}$ and $P(\wb |\mathcal{P})$, the ARP of the human's implicit reward function can be approximated as  \begin{equation}\label{eq:cont_arp}
ARP(R) \approx \bigcap_{(\xi_i, \xi_j) \in \mathcal{P}} \{ \wb \mid \mathbf{w}^T (\Phi ( \xi_i) - \Phi(\xi_j)) > 0 \big\},
\end{equation}
which generalizes the definition of the ARP to MDPs with continuous states and actions. To see this, note that the intersection of half-spaces in Lemma~\ref{lem:arp_va} enumerates over states and pairs of optimal and suboptimal actions under the human's reward $R$ to create the set of half-space normal vectors $\mathbf{\Delta}$, where each normal vector is a difference of expected feature counts. This enumeration can only be done in discrete MDPs. Equation~\eqref{eq:cont_arp} approximates the ARP for continuous MDPs via half-space normal vectors constructed with empirical feature count differences obtained from pairs of actual trajectories over continuous states and actions.

This test can be further generalized to $\epsilon$-value alignment (Definition~\ref{def:eps_va}) to test agents with bounded rationality or slightly misspecified reward functions.
One method of constructing an $\epsilon$-alignment test is to use the mean posterior reward $\mathbb{E}[\wb]$ to approximate the value difference of each pair of trajectories $\mathbb{E}[\mathbf{\wb}] (\Phi (\xi_i) - \Phi(\xi_j))$, and only include preference queries with estimated value differences of at least $\epsilon$. 
A robot with implicit values is verified as $\epsilon$-value aligned by test $T$ if its preferences over each pair of trajectories in $T$ match the preferences provided by the human 
(see Appendix~\ref{app:human-tester} for more details).

\section{Experiments}\label{sec:results}
We now study the empirical performance of value alignment verification tests, first in the \textit{explicit human} setting and then in the \textit{implicit human} setting.
\subsection{Value Alignment Verification with Explicit Human}
We first study the \textit{explicit human} setting and analyze the efficiency and accuracy of exact value alignment verification tests and heuristics. We consider querying for the weight vector of the robot (ARP-w), querying for trajectory preferences (ARP-pref), and the action-query heuristics: CS, SCOT, and ARP-bb, described in Section~\ref{sec:heuristics}.

\subsubsection{Case Study}\label{sec:case_study}
To illustrate the types of test queries found via value alignment verification, we consider two domains inspired by the AI safety gridworlds~\cite{leike2017ai}. The first domain, \textit{island navigation} is shown in Figure~\ref{fig:island_nav}. Figure~\ref{subfig:island_pi} shows the optimal policy under the tester's reward function,
    $R(s) = 50 \cdot \mathbf{1}_{\rm green}(s) - 1 \cdot \mathbf{1}_{\rm white}(s) - 50 \cdot \mathbf{1}_{\rm blue}(s)$,
where $\mathbf{1}_{\rm color}(s)$ is an indicator feature for the color of the grid cell. Shown in figures~\ref{subfig:island_pref1} and~\ref{subfig:island_pref2} are the two preference queries generated by ARP-pref which consist of pairwise trajectory queries (black is preferable to orange under $R$). Preference query 1 verifies that the robot would rather move the to terminal state (green) rather than visit more white cells. Preference query 2 verifies that the robot would rather visit white cells than blue cells.
Figures~\ref{subfig:island_arpbb},~\ref{subfig:island_scot}, and~\ref{subfig:island_cs} show action query tests designed using the ARP-bb, SCOT, and CS heuristics. The robot is asked which action its policy would take in each of the states marked with a question mark. To pass the test, the agent must respond with an optimal action under the human's policy in each of these states.
ARP-bb chooses two states based on the half-space constraints defined by the expected feature counts of $\pi^*_R$, resulting in an small but myopic test. SCOT queries over a maximally informative trajectory that starts near the water, but includes several redundant states. CS only reasons about Q-value differences and asks many redundant queries (see Appendix~\ref{app:case_study} for more results).

\begin{figure*}[h]
\centering
\begin{subfigure}[b]{0.16\linewidth}
    \centering
    \includegraphics[width=\textwidth]{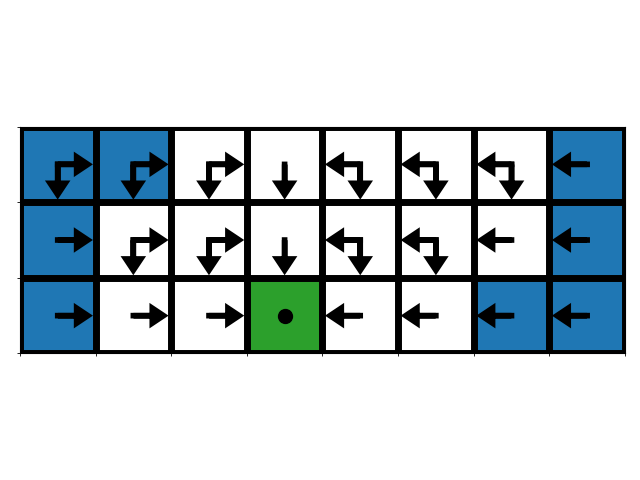}
    \vspace{-1cm}
    \caption{Optimal policy}
    \label{subfig:island_pi}
\end{subfigure}
\begin{subfigure}[b]{0.16\linewidth}
    \centering
    \includegraphics[width=\textwidth]{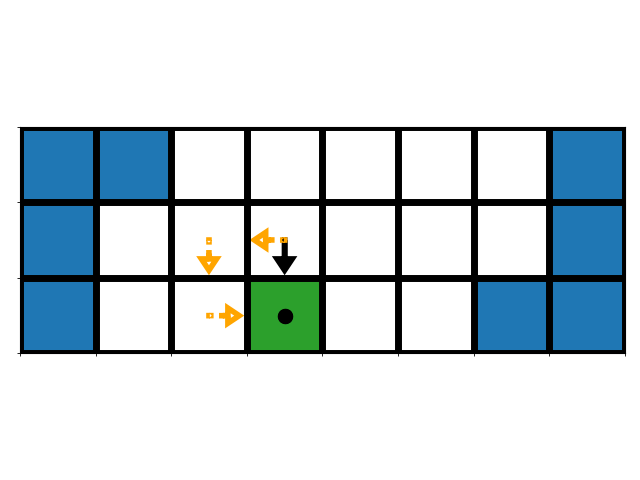}
    \vspace{-1cm}
    \caption{Preference query 1}
    \label{subfig:island_pref1}
\end{subfigure}
\begin{subfigure}[b]{0.16\linewidth}
    \centering
    \includegraphics[width=\textwidth]{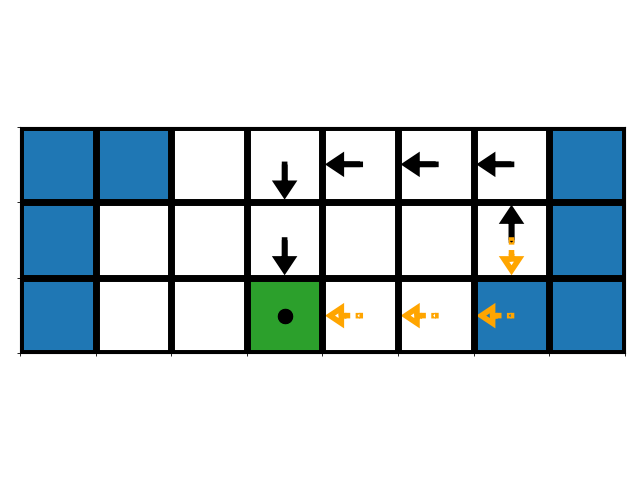}
    \vspace{-1cm}
    \caption{Preference query 2}
    \label{subfig:island_pref2}
\end{subfigure}
\begin{subfigure}[b]{0.16\linewidth}
    \centering
    \includegraphics[width=\textwidth]{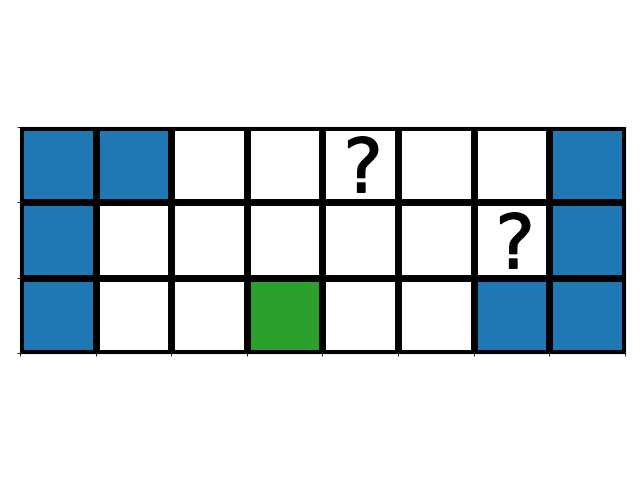}
    \vspace{-1cm}
    \caption{ARP-bb queries}
    \label{subfig:island_arpbb}
\end{subfigure}
\begin{subfigure}[b]{0.16\linewidth}
    \centering
    \includegraphics[width=\textwidth]{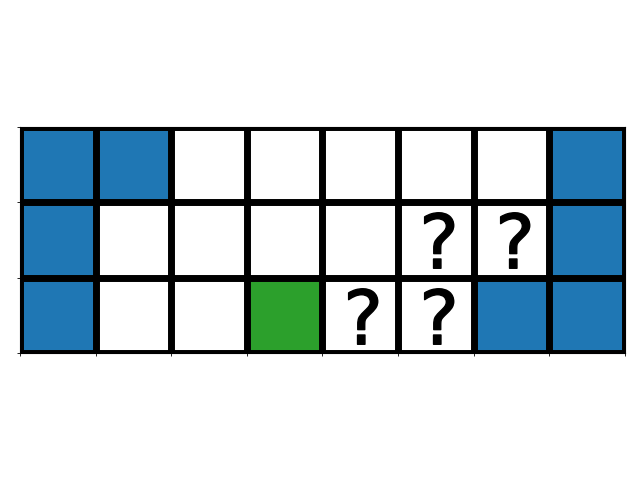}
    \vspace{-1cm}
    \caption{SCOT queries}
    \label{subfig:island_scot}
\end{subfigure}
\begin{subfigure}[b]{0.16\linewidth}
    \centering
    \includegraphics[width=\textwidth]{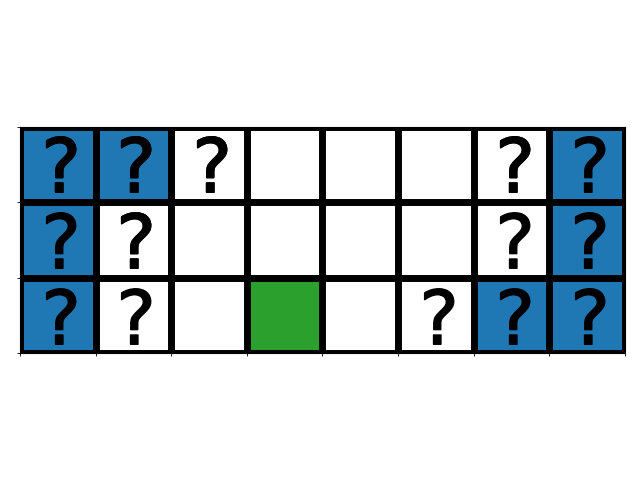}
    \vspace{-1cm}
    \caption{CS queries}
    \label{subfig:island_cs}
\end{subfigure}
\caption{Examples of exact and heuristic value alignment verification tests for an island navigation gridworld~\cite{leike2017ai}. Only two preference queries (b) and (c) are required to provably verify any robot policy (black should be preferred over orange). Figures (d)-(f) show heuristic tests that query for actions at individual states.}
\label{fig:island_nav}
\end{figure*}

\subsubsection{Sensitivity Analysis}

We also analyze the accuracy and efficiency of value alignment verification in the \textit{explicit human, explicit robot} and \textit{explicit human, implicit robot} settings for verifying exact value alignment. We analyze performance across a suite of random grid navigation domains with varying numbers of states and reward features. 
We summarize our results here and refer the reader to Appendix~\ref{app:vav_idealized} for more details. 
As expected, ARP-w and ARP-pref result in perfect accuracy.
SCOT has uses fewer samples than the CS heuristic while achieving nearly perfect accuracy. ARP-bb results in higher accuracy tests, but generates more false positives than SCOT. CS has significantly higher sample cost than the other methods and requires careful tuning of the threshold $t$ to obtain good performance. 
Our results indicate that in the \textit{implicit robot} setting, ARP-pref and  ARP-bb provide highly efficient verification tests. Out of the action query heuristics, SCOT achieved the highest accuracy, while having larger sample complexity than ARP-bb, but achieving lower sample complexity than CS. 




\subsection{Value Alignment Verification with Implicit Human}


We next analyze approximate value alignment verification in the continuous autonomous driving domain from \citet{sadigh2017active}, shown in Figure~\ref{fig:pref}, where we study the \textit{implicit human, implicit robot} setting and consider verifying $\epsilon$-value alignment.
As depicted in Figure~\ref{fig:huma_testing_robot_or_human} we analyze the use of active preference elicitation~\citep{biyik2019asking} to perform value alignment verification with \textit{implicit human values}. We first analyze implicit value alignment verification using preference queries to a synthetic human oracle unobserved ground-truth reward function $R$.

We collected varying numbers of oracle preferences, and computed a non-redundant $\epsilon$-alignment test as described in~\ref{sec:human} and Appendix~\ref{app:halfspace_redundancy_removal}. 
Tests were evaluated for accuracy relative to a set of test reward weights. See Appendix~\ref{app:exp_details} for experimental parameters and details of the testing reward generation protocol.
Figure~\ref{fig:acc} displays the results of the synthetic human experiments. The best tests achieved 100\% accuracy. Although collecting additional synthetic human queries consistently improved verification accuracy, above 50 human queries, accuracy gains were minimal, demonstrating the potential for human-in-the-loop preference elicitation. Furthermore, the generated verification tests were often succinct: one of the tests with perfect accuracy required only six questions out of the original 100 elicited preferences.
Additional experiments and results are detailed in Appendix~\ref{app:exp_details}, including false positive and false negative rate plots, and different methods of estimating the value gap of questions. We also ran an initial pilot study using real human preference labels which resulted in a verification test that achieves 72\% accuracy.


\begin{figure}
\centering
\begin{subfigure}[b]{0.35\linewidth}
\centering
        \includegraphics[width=\textwidth]{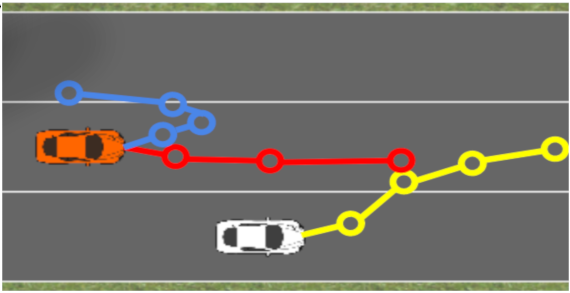}
        \vspace{.1cm}
        \caption{Driving domain 
        }
        \label{fig:pref}
\end{subfigure}
\hfill
\begin{subfigure}[b]{0.64\linewidth}
    \centering
    \includegraphics[width=\textwidth]{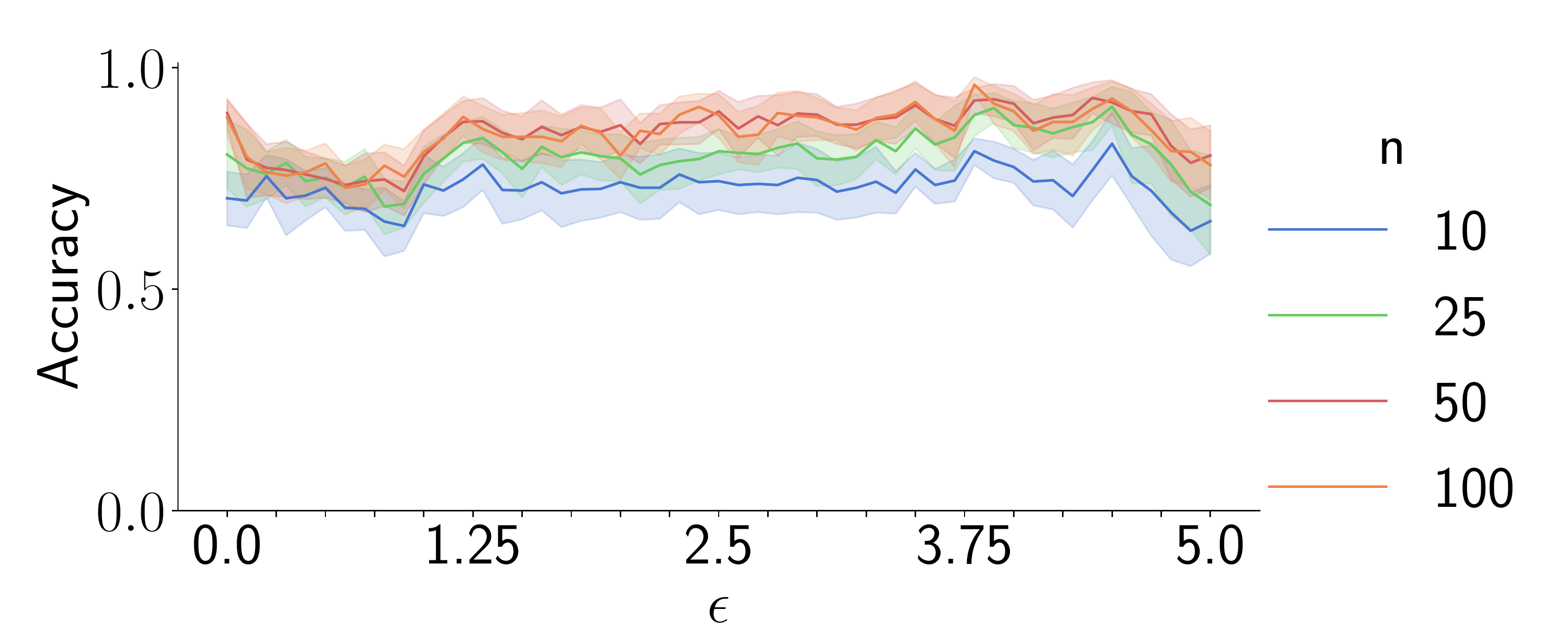}
    \caption{Alignment test accuracy vs $\epsilon$}
    \label{fig:acc}
\end{subfigure}

\caption{\textbf{Implicit human, implicit robot: } $\epsilon$-value alignment verification in a continuous autonomous driving domain. (a) A preference query. The human is asked if they prefer the blue or the red trajectory w.r.t. the trajectory of the white car. (b) $80\%$ confidence intervals on verification accuracy for different values of $\epsilon$, different human query budgets $n$, averaged over ten seeds. 
}
\end{figure}




\section{Generalization to Multiple MDPs} \label{sec:omnipotent_vav}
Up to this point, we have considered designing value alignment tests for a single MDP; however, it is also interesting to try and design value alignment verification tests that enable generalization, e.g., if a robot passes the test, then this verifies value alignment across many different MDPs.

As a step towards this goal, we present a result in the \textit{explicit human, explicit robot} setting where the human can construct testing environments. We consider the idealized setting of an omnipotent tester that is able to construct a set of arbitrary test MDPs and can query directly for the entire optimal policy of the robot in each MDP.
This tester aims to verify value alignment across an infinite family of environments that share the same reward function.
Our result builds on prior analysis on the related problem of omnipotent \textit{active reward learning}. \citet{amin2016towards} prove that an active learner can determine the reward function of another agent within $\epsilon$ precision via $O( \log|\mathcal{S}| + \log(1/\epsilon))$ policy queries.
By contrast, we prove in the following theorem that the sample complexity of $\epsilon$-value alignment verification is only $O(1)$ (see Appendix~\ref{proof:omnipotent_evav} for the proof).

\begin{theorem}\label{thm:omnipotent_vav}
Given a testing reward $R$ (not necessarily linear in known features), there exists a two-query test (complexity $O(1)$) that determines $\epsilon$-value alignment of a rational agent over all MDPs that share the same state space and reward function $R$, but may differ in actions, transitions, discount factors, and initial state distribution.
\end{theorem}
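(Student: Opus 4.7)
The plan is to explicitly construct two test MDPs $M_1$ and $M_2$ — both sharing the fixed state space $\mathcal{S}$ and reward function $R$, but with freely designed actions, transitions, discount factors, and initial state distributions — such that matching the tester's expected optimal policy in each of them certifies $\epsilon$-alignment over the whole family. The key observation is that since the robot is rational with respect to some unknown reward $R'$, observing the full policy $\pi^*_{R'}$ in a test MDP is equivalent to observing the set of $\arg\max$ inequalities induced by $R'$ over the action set, so a single policy query already yields many simultaneous constraints on $R'$. This is what enables $O(1)$ queries to do the work that $O(\log|\mathcal{S}|)$ queries do for full reward learning in~\citet{amin2016towards}.

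For $M_1$, I would use a ``probe'' environment in which at each $s \in \mathcal{S}$ the action set is indexed by $\mathcal{S}$: action $a_{s'}$ deterministically transitions to an absorbing copy of $s'$. For any discount $\gamma_1 \in (0,1)$, the optimal action at every state under reward $\tilde{R}$ is $a_{s^*}$ with $s^* \in \arg\max_{s'} \tilde{R}(s')$; agreement of $\pi^*_{R'}$ with the tester's policy thus certifies $\arg\max_{s'} R'(s') = \arg\max_{s'} R(s')$. I would then design $M_2$ as a richer gadget MDP where actions have stochastic outcomes that realize many different convex combinations $\sum_i \alpha_i R'(s_i)$ as $Q$-values; the resulting $\arg\max$ inequalities, combined with those from $M_1$, suffice to constrain $R'$ to an $\epsilon$-neighborhood of the set of positive affine transformations of $R$. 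This uses the standard fact that positive affine transformations of a state-based reward preserve optimal policies in every MDP, so that pinning $R'$ down to such a transformation yields alignment across the whole family.

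The main obstacle is the final step: converting this constraint on $R'$ into a uniform $\epsilon$-value alignment guarantee over \emph{every} MDP in the family. The difficulty is that the family includes MDPs with $\gamma$ arbitrarily close to $1$ and adversarially-chosen transitions, where a tiny pointwise discrepancy $|R(s) - R'(s)|$ could in principle be amplified into a large value gap through the $1/(1-\gamma)$ horizon. The hard part of the proof is therefore engineering the test MDPs so that the slack permitted in the observed policy match is tight enough to rule out exactly this amplification — most likely by arguing that once $R'$ is forced to be a positive affine transformation of $R$ plus a small perturbation, the value loss of $\pi^*_{R'}$ under $R$ in any MDP is bounded directly by a policy-improvement style argument that depends only on the perturbation and not on $\gamma$. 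I expect the cleanest way to close the argument is to show that agreement on $M_1$ and $M_2$ actually forces exact (not merely approximate) ordinal equivalence of $R'$ and $R$, from which $0$-value alignment — and hence $\epsilon$-alignment for every $\epsilon \geq 0$ — follows across the entire family.
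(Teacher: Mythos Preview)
Your high-level strategy matches the paper's: design two test MDPs whose optimal policies pin down $R'$ tightly enough to certify alignment everywhere. But there is a genuine gap in your plan, and the paper's construction differs from yours in the key place.

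The error is your proposed ``cleanest'' closing step. Exact ordinal equivalence of $R'$ and $R$ does \emph{not} imply $0$-value alignment across all MDPs in the family. With stochastic transitions, optimal policies depend on cardinal utility, not just the ordering of state rewards: if $R(s_1)=10,\,R(s_2)=5,\,R(s_3)=0$ and $R'(s_1)=10,\,R'(s_2)=1,\,R'(s_3)=0$, the orderings agree, yet the optimal choice between ``$s_2$ for sure'' and ``$s_1$ or $s_3$ with probability $\tfrac12$ each'' differs. So your proposed endpoint cannot close the argument; you really do need the cardinal (positive-affine) conclusion you mentioned earlier, and you need it quantitatively.

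Your $M_1$ extracts only $\arg\max_s R'(s)$, which is far too weak for cardinal control, and your $M_2$ is left unspecified at exactly the point where the work happens. The paper's construction is a von~Neumann--Morgenstern lottery gadget: in both test MDPs every state $s$ has a ``stay'' action $a_1$ and a ``gamble'' action $a_2$ that transitions to $s_{\max}=\arg\max_s R(s)$ with probability $\alpha_s$ and to $s_{\min}=\arg\min_s R(s)$ otherwise. The two MDPs differ only in the probabilities, with $\alpha^L_s = [R](s)-\epsilon(1-\gamma)/2$ and $\alpha^U_s=[R](s)+\epsilon(1-\gamma)/2$, where $[R]$ is the canonical (affine-normalized) reward. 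Preferring $a_1$ in the first MDP and $a_2$ in the second forces $\alpha^L_s \le [R'](s) \le \alpha^U_s$, i.e.\ $\|[R']-[R]\|_\infty \le \epsilon(1-\gamma)/2$. A separate lemma then converts this $L^\infty$ bound on canonical rewards into the $\epsilon$-value-alignment guarantee via a direct bound on $V^*_R - V^{\pi'}_R$; this lemma is what absorbs the $1/(1-\gamma)$ amplification you were worried about, and it is the piece your proposal is missing entirely.
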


We also note that if the human has access a priori to a finite set of MDPs over which they want to verify value alignment, then our results from earlier sections on exact, heuristic, and approximate value alignment could be extended to this setting. For example, we can define a generalized aligned reward polytope for a family of MDPs as the intersection of the aligned reward polytope for each individual MDP. This intersection of half-spaces provides a sufficient condition for testing value alignment across the entire family of MDPs. 
We leave this as a promising area for future work.

\section{Discussion}\label{sec:generalize}
We analyzed the problem of efficient value alignment verification: how to generate an efficient test that can be used to verify the value alignment of another agent with respect to the human's reward function. We developed a theoretical foundation for value alignment verification and proved sufficient conditions for verifying the alignment of a rational agent under explicit and implicit values for both the human and robot. Our empirical analysis demonstrates that action query heuristics can achieve low sample complexity and high accuracy while only requiring black-box access to an agent's policy. When the human has only implicit access to their values, we analyzed active preference elicitation algorithms as a potential means to automatically construct an approximate value alignment test that can efficiently test another agent with implicit values. 

The biggest assumption we make is that the reward function is a linear combination of features shared by both the human and robot. We would like to emphasize three points:
First, on representing rewards as linear combinations of features, note that the features can be arbitrarily complex, and can even be features learned via a deep neural network which are then linearly transformed by a final linear layer~\cite{brown2020safe}. 
Second, there is the issue of the human and the robot sharing the features. The reason this might actually be a reasonable assumption is that recent techniques enable robots to detect when they cannot explain human input with their existing features and ask for new input specific to the missing features~\cite{bobu2020quantifying,bobu2021feature}, thereby explicitly aligning the robot's reward representation with the human's reward representation. 
Third, even if the features are not perfectly aligned, our approach can still provide value by learning a linear combination of features that approximates the human's reward function to design an alignment test.

Our pilot study with the driving simulation hints that this might be the case, as it gives evidence that value alignment verification is possible when using real human preferences that are determined using pixel-based observations.
Furthermore, the only true requirement for generating value alignment tests that query for robot actions or preferences is for the tester to have a reward function that can be approximated by a linear combination of features.
Thus, these tests could be possibly be applied in cases where a human uses a linear combination of learned or human-designed features to construct an approximate alignment test for robots who have pixel-based policies and/or rewards. 

In conclusion, we believe that value alignment verification is an important problem of practical interest, as it seeks to enable humans to verify and build trust in AI systems. It may also be possible for a robot to use value alignment verification to verify the performance of a human, e.g., AI-generated assessment tests. Future work also includes relaxing rationality assumptions, analyzing value alignment verification tests in more complex domains, and performing a full user study to better analyze the use of human preferences for alignment verification. 

\section*{Acknowledgements}
We would like to thank the ICML anonymous reviewers and Stephen Giguere for their suggestions for improving the paper. 
This has taken place in the Personal Autonomous Robotics Lab (PeARL) at The University of Texas at Austin and the InterACT Lab at the University of Berkeley, California. PeARL research is supported in part by the NSF (IIS-1724157, IIS-1638107, IIS-1749204, IIS-1925082), ONR (N00014-18-2243), AFOSR (FA9550-20-1-0077), and ARO (78372-CS).  This research was also sponsored by the Army Research Office under Cooperative Agreement Number W911NF-19-2-0333. InterACT Lab research is supported in part by AFOSR, NSF NRI SCHOOL, and ONR YIP.
The views and conclusions contained in this document are those of the authors and should not be interpreted as representing the official policies, either expressed or implied, of the Army Research Office or the U.S. Government. The U.S. Government is authorized to reproduce and distribute reprints for Government purposes notwithstanding any copyright notation herein.

\bibliography{driving_test}
\bibliographystyle{icml2021}

\onecolumn
\appendix
\setcounter{theorem}{0}
\setcounter{proposition}{0}
\setcounter{corollary}{0}
\setcounter{lemma}{0}
\setcounter{definition}{1}

\section{Theory and Proofs}


\subsection{Value Alignment Verification of Black-Box Agents}\label{app:bb_impossible}
Definition~\ref{def:eps_va} makes no assumptions about the robot agent except that it acts according to some policy $\pi'$. Given this assumption, how can a tester with reward function $R$ efficiently solve the value alignment verification problem? A brute force attempt at verification would be to query $\pi'(s)$ at every state. However, querying the robot's policy at every state is expensive for discrete problems, and impossible for many real-world problems with continuous state-spaces. If we are able to query for action probabilities at every state, then for discrete MDPs we can verify value alignment by checking whether 
$\{ a \mid \pi'(a|s) > 0 \} \subseteq  \arg \max_a Q_{R^{*}}^*(s,a), \forall s \in \mathcal{S}$.
However, in the case of black-box value alignment verification where the tester only has sample access to the robot's policy without any additional assumptions about policy structure or rationality, we have the following impossibility result:

\begin{proposition}\label{thm:impossibility}
Even in a finite MDP (i.e., $|\mathcal{S}|, |\mathcal{A}| < \infty$), exact value alignment verification via sampling or observing actions from a black-box policy $\pi'$ is impossible in a finite number of queries.
\end{proposition}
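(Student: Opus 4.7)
The plan is to show that, for any candidate test with at most $n < \infty$ action queries, one can exhibit a misaligned policy that the test accepts with strictly positive probability, contradicting the requirement $\epsilon = \delta_{\rm fpr} = 0$ of exact value alignment verification.

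First I would fix a minimal counterexample MDP: any finite $M = (E, R)$ containing a reachable state $s_0$ at which some action $b \in \mathcal{A}$ is strictly suboptimal, i.e., $b \notin \mathcal{A}_R(s_0)$. Such an MDP is trivial to produce (e.g., a two-state, two-action navigation task with nontrivial reward). Fix also some optimal action $a \in \mathcal{A}_R(s_0)$. Then for each $\delta \in (0,1]$ I would define a black-box policy $\pi'_\delta$ that plays $b$ with probability $\delta$ and $a$ with probability $1-\delta$ at $s_0$, and plays a fixed optimal action at every other state. Because $s_0$ is reachable and $b$ is strictly suboptimal, a standard policy-evaluation/advantage argument gives $V^*_R(s_0) - V^{\pi'_\delta}_R(s_0) > 0$, so $\pi'_\delta$ is not value aligned in the sense of Definition~\ref{def:eps_va} for $\epsilon = 0$.

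The core of the argument is a coupling. I would execute any (possibly adaptive, possibly randomized) test $T$ with at most $n$ action queries against $\pi'_\delta$ and against the reference aligned policy $\pi^*$ that plays $a$ deterministically at $s_0$ and matches $\pi'_\delta$ elsewhere, sharing $T$'s internal randomness and reusing action samples whenever possible. The two transcripts, and hence $T$'s pass/fail decisions, agree unless some query at $s_0$ against $\pi'_\delta$ returns $b$. Since at most $n$ queries are issued in total, the probability of this bad event is at most $1 - (1-\delta)^n$, regardless of adaptivity, by a chain-rule bound over the $n$ query steps.

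To close the argument, note that exact verification requires $\pi^*$ to be accepted with probability at least $1 - \delta_{\rm fnr} > 0$ (otherwise the test is vacuous, rejecting everything). By the coupling, $\pi'_\delta$ is accepted with probability at least $(1-\delta)^n (1 - \delta_{\rm fnr})$, which is strictly positive for every $\delta > 0$ and can be made arbitrarily close to $1 - \delta_{\rm fnr}$ by shrinking $\delta$. Since $\pi'_\delta$ is misaligned, this violates $\delta_{\rm fpr} = 0$ and establishes the claim. The main subtlety I anticipate is the adaptive setting, where the number and identity of queries at $s_0$ is itself random and response-dependent; handling this cleanly is what the coupling and the uniform $(1-\delta)^n$ bound are designed for, and I expect this to be the only nontrivial point in an otherwise short argument.
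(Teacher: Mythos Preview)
Your argument is correct and rests on the same indistinguishability idea as the paper's proof: a stochastic misaligned policy can return only optimal actions over any finite batch of samples with strictly positive probability, so no finite action-query test can drive the false-positive rate to zero. The paper simply takes $\pi'$ to be the uniformly random policy and observes that the probability of seeing only optimal actions across $n$ queries is $\bigl(\sum_{a \in \mathcal{A}_R(s)} \pi'(a\mid s)\bigr)^n > 0$, without discussing adaptivity or the acceptance criterion.

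Your version differs in two ways that buy additional rigor. First, by using the near-optimal family $\pi'_\delta$ and a coupling with the deterministic optimal policy $\pi^*$, you explicitly cover adaptive and randomized testers: the shared-transcript argument makes the $(1-\delta)^n$ bound hold uniformly over any query strategy, whereas the paper's one-line calculation implicitly assumes a fixed query set. Second, you tie the conclusion to the requirement that an aligned policy must be accepted with positive probability ($1-\delta_{\rm fnr}>0$), which the paper leaves implicit. Both proofs are short; yours is the more careful of the two, at the cost of a slightly longer setup.
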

\begin{proof}
Consider the robot policy $\pi'$ that takes actions uniformly at random and assume that the tester's reward function is non-trivial, i.e., $\lnot \exists c \in \mathbb{R}, \forall s \in \mathcal{S}, R(s) = c$. Given a finite number of queries, there is a $(\sum_{a \in \arg\max_{a} Q^*_{R}(s,a)} \pi(a|s))^n$ probability that every time the policy is queried at a state $s$ it will select any optimal action $a \in \arg\max_{a} Q^*_{R}(s,a)$. This proves the existence of a non-value aligned policy $\pi'$ that has non-zero probability of being certified as value aligned. In the worst-case, almost surely verifying the value alignment of a policy requires an infinite number of policy queries.
\end{proof}



\subsection{Value Alignment for Rational Agents} \label{app:va_explicit}

We define $OPT(R) = \{\pi \mid \forall \pi \in \Pi, \pi(a|s) > 0 \Rightarrow a \in \arg\max_a Q^*_{R}(s,a) \}$, the set of all optimal (potentially stochastic) policies in MDP $M=(E,R)$ where $\arg \max_x f(x) := \{x \mid f(y) \leq f(x), \forall y \}$ is the set of all function maximizing arguments. We now prove the following:

\begin{corollary}\label{cor:subset_impl_exact}
We have \textbf{exact value alignment} in environment $E$ between a rational robot with reward function $R'$ and a human with reward function $R$ if
$OPT(R') \subseteq OPT(R)$.
\end{corollary}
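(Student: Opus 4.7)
The plan is to chain the three relevant definitions together: rationality of the robot, the hypothesized set inclusion, and the characterization of exact value alignment. The argument is essentially a type-check, but I want to spell out each step carefully so that the reader sees exactly where each assumption is used.

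First I would unpack the rationality assumption on the robot. By Equation~\eqref{def:rational_agent}, the robot's policy $\pi'$ satisfies $\pi'(a|s) > 0 \Rightarrow a \in \arg\max_a Q^*_{R'}(s,a)$ for every state $s$. This is exactly the condition appearing in the definition of $OPT(R')$, so I can immediately conclude $\pi' \in OPT(R')$.

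Next I would invoke the hypothesis $OPT(R') \subseteq OPT(R)$. Combined with the previous step, this gives $\pi' \in OPT(R)$, meaning that every action $\pi'$ puts positive probability on is an optimal action under $R$. A standard consequence (essentially, that mixing over optimal actions yields the optimal value) is that $V^{\pi'}_R(s) = V^*_R(s)$ for every $s \in \mathcal{S}$. I would justify this briefly by noting that the Bellman optimality equation $V^*_R(s) = \max_a Q^*_R(s,a)$ is achieved by any action in $\arg\max_a Q^*_R(s,a)$, so a policy supported on such actions inductively matches the optimal value at every state.

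Finally, substituting into Definition~\ref{def:eps_va} with $\epsilon = 0$ gives $V^*_R(s) - V^{\pi'}_R(s) = 0$ for all $s$, which is exact value alignment. I do not foresee any real obstacle here; the only thing to be a bit careful about is the intermediate claim that $\pi' \in OPT(R)$ implies $V^{\pi'}_R = V^*_R$ pointwise, which deserves one line of justification via the Bellman optimality equation rather than being asserted as obvious, since $OPT(R)$ is defined in terms of support conditions on actions rather than value equality.
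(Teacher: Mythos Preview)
Your proposal is correct and follows essentially the same route as the paper's proof: both argue that rationality gives $\pi' \in OPT(R')$, the hypothesis gives $\pi' \in OPT(R)$, and hence $\pi'$ is optimal under $R$, yielding exact alignment. The paper's version is terser---it simply asserts that membership in $OPT(R)$ means the policy is optimal under $R$ ``by construction''---whereas you take the extra care to justify the step $\pi' \in OPT(R) \Rightarrow V^{\pi'}_R = V^*_R$ via the Bellman optimality equation, which is a reasonable elaboration but not a different approach.
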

\begin{proof}

If $OPT(R') \subseteq OPT(R)$ then since $\pi' \in OPT(R')$ we have $\pi' \in OPT(R)$. By construction, if a policy is in $OPT(R)$ then it is optimal under $R$, and so exact alignment immediately follows.

\end{proof}

\subsection{Value Alignment Verification with Explicit Values}\label{proof:rr_vav_thm}

In this section we prove the main theorem of our paper, that efficient exact value alignment verification is possible in many settings. We start with a lemma, equivalent to the case where we have query access to the robot's reward function. We will reduce many of the cases of Theorem~\ref{thm:rational_vav_equivalence} to this case.

\begin{lemma}
Given an MDP $M = (E,R)$, assuming the human's reward function $R$, and the robot's reward function $R'$ can be represented as linear combinations of features $\phi(s) \in \mathbb{R}^k$, i.e., $R(s) = \wb^T \phi(s)$, $R'(s) = {\wb'}^T \phi(s)$, and given an optimal policy $\pi_R^*$ under $R$ then
\begin{equation}
\wb' \in \bigcap_{(s,a,b) \in \mathcal{O}} \mathcal{H}^{R}_{s,a,b} \implies R' \in ARS(R)
\end{equation}
where $\mathcal{H}^{R}_{s,a,b} = \big\{ \wb \mid \mathbf{w}^T (\Phi_{\pi^*_{R}}^{(s,a)} - \Phi_{\pi^*_{R}}^{(s,b)}) > 0 \big\}$ and $\mathcal{O} = \{ (s,a,b) | s \in \mathcal{S}, a \in \mathcal{A}_R(s), b \notin \mathcal{A}_R(s) \}$ .
\end{lemma}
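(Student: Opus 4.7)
The plan is to establish $OPT(R') \subseteq OPT(R)$ by first showing that $\pi^*_R$ is optimal under $R'$, and then transferring the argmax structure across reward functions via the uniqueness of the optimal $Q$-function, essentially following the two-step sketch outlined in the lemma.

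First I would rewrite the hypothesis in $Q$-value form. Since $Q^{\pi^*_R}_{R'}(s,a) = {\wb'}^T \Phi_{\pi^*_R}^{(s,a)}$, membership $\wb' \in \bigcap_{(s,a,b) \in \mathcal{O}} \mathcal{H}^{R}_{s,a,b}$ is equivalent to $Q^{\pi^*_R}_{R'}(s,a) > Q^{\pi^*_R}_{R'}(s,b)$ for every $s$, every $a \in \mathcal{A}_R(s)$, and every $b \notin \mathcal{A}_R(s)$. In particular, $\arg\max_a Q^{\pi^*_R}_{R'}(s,\cdot) \subseteq \mathcal{A}_R(s)$ at every state, and for a deterministic choice of $\pi^*_R$ we have the strict inequality $Q^{\pi^*_R}_{R'}(s, \pi^*_R(s)) > Q^{\pi^*_R}_{R'}(s, b)$ for every $b \notin \mathcal{A}_R(s)$, since $\pi^*_R(s) \in \mathcal{A}_R(s)$.

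Next I would invoke the converse of the policy improvement theorem to conclude $\pi^*_R \in OPT(R')$: greediness of $\pi^*_R$ with respect to $Q^{\pi^*_R}_{R'}$ suffices for optimality, and the strict separation from the hypothesis combined with $\pi^*_R(s) \in \mathcal{A}_R(s)$ is what delivers this greediness. By uniqueness of the optimal value function we then have $V^{\pi^*_R}_{R'} = V^*_{R'}$, and hence $Q^{\pi^*_R}_{R'} = Q^*_{R'}$ as functions of $(s,a)$. Applying the $Q$-value form of the hypothesis to $Q^*_{R'}$ yields $\mathcal{A}_{R'}(s) = \arg\max_a Q^*_{R'}(s,a) \subseteq \mathcal{A}_R(s)$ at every $s$. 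Any rational $\pi \in OPT(R')$ has support in $\mathcal{A}_{R'}(s) \subseteq \mathcal{A}_R(s)$, so $\pi \in OPT(R)$, establishing $R' \in ARS(R)$.

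The main obstacle is the greediness claim in the middle step: the hypothesis strictly separates $\mathcal{A}_R(s)$ from its complement but is silent about the relative ordering of $Q^{\pi^*_R}_{R'}$-values among ties inside $\mathcal{A}_R(s)$, and different deterministic representatives of $OPT(R)$ induce different feature counts $\Phi^{(s,a)}_{\pi^*_R}$ and therefore different candidate constraints. The cleanest way to handle this is to fix a deterministic $\pi^*_R$ and, if $\pi^*_R$ itself is not already greedy with respect to $Q^{\pi^*_R}_{R'}$, perform policy improvement restricted to $\mathcal{A}_R(s)$-actions: by the $Q$-value form of the hypothesis, any such improvement step keeps the iterate inside $OPT(R)$, produces a monotone non-decreasing sequence of $R'$-values, and converges to a policy simultaneously in $OPT(R) \cap OPT(R')$, after which the uniqueness-of-$V^*_{R'}$ argument closes the proof as above.
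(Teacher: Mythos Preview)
Your first two paragraphs mirror the paper's argument essentially step for step: rewrite the half-space hypothesis as $Q^{\pi^*_R}_{R'}(s,a) > Q^{\pi^*_R}_{R'}(s,b)$, invoke policy improvement to place $\pi^*_R$ in $OPT(R')$, lift to $Q^{\pi^*_R}_{R'} = Q^*_{R'}$ via uniqueness of the optimal value function, and conclude $\mathcal{A}_{R'}(s) \subseteq \mathcal{A}_R(s)$. The paper's own proof simply asserts the policy-improvement step without commenting on ties inside $\mathcal{A}_R(s)$, so you are right that this is the point that needs care.

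Your proposed patch, however, does not close the gap. Restricted policy iteration (greedy only over $\mathcal{A}_R(s)$) does keep every iterate in $OPT(R)$ and converges to a policy $\pi^\infty$ that is optimal \emph{in the restricted MDP} under $R'$; but the hypothesis constrains $Q^{\pi^*_R}_{R'}$, not $Q^{\pi^\infty}_{R'}$. Once you iterate away from $\pi^*_R$ the half-space inequalities no longer bound $Q^{\pi^\infty}_{R'}(s,b)$ for $b \notin \mathcal{A}_R(s)$, so nothing prevents such a $b$ from dominating every $a \in \mathcal{A}_R(s)$ at some state, and hence $\pi^\infty$ need not lie in $OPT(R')$ in the \emph{full} MDP. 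Concretely, suppose the suboptimal action $b$ at $s_0$ transitions to a state $s_3$ where $\mathcal{A}_R(s_3)$ contains two actions (so $\mathcal{O}$ generates no constraint comparing them) and $\pi^*_R$ picks the one that is very poor under $R'$; then $\Phi^{(s_0,b)}_{\pi^*_R}$ can grossly underestimate the true $R'$-value of $b$, the hypothesis is satisfied, yet after improvement at $s_3$ the value $Q^{\pi^\infty}_{R'}(s_0,b)$ jumps above every $Q^{\pi^\infty}_{R'}(s_0,a)$ with $a \in \mathcal{A}_R(s_0)$. Your iteration never selects $b$, so it terminates at a $\pi^\infty \in OPT(R)$ that is strictly suboptimal under $R'$, and the ``uniqueness-of-$V^*_{R'}$'' closure never fires.
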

\begin{proof}
We will prove that $\bigcap_{(s,a,b) \in \mathcal{O}} \mathcal{H}^{R}_{s,a,b} \subseteq ARS(R)$. Consider an arbitrary $\wb' \in  \bigcap_{(s,a,b) \in \mathcal{O}} \mathcal{H}^{R}_{s,a,b}$.  
By assumption we have that 

\begin{align}
    \forall s \in \mathcal{S}, \forall a \in \mathcal{A}_R, b \notin \mathcal{A}_R, \wb'^T \Phi_{\pi_R^*}^{(s, a)} &> \wb'^T \Phi_{\pi_R^*}^{(s, b)} \\
    Q_{R'}^{\pi_R^*}(s, a) &> Q_{R'}^{\pi_R^*}(s, b) \label{eq:my_value_your_policy}
\end{align}

Under $R'$ the Q-value of all actions that $\pi_R^*$ does not take are strictly worse than that of the actions that it does take, and so $\pi_R^*$ is optimal under $R'$ by the policy improvement theorem.

Since $\pi_R^*$ is optimal under $R'$, $Q_{R'}^{\pi_R^*}(s, a) = Q_{R'}^*(s, a)$. Thus Eq. \ref{eq:my_value_your_policy} becomes

\begin{align}
    \forall s \in \mathcal{S}, \forall a \in \mathcal{A}_R, b \notin \mathcal{A}_R,
    Q_{R'}^*(s, a) &> Q_{R'}^*(s, b) \label{eq:my_value_opt_policy}
\end{align}

Now consider an arbitrary optimal policy under $R'$, call it $\pi^*_{R'}$. Assume for contradiction that $\pi^*_{R'} \notin OPT(R)$. Therefore, there exists $s\in\mathcal{S}$, $a \in \mathcal{A}_R(s)$, and $b \notin \mathcal{A}_R(s)$ such that 
$Q_{R'}^*(s, b) \geq Q_{R'}^*(s, a)$. However, this contradicts \eqref{eq:my_value_opt_policy}. Thus, we have $\pi^*_{R'} \in OPT(R)$ and since $\pi^*_{R'}$ was assumed to be any optimal policy under $R'$, we have
$\wb' \in \bigcap_{(s,a,b) \in \mathcal{O}} \mathcal{H}^{R}_{s,a,b}$ implies $OPT(R') \subseteq OPT(R)$ and so $\bigcap_{(s,a,b) \in \mathcal{O}} \mathcal{H}^{R}_{s,a,b} \subseteq ARS(R)$ by Definition~\ref{def:ARP}.
\end{proof}

We now prove the full theorem.

\begin{theorem}
Under the assumption of a rational robot that shares linear reward features with the human, efficient exact  value alignment verification is possible in the following query settings: (1) Query access to reward function weights $\mathbf{w}'$, (2) Query access to samples of the reward function $R'(s)$, (3) Query access to $V^*_{R'}(s)$ and $Q^*_{R'}(s,a)$, and (4) Query access to preferences over trajectories.
\end{theorem}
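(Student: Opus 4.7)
The plan is to prove each of the four cases by reducing it to the ARP membership test established in Lemma~\ref{lem:arp_va}. Since that lemma shows $\wb' \in \bigcap_{(s,a,b)\in\mathcal{O}} \mathcal{H}^R_{s,a,b}$ is a sufficient condition for $R' \in ARS(R)$, which in turn implies exact value alignment by Corollary~\ref{lem:va_explicit}, it suffices in each setting either (a) to recover $\wb'$ and then check the compact matrix inequality $\mathbf{\Delta}\wb' > 0$, where $\mathbf{\Delta}$ is precomputed from $R$ alone via Equation~\eqref{eq:arp_delta}, or (b) to verify the same half-space constraints directly through behavioral queries. The key observation that makes this efficient is that $\mathbf{\Delta}$ depends only on the human, so it can be built once (with redundant constraints removed by the linear program referenced in Appendix~\ref{app:halfspace_redundancy_removal}) and then reused for any robot.

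For Case (1) the test is immediate: a single query returns $\wb'$ and the tester checks $\mathbf{\Delta}\wb' > 0$, giving $O(1)$ query complexity. For Case (2), I would exploit the linearity $R'(s) = \wb'^T \phi(s)$: querying $R'$ at states whose feature vectors span the row space of the feature matrix $\mathbf{\Phi}$ yields a linear system with unique solution $\wb'$, needing at most $\mathrm{rank}(\mathbf{\Phi}) \le k$ queries, after which we reduce to Case (1). For Case (3), I would apply the Bellman identity $R'(s) = Q^*_{R'}(s,a) - \gamma \mathbb{E}_{s'}[V^*_{R'}(s')]$ to convert each reward sample into one $Q^*$ query together with at most $d_{\max}$ successor $V^*$ queries; this produces samples of $R'(s)$ at cost $d_{\max}+1$ apiece, so Case (2) then gives a total bound of $k(d_{\max}+1)$ queries.

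Case (4) is the main obstacle and requires most of the work. Preferences over trajectories do not directly expose $\wb'$, and the natural half-spaces they induce are indexed by full trajectories rather than by state-action triples $(s,a,b)$. The plan is to observe that each preference $\xi_A \prec \xi_B$ enforces $\wb'^T(\Phi(\xi_B) - \Phi(\xi_A)) > 0$ with $\Phi(\xi) = \sum_i \gamma^i \phi(s_i)$, and then to construct a trajectory-indexed analogue of $\mathbf{\Delta}$ whose rows are trajectory feature-count differences. The crucial sub-step is showing that a well-chosen collection of trajectory pairs (for example, pairs that agree everywhere except at a single branching state where one takes $a \in \mathcal{A}_R(s)$ and the other takes $b \notin \mathcal{A}_R(s)$, followed by $\pi^*_R$) induces the same intersection of half-spaces as $\bigcap \mathcal{H}^R_{s,a,b}$, so that passing the preference test implies $\wb' \in ARS(R)$ via Lemma~\ref{lem:arp_va}. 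Efficiency then follows by invoking the existing sample-complexity result on random half-space approximation \cite{brown2019drex}, which guarantees that a logarithmic number of randomly generated trajectory preferences suffice to recover the intersection of half-spaces defining the ARP. The hardest part is justifying this equivalence (or tight approximation) between the trajectory-level and the state-action-level half-spaces so that the preference test retains the sufficiency guarantee of Lemma~\ref{lem:arp_va}.
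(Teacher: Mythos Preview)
Your proposal is correct and matches the paper's proof essentially step for step: Cases (1)--(3) are identical reductions to Lemma~\ref{lem:arp_va} via the $\mathbf{\Delta}\wb'>0$ test, with the same query counts, and Case (4) follows the same outline of trajectory-preference half-spaces combined with the logarithmic bound from \cite{brown2019drex}. The only minor elaboration is your branching-trajectory construction for Case (4), which the paper does not spell out (it simply notes one can synthesize arbitrary trajectories and that the resulting polytope shrinks toward a subset of the ARP); your construction would indeed recover the exact $\mathcal{H}^R_{s,a,b}$ constraints in deterministic environments, but in stochastic MDPs single trajectories do not match expected feature counts, which is why both you and the paper ultimately fall back on the random half-space approximation argument.
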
\begin{proof}
The proof of case (1) follows directly from Lemma~\ref{lem:arp_va}.

In case (2), the tester can query for samples of the reward function $R'(s)$. If the tester only has query access to $R'(s)$, then the weight vector $\wb'$ can be recovered by solving a linear system.

\begin{equation}
    \mathbf{R}'_{\mathrm{sample}} = \begin{bmatrix}
    \vdots \\
    R(s_i) \\
    \vdots
    \end{bmatrix} = \Phi_{\mathrm{sample}} \wb' = \begin{bmatrix}
    \vdots \\
    \Phi(s_i) \\
    \vdots
    \end{bmatrix} \wb'
\end{equation}

This system is guaranteed to have a unique solution if $\mathrm{rank}(\Phi_{\mathrm{sample}}) = k$ i.e. $\Phi_{\mathrm{sample}}$ is full column rank. If $\Phi$, the matrix of features at every state, is full column rank, then there is a subset of $k$ rows which is also full column rank. If $\Phi$ is not full column rank, then there is some feature column $\phi_i$ which is a linear combination of other feature columns, and so can be removed from the test without affecting the predicted alignment of any policy. Features can be safely removed in this manner until the remaining columns are linearly independent. Thus for any environment there is a set of $k$ states which one can query $R'(s)$ at in order to recover a sufficient subset of the reward weights for value alignment purposes. Note that this also works for rewards that are functions of $(s,a)$ and $(s,a,s')$.

If $R'(s)$ is a stochastic function, then linear regression can be used to efficiently estimate the robot's weight vector $\wb'$. After recovering the weight vector, the same value alignment test used for case (1) can be used.

In case (3) the tester has access to the value functions of the robot. If the tester can query the robot agent's value function then $R(s)$ can be recovered from the Bellman equation
\begin{equation}
    R(s) = Q_{R}^*(s,a) - \gamma \mathbb{E}_{s'|s,a} \left[V_{R}^*(s')\right]
\end{equation}

Computing the expectation requires enumerating successor states. If we define the maximum degree of the MDP transition function as 
\begin{equation}
    d_{\max} = \max_{s \in \mathcal{S},a \in \mathcal{A}} |\{s'\in \mathcal{S} \mid P(s,a,s')>0  \}|,
\end{equation} 
then at most the $d_{\max}$ possible next state value queries are needed to evaluate the expectation. Thus, at most $\mathrm{rank}(\Phi) (d_{\max} + 1)$ queries to the robot's value functions are needed to recover $\wb'$, and the tester can verify value alignment via Case (1). Since $\mathrm{rank}(\Phi) \leq k$ as before, at most $k (d_{\max} + 1)$ queries are required.

In case (4), the tester only has access to the robot's values via preference queries over trajectories. If the robot agent being tested can answer pairwise preferences over trajectories, then a value alignment test can also be tested via an approximation of the ARP. Each preference over trajectories $\xi_A \prec \xi_B$ induces the constraint $\wb^T (\xi_B - \xi_A)>0$. Thus, given a test $\mathcal{T}$ consisting of preferences over trajectories, we can guarantee value alignment if 
\begin{equation}
    \{ \wb \mid \wb^T(\xi_B - \xi_A) > 0, \forall (\xi_A, \xi_B) \in \mathcal{T} \} \subseteq \text{ARS}(\wb).
\end{equation}
Note that a single trajectory in general will not actually match the successor features of a stochastic policy. However, by synthesizing arbitrary trajectories we can create more half-space constraints than are used to define the ARP since these trajectories do not need to be the product of a rational policy. 
As more trajectory queries are asked the estimate of the ARP will approach a subset of the true ARP. Brown et al. \cite{brown2019drex} proved that given random halfplane constraints, the volume of the polytope will decrease exponentially. Thus we will need a logarithmic number of queries to accurately approximate the ARP.

\end{proof}

\subsection{Relationship of the ARP to Ng and Russell's Consistent Reward Sets} \label{app:ng_russel}
In this section we discuss the relationship between our approach and the foundational work on IRL by Ng and Russell \cite{ng2000algorithms}.

We define the set of rewards consistent with an optimal policy as follows:
\begin{definition}
Given an environment $E$, the \textit{consistent reward set} (CRS) of a policy $\pi$ in environment $E$ is defined as the set of reward functions under which $\pi$ is optimal:
\begin{equation}
\text{CRS}(\pi) =  
\{\mathbf{w} \in \mathbb{R}^k \mid \pi \text{ is optimal with respect to } R(s) = \mathbf{w}^T \phi(s) \}. \end{equation}
\end{definition}

The fundamental theorem of inverse reinforcement learning \cite{ng2000algorithms}, defines the set of all consistent reward functions as a set of linear inequalities for finite MDPs. 



\begin{proposition}\label{thm:ngRussell} \cite{ng2000algorithms}
Given an environment $E$, with finite state and action spaces, $\mathbf{R} \in \text{CRS}(\pi)$ if and only if
\begin{equation}
(\mathbf{P_\pi} - \mathbf{P_a})(\mathbf{I} - \gamma \mathbf{P_\pi})^{-1} \mathbf{R} \geq 0, \; \forall a \in \mathcal{A}
\end{equation}
where $\mathbf{P_a}$ is the transition matrix associated with always taking action $a$, $\mathbf{P_{\pi}}$ is the transition matrix associated with policy $\pi$, and $\mathbf{R}$ is the column vector of rewards for each state in the MDP.
\end{proposition}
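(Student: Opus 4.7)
The plan is to derive the inequality by writing the Bellman equations in matrix form and invoking the policy improvement theorem to characterize optimality. First I would introduce the state-value vector $V^\pi \in \mathbb{R}^{|\mathcal{S}|}$ whose $s$th entry is $V_R^\pi(s)$. The policy evaluation Bellman equations read $V^\pi = R + \gamma \mathbf{P}_\pi V^\pi$, and since $\gamma \in [0,1)$ and $\mathbf{P}_\pi$ is stochastic, the matrix $\mathbf{I} - \gamma \mathbf{P}_\pi$ is strictly diagonally dominant (equivalently, has spectral radius at most $\gamma < 1$) and therefore invertible, giving the closed form $V^\pi = (\mathbf{I} - \gamma \mathbf{P}_\pi)^{-1} \mathbf{R}$.

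Next I would rewrite the state-action values in the same matrix form. For a fixed deterministic action $a$ taken once and then following $\pi$ thereafter, the vector of one-step values is $\mathbf{R} + \gamma \mathbf{P}_a V^\pi$, and for $\pi$ itself it is $\mathbf{R} + \gamma \mathbf{P}_\pi V^\pi = V^\pi$. Optimality of $\pi$ is equivalent to the single-step deviation condition $Q^\pi(s,\pi(s)) \geq Q^\pi(s,a)$ for every $s$ and every $a$, which in matrix form is
\begin{equation}
\gamma \mathbf{P}_\pi V^\pi \;\geq\; \gamma \mathbf{P}_a V^\pi, \quad \forall a \in \mathcal{A},
\end{equation}
componentwise. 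Dropping the common positive factor $\gamma$ (if $\gamma = 0$ the claim is trivial since all rewards trivially satisfy the inequality and every policy is optimal) and substituting the closed form for $V^\pi$ yields $(\mathbf{P}_\pi - \mathbf{P}_a)(\mathbf{I} - \gamma \mathbf{P}_\pi)^{-1} \mathbf{R} \geq 0$ for every $a$, which is exactly the claimed inequality.

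For the two directions of the ``if and only if'', the forward implication (optimal $\pi$ implies the inequality) follows directly from the Bellman optimality condition $Q^*(s,\pi(s)) \geq Q^*(s,a)$ together with $Q^\pi = Q^*$. The reverse direction is the more subtle step and is where I would lean on the policy improvement theorem: the inequality says that $\pi$ is greedy with respect to its own value function $V^\pi$, and any policy greedy with respect to its own value function is optimal (since one sweep of policy improvement returns the same policy, the policy iteration fixed point has been reached, which coincides with the Bellman optimality fixed point).

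The main obstacle I anticipate is being careful about the meaning of $\mathbf{P}_a$ for stochastic optimal policies, since Corollary~\ref{cor:subset_impl_exact} and the surrounding discussion allow $\arg\max$ to be set-valued. I would handle this by stating the proposition for a fixed deterministic representative of the optimality class (as in Ng and Russell's original formulation) and noting that the CRS for any stochastic $\pi$ supported on optimal actions reduces to an intersection of such deterministic constraints. The invertibility of $\mathbf{I} - \gamma \mathbf{P}_\pi$ and the equivalence of the single-step deviation condition with global optimality are the two facts that carry most of the technical weight, but both are standard consequences of $\gamma < 1$ and the contraction property of the Bellman operator.
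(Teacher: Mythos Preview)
The paper does not supply its own proof of this proposition: it is stated with a citation to Ng and Russell (2000) and no argument is given in the text. Your proposal reproduces what is essentially the original Ng--Russell derivation---write $V^\pi = (\mathbf{I}-\gamma\mathbf{P}_\pi)^{-1}\mathbf{R}$, express the one-step deviation values as $\mathbf{R}+\gamma\mathbf{P}_a V^\pi$, and invoke the equivalence between ``greedy with respect to one's own value function'' and optimality via the Bellman optimality fixed point. That argument is correct and is the standard route.

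One small remark: your parenthetical handling of $\gamma=0$ is slightly off. When $\gamma=0$ the inequality you \emph{derive}, namely $\gamma\mathbf{P}_\pi V^\pi \geq \gamma\mathbf{P}_a V^\pi$, collapses to $0\geq 0$ and is vacuous, which matches the fact that every policy is optimal. But the inequality as \emph{stated} in the proposition, $(\mathbf{P}_\pi-\mathbf{P}_a)(\mathbf{I}-\gamma\mathbf{P}_\pi)^{-1}\mathbf{R}\geq 0$, does not reduce to a tautology at $\gamma=0$; it becomes $(\mathbf{P}_\pi-\mathbf{P}_a)\mathbf{R}\geq 0$, which is not satisfied by arbitrary $\mathbf{R}$. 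So the proposition as written implicitly assumes $\gamma>0$ (as does Ng and Russell's original formulation), and you should simply state that assumption rather than claim the $\gamma=0$ case is trivially covered by the displayed inequality.
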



When the reward function is a linear combination of features, we get the following:
\begin{corollary} \label{app-corr:cont_crs} \cite{ng2000algorithms,brown2019machine}
Given an environment $E$, the $CRS(\pi)$ is given by the following intersection of half-spaces:
\begin{eqnarray}
\{\mathbf{w} \in \mathbb{R}^k \mid \mathbf{w}^T (\Phi_{\pi}^{(s,a)} - \Phi_{\pi}^{(s,b)}) \geq 0,
\forall a \in \support(\pi(s)), b \in \mathcal{A}, s \in \mathcal{S} \}.
\end{eqnarray}
\end{corollary}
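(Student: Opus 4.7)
The plan is to unpack both sides of the equality and show they coincide by invoking the standard one-step deviation characterization of optimality (the policy improvement theorem), after rewriting $Q^{\pi}$ in the language of feature expectations.

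First I would recall that by definition $\mathbf{w} \in CRS(\pi)$ iff $\pi$ is optimal in the MDP with reward $R(s) = \mathbf{w}^T \phi(s)$. Using the linearity of expectation and the given identity $Q^{\pi}_R(s,a) = \mathbf{w}^T \Phi_\pi^{(s,a)}$ (established earlier in the Notation section), all Q-values under the linear reward become linear functions of $\mathbf{w}$. Thus the half-space inequality $\mathbf{w}^T(\Phi_\pi^{(s,a)} - \Phi_\pi^{(s,b)}) \geq 0$ is exactly the statement $Q^{\pi}_R(s,a) \geq Q^{\pi}_R(s,b)$, which is purely a statement about the policy-induced Q-values and the one-step deviation to action $b$.

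Next I would show the two inclusions. For the forward inclusion, suppose $\mathbf{w} \in CRS(\pi)$, so $\pi$ is optimal. Then for every $s$ and every $a \in \support(\pi(s))$, $a$ must itself be optimal at $s$, which gives $Q^{\pi}_R(s,a) = V^{*}_R(s) \geq Q^{*}_R(s,b) = Q^{\pi}_R(s,b)$ for all $b \in \mathcal{A}$ (using $Q^{\pi}_R = Q^{*}_R$ when $\pi$ is optimal). Translating through the linear rewriting yields the desired half-space constraints. For the reverse inclusion, suppose $\mathbf{w}$ satisfies all the half-space constraints. Then at every state $s$, every action $a$ in the support of $\pi$ satisfies $Q^{\pi}_R(s,a) \geq Q^{\pi}_R(s,b)$ for all $b \in \mathcal{A}$, so no one-step deviation from $\pi$ strictly improves expected return. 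The policy improvement theorem then implies that $\pi$ is optimal under $R$, i.e., $\mathbf{w} \in CRS(\pi)$.

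The only real subtlety is the reverse direction, where one has to make sure that the one-step-deviation condition (which only compares $Q^{\pi}$-values, not $Q^{*}$-values) is really enough to certify global optimality of $\pi$. This is handled cleanly by the policy improvement theorem: iterating one-step improvements from $\pi$ cannot yield a better policy, so $\pi$ is already optimal. Handling the non-strict inequality (which allows ties and therefore multiple optimal actions beyond $\support(\pi(s))$) is automatic since the half-space definition uses $\geq 0$ rather than $> 0$; this also explains why $CRS(\pi)$ is only a necessary, not sufficient, object for value alignment verification, motivating the tightening to the strict-inequality $ARS$ in Lemma~\ref{lem:arp_va}. Beyond this policy-improvement invocation the argument is essentially a direct computation, so I would not expect any further obstacle.
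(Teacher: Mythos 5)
Your proposal is correct and follows essentially the same route as the paper: rewrite the $Q$-values as linear functions of $\mathbf{w}$ via the feature expectations $\Phi_\pi^{(s,a)}$ and reduce optimality of $\pi$ to the one-step comparisons $Q^\pi_R(s,a) \geq Q^\pi_R(s,b)$ for $a \in \support(\pi(s))$. Your write-up is in fact slightly more complete than the paper's own proof, which only explicitly argues the forward inclusion (optimality of $\pi$ implies the half-space constraints) and leaves the converse implicit; you correctly supply it via the policy improvement theorem, the same tool the paper invokes in its proof of Lemma~\ref{lem:arp_va}.
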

\begin{proof}
In every state $s$ there is one or more optimal actions $a$. For each optimal action $a \in \support(\pi(s))$, we then have by definition of optimality that
\begin{equation}
Q^*(s,a) \geq Q^*(s,b), \; \forall b \in A
\end{equation}
Rewriting this in terms of expected discounted feature counts we have
\begin{equation}
\wb^T \Phi_{\pi}^{(s,a)} \geq \wb^T \Phi_{\pi}^{(s,b)}, \; \forall b \in A
\end{equation}
Thus, the entire feasible region is the intersection of the following half-spaces 
\begin{eqnarray}
\wb^T (\Phi_{\pi}^{(s,a)} -  \Phi_{\pi}^{(s,b)}) \geq 0, \\ \forall a \in \text{support}(\pi(s)), b \in \mathcal{A}, s \in \mathcal{S}
\end{eqnarray}
and thus the feasible region is convex.
\end{proof}

The consistent reward set of a demonstration from an optimal policy can be defined similarly:
\begin{corollary}\label{cor:feasibleDemo} \cite{brown2019machine}
Given a set of demonstrations $\mathcal{D}$ from a policy $\pi$, $CRS(\mathcal{D} | \pi)$ is given by the following intersection of half-spaces:
\begin{equation}
\mathbf{w}^T (\Phi_{\pi}^{(s,a)} - \Phi_{\pi}^{(s,b)}) \geq 0,\; \forall (s,a) \in \mathcal{D}, b \in \mathcal{A}.  
\end{equation}
\end{corollary}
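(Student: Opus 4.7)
The plan is to mirror the proof of Corollary \ref{app-corr:cont_crs} but restrict the enumeration to the state-action pairs that actually appear in $\mathcal{D}$ rather than sweeping over all states in $\mathcal{S}$ and all actions in $\support(\pi(s))$. The semantic content is that $CRS(\mathcal{D}\mid\pi)$ should collect every $\wb$ under which each \emph{observed} pair $(s,a)\in\mathcal{D}$ is consistent with $\pi$ being optimal, i.e.\ $a$ is an optimal action at $s$; we do not ask for any constraint at unobserved states, which is exactly what makes $CRS(\mathcal{D}\mid\pi)\supseteq CRS(\pi)$.

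First I would unfold the definition: $\wb\in CRS(\mathcal{D}\mid\pi)$ iff for every demonstrated $(s,a)\in\mathcal{D}$, the action $a$ lies in $\arg\max_{a'\in\mathcal{A}} Q^{*}_{R}(s,a')$ with $R(s)=\wb^T\phi(s)$. Equivalently, $Q^{*}_{R}(s,a)\geq Q^{*}_{R}(s,b)$ for all $b\in\mathcal{A}$. Second, since the demonstrations come from $\pi$ and we are characterizing rewards under which $\pi$ is optimal, we have $Q^{*}_{R}(s,\cdot)=Q^{\pi}_{R}(s,\cdot)$ at the demonstrated states; using the linear-feature identity $Q^{\pi}_{R}(s,a)=\wb^T\Phi_{\pi}^{(s,a)}$ from the Notation section, each of these inequalities becomes $\wb^T(\Phi_{\pi}^{(s,a)}-\Phi_{\pi}^{(s,b)})\geq 0$. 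Intersecting over all $(s,a)\in\mathcal{D}$ and $b\in\mathcal{A}$ yields exactly the half-space description claimed.

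For the converse direction I would reverse the chain: if $\wb$ satisfies $\wb^T(\Phi_{\pi}^{(s,a)}-\Phi_{\pi}^{(s,b)})\geq 0$ for every $(s,a)\in\mathcal{D}$ and every $b\in\mathcal{A}$, then by the same identity $Q^{\pi}_{R}(s,a)\geq Q^{\pi}_{R}(s,b)$ for all $b$, so $a$ is optimal at $s$ by a one-step application of the policy improvement theorem (the same tool the excerpt already invokes in the proof sketch of Lemma \ref{lem:arp_va}). Since this holds at every demonstrated pair, $\wb$ is consistent with $\mathcal{D}$, placing it in $CRS(\mathcal{D}\mid\pi)$. Convexity of the feasible set then follows because any intersection of closed half-spaces is convex.

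The main obstacle, and it is a mild one, is the bookkeeping of which quantities are $Q^{*}$ versus $Q^{\pi}$: the half-space constraints are naturally written in terms of $\Phi_{\pi}^{(s,a)}$, which is only equal to the successor features of an optimal trajectory when $\pi$ itself is optimal under $\wb$. This is handled by noting that the characterization is stated relative to a \emph{fixed} $\pi$ that generated $\mathcal{D}$, so the forward direction uses optimality of $\pi$ to collapse $Q^{*}$ to $Q^{\pi}$, and the reverse direction uses policy improvement to promote $Q^{\pi}$-dominance back to optimality. Everything else is a straightforward specialization of Corollary \ref{app-corr:cont_crs} from ``all states and all actions in $\support(\pi(s))$'' to ``only the $(s,a)$ that appear in $\mathcal{D}$''.
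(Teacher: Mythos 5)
Your proposal matches the paper's own proof, which is a one-line specialization of Corollary~\ref{app-corr:cont_crs} to the half-spaces corresponding to the $(s,a)$ pairs appearing in $\mathcal{D}$; your extra bookkeeping about when $Q^{*}$ collapses to $Q^{\pi}$ only makes explicit what the parent corollary's proof already uses implicitly. No substantive difference in approach.
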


\begin{proof}
The proof follows from the proof of ~\ref{app-corr:cont_crs} by only considering half-spaces corresponding to optimal $(s,a)$ pairs in the demonstration.
\end{proof}


Note that Corollary~\ref{app-corr:cont_crs} does not solve the alignment verification problem. It only provides a necessary, but not sufficient condition. If a reward function is within the CRS of a policy dot not imply all agents optimal under that reward function are aligned. Consider the example of the all zero reward: it is always in the CRS of any policy; however, an agent optimizing the zero reward can end up with any policy. Even ignoring the all zero reward we can have rewards on the boundaries of the CRS polytope that are consistent with a policy, but not value aligned since they lead to more than one optimal policy, one or more of which may not be optimal under the tester's reward function. 

\subsection{Proof of Theorem~\ref{thm:omnipotent_vav}: $\epsilon$-Alignment Verification via Omnipotent Testing}\label{proof:omnipotent_evav}


In this section we consider what is possible in the omnipotent tester case where a tester can design a set of test MDPs in order to verify alignment over a (potentially infinite) family of MDPs that share reward information. We are able to prove that, under some assumptions, alignment over a family of MDPs is possible by querying a complete policy in only two test MDPs.

More formally, we consider the case where the testing agent is able to construct a set of arbitrary test MDPs to verify value alignment across a family of environments that may have different transitions, actions, initial state distribution, and discount factor, but that share the same reward function over states. Amin and Sing \cite{amin2016towards} prove that an omnipotent active learner can determine the reward function of another agent within $\epsilon$ precision via $O( \log(|\mathcal{S}|) + \log(1/\epsilon))$ active policy queries. We extend this result to the case of $\epsilon$-value alignment verification.

Before we prove Theorem~\eqref{thm:omnipotent_vav}, we require the following Lemma, which proves that if two agents' reward functions are similar enough (in an $L^{\infty}$ sense) then we can guarantee $\epsilon$-value alignment.


\begin{lemma} \label{lem:bounded_r_e_vav}
If $\|R(s) - R'(s)\|_\infty \leq \epsilon (1- \gamma)/2$, where $\gamma$ is the discount factor and $\epsilon$ is any non-negative error term, then rational agents that have reward functions $R(s)$ and $R'(s)$ are $\epsilon$-value aligned across all MDPs that share the reward function $R(s)$.
\end{lemma}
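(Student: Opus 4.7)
My plan is to reduce the claim to the classical "value difference" or simulation-style lemma, which bounds how much two value functions can differ for a fixed policy when the reward functions are close in $L^\infty$. Specifically, I would first establish the auxiliary fact that, for any policy $\pi$ and any state $s$ in any MDP with discount $\gamma$,
\begin{equation}
\bigl| V^{\pi}_{R}(s) - V^{\pi}_{R'}(s) \bigr| \;\leq\; \frac{\|R - R'\|_\infty}{1-\gamma}.
\end{equation}
This follows directly from writing each value as a discounted sum $\mathbb{E}_\pi[\sum_{t\geq 0}\gamma^t R(s_t)]$, subtracting, bounding each term by $\|R-R'\|_\infty$, and summing the geometric series $\sum \gamma^t = 1/(1-\gamma)$. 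I would state this as a one-line preliminary inside the proof rather than as a separate lemma, since it is standard.

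Next, fix an arbitrary MDP in the family (with reward $R$, some transition, action set, initial distribution, and discount $\gamma$), and let $\pi^{*}_R$ denote any policy optimal under $R$ in this MDP and $\pi^{*}_{R'}$ any policy optimal under $R'$ (the policy the rational $R'$-agent would actually execute). For an arbitrary state $s$ I would then chain the bounds:
\begin{align}
V^{*}_{R}(s) - V^{\pi^{*}_{R'}}_{R}(s)
&= V^{\pi^{*}_R}_{R}(s) - V^{\pi^{*}_{R'}}_{R}(s) \\
&\leq \Bigl( V^{\pi^{*}_R}_{R'}(s) + \tfrac{\|R-R'\|_\infty}{1-\gamma} \Bigr) - \Bigl( V^{\pi^{*}_{R'}}_{R'}(s) - \tfrac{\|R-R'\|_\infty}{1-\gamma} \Bigr) \\
&\leq \tfrac{2\,\|R-R'\|_\infty}{1-\gamma},
\end{align}
where the first inequality is two applications of the preliminary bound (upper bound for $V^{\pi^{*}_R}_R$, lower bound for $V^{\pi^{*}_{R'}}_R$) and the second uses the optimality of $\pi^{*}_{R'}$ under $R'$, i.e.\ $V^{\pi^{*}_{R'}}_{R'}(s) \geq V^{\pi^{*}_R}_{R'}(s)$.

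Finally I would substitute the hypothesis $\|R-R'\|_\infty \leq \epsilon(1-\gamma)/2$ into the last display to obtain $V^{*}_R(s) - V^{\pi^{*}_{R'}}_R(s) \leq \epsilon$. Since $s$ and the MDP were arbitrary within the family, this matches Definition~\ref{def:eps_va} for $\epsilon$-value alignment of the rational $R'$-agent across every MDP sharing reward $R$. I do not anticipate any serious obstacle: the proof is essentially a two-line application of the simulation lemma plus one use of $R'$-optimality, and the factor of $2$ in the hypothesis is exactly what makes the final inequality tight. The only subtlety worth flagging in the write-up is that $\gamma$ in the hypothesis must be interpreted as the discount factor of the MDP currently being evaluated; if the family contains MDPs with different $\gamma$, the condition must hold for the largest $\gamma$ in the family (the worst case).
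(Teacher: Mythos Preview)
Your proposal is correct and follows essentially the same approach as the paper: both arguments bound $|V^{\pi}_{R}(s)-V^{\pi}_{R'}(s)|$ by $\|R-R'\|_\infty/(1-\gamma)$ for each of the two policies and then invoke optimality of $\pi^{*}_{R'}$ under $R'$ to collapse the middle term. The paper dresses this up as an adversarial worst-case construction, whereas you cite the simulation lemma directly, but the underlying chain of inequalities is identical.
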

\begin{proof}
For $\pi' \in OPT(R')$ to be $\epsilon$-value aligned under the humans' reward function $R$ we must have $\forall s \in \mathcal{S}, V^*_R(s) - V^{\pi'}_{R}(s) \leq \epsilon$. To prove the lemma we must show that an adversary that picks $R'$ within the constraint $\|R(s) - R'(s)\|_\infty \leq \epsilon (1- \gamma)/2$ cannot violate the alignment condition in any MDP. 

The adversary wants to maximize $V^*_R(s)-V^{\pi'}_{R}(s)$ at some state. Let $\pi^*_R \in OPT(R)$ be an optimal policy under $R$. Since $\pi'$ is optimal under $R'$, we have $V_{R'}^{\pi^*_R}(s) - V_{R'}^{\pi'}(s) \leq 0$. We will show bounds on the maximum gap between $V_{R'}$ and $V_{R}$ for both policies, and use those bounds in combination with the above inequality to show that $V_{R}^*(s) - V_{R}^{\pi'}(s) \leq \epsilon$. The adversary would like the resulting upper bound to be as large as possible, which is achieved by making $V_{R}^{\pi_R^*}$ as large as possible and $V_{R}^{\pi'}$ as small as possible, which is in turn achieved by making $V_{R'}^{\pi_R^*}$ as small as possible in relationship to $V_{R}^{\pi_R^*}$ and vice versa for $V_{R'}^{\pi'}$.
Thus the adversary creates $R'$ by subtracting the maximum $\epsilon(1-\gamma)/2$ from the true reward ($R'(s) = R(s) - \epsilon(1-\gamma)/2$) at states visited by $\pi^*_R$ to make them look as bad as possible and adding $\epsilon(1-\gamma)/2$ to the true reward ($R'(s) = R(s) + \epsilon(1-\gamma)/2$) at states visited by $\pi'$ look as good as possible. If $\pi_R^*$ and $\pi'$ visit some of the same states, this assignment of $R'(s)$ isn't possible, but this only tightens our bound. Thus, we have in the worst-case
\begin{align}
    V^{\pi^*_R}_{R'} &= \mathbb{E}[\sum_{t=0}^\infty \gamma^t R'(s_t) \mid s_t \sim \pi^*_R] \\
    &\geq \mathbb{E}[\sum_{t=0}^\infty \gamma^t \left(R(s_t) - \epsilon  (1-\gamma)/2 \right) \mid s_t \sim \pi^*_R]\\
    &\geq V^{\pi^*_R}_R - \frac{\epsilon(1-\gamma)}{2(1-\gamma)} \\
    &\geq V^{\pi^*_R}_R - \frac{\epsilon}{2}
\end{align}

and

\begin{align}
    V^{\pi'}_{R'} &= \mathbb{E}[\sum_{t=0}^\infty \gamma^t R'(s_t) \mid s_t \sim \pi'] \\
    &\leq \mathbb{E}[\sum_{t=0}^\infty \gamma^t \left(R(s_t) + \epsilon(1-\gamma)/2\right) \mid s_t \sim \pi'] \\
    &\leq V^{\pi'}_R + \frac{\epsilon(1-\gamma)}{2(1-\gamma)}\\
    &\leq V^{\pi'}_R + \frac{\epsilon}{2}
\end{align}

As noted above we have $V^{\pi^*_R}_{R'}(s) \leq V^{\pi'}_{R'}(s)$ since $\pi'$ is optimal under $R'$. Substituting the above bounds provides that

\begin{align}
    V^{\pi^*_R}_{R'}(s) &\leq V^{\pi'}_{R'}(s) \\
    V^{\pi^*_R}_{R}(s) - \epsilon/2 &\leq V^{\pi'}_{R}(s) + \epsilon/2 \\
    V^{\pi^*_R}_{R}(s) - V^{\pi'}_{R}(s) &\leq \epsilon
\end{align}
Thus, we have shown that under the assumption that $\|R(s) - R'(s)\|_\infty \leq \epsilon (1- \gamma)/2$, then the robot agent with reward function $R'$ is $\epsilon$-value aligned with the tester's reward function $R$ under all possible MDPs that share the reward function $R$.
\end{proof}

Note that if we scale the reward of an agent by a positive constant or by a constant vector, we can get the difference to look arbitrarily large even if the two rewards lead to the same optimal policy. This is undesirable for computing value alignment in terms of reward differences. Comparing rewards in this way works best if they are similarly normalized. We utilize a canonical form for reward functions defined by the transformation $(R(s) - \max_s R(s)) / (\max_s R(s) - \min_s R(s))$ such that the values of the reward function are scaled to be between 0 and 1 \cite{amin2016towards}. Following the notation of Amin and Singh \cite{amin2016towards} we use $[R]$ to denote 
the canonical form for reward function $R$. Note that we will not assume access to the canonical form of the robot's reward function. Indeed we assume no direct access to this reward function.

Given the ability to construct arbitrary testing environments, we can guarantee $\epsilon$-value alignment over all MDPs that share the reward function $R$.
The following theorem is inspired by Amin and Singh \cite{amin2016towards} who prove an analogous theorem for the case of actively querying an expert's policy to approximate the expert's reward function. The proof of Amin and Singh \cite{amin2016towards} relies on binary search and the query algorithm they derive results in query complexity of $O( \log(|\mathcal{S}|) + \log(1/\epsilon))$, where each query requires the expert to specify a complete policy for a new MDP. In contrast, our proof is based instead on machine testing, and we prove that in the case of value alignment verification we only require $O(1)$ policy queries. In fact we only need two test MDPs with policy queries.

\begin{theorem}
Given a testing reward $R$, there exists a two-query test (complexity $O(1)$) that determines $\epsilon$-value alignment of a rational agent over all MDPs that share the same state space and reward function $R$, but may differ in actions, transitions, discount factors, and initial state distribution.
\end{theorem}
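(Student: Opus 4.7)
The plan is to invoke Lemma~\ref{lem:bounded_r_e_vav} as the sufficient condition: I will construct two test MDPs such that if the robot's optimal policies in both are consistent with the tester's reward $R$, then the canonical form of the robot's reward $R'$ is within $\delta := \epsilon(1-\gamma)/2$ of $[R]$ in $L^\infty$, which by the lemma yields $\epsilon$-alignment across every shared-reward MDP in the family. Conversely, I will argue that an $\epsilon$-misaligned robot must have $\|[R]-[R']\|_\infty$ large enough to flip at least one action choice in our designed MDPs, so false positives are ruled out as well and the test is two-sided.

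For the construction, I would use the freedom to choose $\mathcal{A}$, $P$, $\gamma$, and $S_0$ (but not $\mathcal{S}$) to build a pair of test MDPs with complementary roles. The first is a \emph{scale-calibration} MDP whose optimal policy reveals the $\arg\max$ and $\arg\min$ states of $R'$ and thereby fixes the two affine parameters defining $[R']$; a simple choice is to endow every state with one deterministic absorb-into-target action per other state, so that the chosen target at each query state exposes the ordering of $R'$ on terminals. The second is a \emph{diagnostic} MDP in which at each state $s$ the robot faces a fine $\delta$-spaced menu of actions $a_1,\dots,a_K$ with $K = O(1/\delta)$: action $a_k$ causes the robot to self-loop at $s$ for a carefully chosen number of steps and then transition into an absorbing gadget (built from the reference states identified in the first MDP) whose discounted long-run value under the tester's canonical $R$ equals the grid point $k\delta$. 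Because the weight on $R'(s)$ in the Q-value grows with the self-loop length, a rational robot's choice at $s$ identifies $[R'](s)$ up to additive precision $\delta$, and the union over all states yields $\|[R]-[R']\|_\infty \le \delta$.

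The main obstacle I expect is setting up the diagnostic gadget in a way that is robust to adversarial $R'$. The calibration relies on correctly reading off the reference $\arg\max$ and $\arg\min$ from the first MDP and then evaluating Q-values under $R'$ (not $R$) in the second, so the analysis must convert action-preference inequalities into additive bounds on $R'(s)$ even when $R'$ is degenerate (constant, identically zero, or with many ties), which may require adding tie-breaking dummy actions or defining ``passes'' to mean that every revealed action lies in the appropriate $\arg\max$ set under $R$ in the test MDPs. A secondary subtlety is that the sequence-of-self-loops-then-absorbing-gadget construction must be calibrated so that the $K$ actions produce \emph{strictly} separated Q-values of width at least $\delta$ under any $R'$ — a direct computation from the geometric-series expression for the Q-values, using that $\gamma$ is a free parameter. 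Once the diagnostic step is shown to pin down $[R'](s)$ to precision $\delta$ at every state, Lemma~\ref{lem:bounded_r_e_vav} immediately gives $\epsilon$-value alignment across every MDP sharing $\mathcal{S}$ and $R$, completing the proof with just two policy queries, hence $O(1)$ query complexity.
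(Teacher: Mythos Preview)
Your plan correctly invokes Lemma~\ref{lem:bounded_r_e_vav} as the endpoint, but the two-MDP construction you sketch has a real gap and differs substantially from the paper's.

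The core problem is the calibration MDP. With one deterministic absorb-into-target action per other state, a rational robot at every query state simply picks the action leading to $\arg\max_{s} R'(s)$; you observe $s'_{\max}$ but you never observe $s'_{\min}$ or any ordering below the top. Your diagnostic gadget needs \emph{both} reference states to build absorbing mixtures whose value under $[R']$ equals $k\delta$, so without $s'_{\min}$ the second MDP cannot be constructed as described. Moreover, building the second MDP from ``reference states identified in the first'' makes the test adaptive, whereas the theorem should be read as the existence of two \emph{fixed} test environments determined by $R$ alone.

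The paper sidesteps all of this. It does not try to learn the robot's extremal states; it uses the tester's own $s_{\max}=\arg\max_{s} R(s)$ and $s_{\min}=\arg\min_{s} R(s)$ directly, and builds two non-adaptive lottery MDPs $E_L,E_U$ with just two actions each: $a_1$ self-loops at $s$, and $a_2$ is a gamble sending the robot to $s_{\max}$ with probability $\alpha_s$ and to $s_{\min}$ otherwise. In $E_L$ one sets $\alpha_s^L=\max([R](s)-\delta,0)$ and in $E_U$ one sets $\alpha_s^U=\min([R](s)+\delta,1)$. If the robot prefers $a_1$ everywhere in $E_L$ and $a_2$ everywhere in $E_U$, a direct computation yields $\alpha_s^L \le [R'](s) \le \alpha_s^U$, and a short case analysis shows that passing both tests forces $s'_{\max}=s_{\max}$ and $s'_{\min}=s_{\min}$ as a \emph{consequence}, not a precondition. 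This is cleaner than a grid of $O(1/\delta)$ self-loop-then-absorb actions and avoids the calibration step entirely.

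Finally, your converse claim (that an $\epsilon$-misaligned robot must flip some action in the test MDPs) does not follow from Lemma~\ref{lem:bounded_r_e_vav}, which is one-directional: $\|[R]-[R']\|_\infty$ small implies aligned, but misalignment over the family does not imply $\|[R]-[R']\|_\infty$ large. The paper establishes only the sufficient direction (passing $\Rightarrow$ $\epsilon$-aligned), so you should drop the converse.
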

\begin{proof}

By Lemma~\ref{lem:bounded_r_e_vav} we want a test that guarantees $\|[R'] - [R]\|_\infty \leq \epsilon (1-\gamma)/2$. 
Thus we need to show that
\begin{align}
 |[R'](s) - [R](s)| \leq \epsilon (1-\gamma)/2, \forall s \in S
\end{align}
which implies that 
\begin{align}
[R](s) - \epsilon (1-\gamma)/2 \leq [R'](s) \leq [R](s) + \epsilon (1-\gamma)/2, \forall s \in S.
\end{align}
We use the notation $[R]$ and $[R']$ to represent the canonical versions of $R$ and $R'$, the tester's and robot's reward functions, respectively. If we can directly query for $R'$, then we simply compute $\|R-R'\|_\infty$ and check if it is less than $\epsilon (1 - \gamma)/2$. We now consider the case where we can only query the robot's policy. We define $s_{\max} = \arg \max_s R(s)$ and $s_{\min} = \arg \min_s R(s)$ and $s'_{\max} = \arg \max_s R'(s)$ and $s'_{\min} = \arg \min_s R'(s)$.

We first cover the simple case where we only have two states: $s_{\min}$ and $s_{\max}$. In this case, we can construct an MDP with two actions: $a_1$ that always leads to $s_{\min}$ and $a_2$ which always leads to $s_{\max}$. We then can verify value alignment verification by asking for the robot's optimal policy and checking that $a_2$ is always preferred over $a_1$. Note that if the robot has more than two actions, we can simply make all remaining actions equivalent to either $a_1$ or $a_2$ since the tester has full control over the transition dynamics.

We now consider the general case where there are more than two states. We create two testing environments such that from each state there is an action $a_1$ that self transitions and an action $a_2$ that goes from each state to $s_{\max}$ with probability $\alpha_s$ and to $s_{\min}$ with probability $(1-\alpha_s)$, except in states $s_{\min}$ and $s_{\max}$ in which all transitions via $a_1$ and $a_2$ are self transitions. Thus, taking action $a_2$ represents a gamble between the states with minimum and maximum reward under the tester's reward function $R$. 
For $s \in S\setminus\{s_{\max}, s_{\min}\}$, we design two different transition dynamics with the parameters $\alpha^U$ and $\alpha^L$ such that 
 $\alpha^L_s = \max([R](s) - \frac{\epsilon (1-\gamma)}{2},0)$ and $\alpha_s^U = \min([R](s) + \frac{\epsilon (1-\gamma)}{2}, 1)$.
Then we construct two test environments $E_L$ and $E_U$. $L$ has $\alpha^L$ as the transitions and $U$ has $\alpha^U$ as the transitions. We then query the robot for its optimal policy in both test environments and use the policy to answer the two test questions:
\begin{enumerate}
\item Is $\pi(s)=a_1$, $\forall s \in S\setminus \{s_{\min}, s_{\max}\}$ in MDP $L$?
\item Is $\pi(s) = a_2$, $\forall s \in S\setminus \{s_{\min}, s_{\max}\}$ in MDP $U$? 
\end{enumerate}

If the agent answers "YES" to the first question, then $\forall s \in S\setminus\{s_{\max}, s_{\min}\}$ we know that $a_1$ is at lest as good as $a_2$. Thus the agent prefers to self transition at a state rather than take action $a_2$ which leads to a stochastic transitions to either $s_{\max}$ or $s_{\min}$. Thus, under the robot's unknown reward $R'$ the following inequality holds for all $s \in S\setminus\{s_{\max}, s_{\min}\}$:
\begin{align}
&\alpha_s^L R'(s_{\max}) + (1-\alpha_s^L) R'(s_{\min}) \leq R'(s) \\
\Leftrightarrow \quad& \alpha_s^L R'(s_{\max}) + (1-\alpha_s^L) R'(s_{\min}) - R'(s'_{\min}) \leq R'(s) - R'(s'_{\min})\\
\Leftrightarrow \quad& \alpha_s^L (R'(s_{\max}) - R'(s'_{\min}))  + (1-\alpha_s^L) (R'(s_{\min}) - R'(s'_{\min})) \leq R'(s) - R'(s'_{\min})\\
\Leftrightarrow \quad& \alpha_s^L \frac{R'(s_{\max}) - R'(s'_{\min})}{R'(s'_{\max}) - R'(s'_{\min})}  + (1-\alpha_s^L) \frac{R'(s_{\min}) - R'(s'_{\min})}{R'(s'_{\max}) - R'(s'_{\min})} \leq \frac{R'(s) - R'(s'_{\min})}{R'(s'_{\max}) - R'(s'_{\min})}\\
\Leftrightarrow \quad& \alpha_s^L [R'](s_{\max}) + (1-\alpha_s^L) [R'](s_{\min}) \leq [R'](s). \label{eq:lower_bound_omni}
\end{align}
and similarly, if the agent answers "YES" to question 2, we have
\begin{align}
 & R'(s) \leq \alpha_s^U R'(s_{\max}) + (1-\alpha_s^U) R'(s_{\min}) \\
 \Leftrightarrow \quad & [R'](s) \leq \alpha_s^U [R'](s_{\max}) + (1-\alpha_s^U) [R'](s_{\min}).  \label{eq:upper_bound_omni}
\end{align}
These above inequalities hold for all $s \in \mathcal{S}\setminus \{s_{\max}, s_{\min} \}$.

We now prove that answering "YES" to both questions 1 and 2 also means that  $s'_{\max} \equiv \max_s R'(s) = \max_s R(s) \equiv s_{\max}$ and $s'_{\min} \equiv \min_s R'(s) = \min_s R(s) \equiv s_{\min}$. We assume that $\frac{\epsilon (1-\gamma)}{2} < 0.5$ and thus consider three cases for the values of  $\alpha^L_s = \max([R](s) - \frac{\epsilon (1-\gamma)}{2},0)$ and $\alpha_s^U = \min([R](s) + \frac{\epsilon (1-\gamma)}{2}, 1)$
\begin{enumerate}
    \item $\alpha^L_s = 0$ and $\alpha_s^U = [R](s) + \frac{\epsilon (1-\gamma)}{2}$
    \item $\alpha^L_s = [R](s) - \frac{\epsilon (1-\gamma)}{2}$ and $\alpha_s^U = [R](s) + \frac{\epsilon (1-\gamma)}{2}$
    \item $\alpha^L_s = [R](s) - \frac{\epsilon (1-\gamma)}{2}$ and $\alpha_s^U = 1$
\end{enumerate}
\textbf{Case 1:} 
We have $\alpha^L_s = 0$, thus. If the robot answers YES to question 1, we have
\begin{align}
&\alpha_s^L [R'](s_{\max}) + (1-\alpha_s^L) [R'](s_{\min}) \leq [R'](s) \\
\Rightarrow \quad& [R'](s_{\min}) \leq [R'](s) \label{eq:case_1_a}
\end{align}
We also have
\begin{align}
    & [R'](s) \leq \alpha_s^U [R'](s_{\max}) + (1-\alpha_s^U) [R'](s_{\min}). 
\end{align}
plugging in the value in Equation~\eqref{eq:case_1_a} we have
\begin{align}
    & [R'](s) \leq \alpha_s^U [R'](s_{\max}) + (1-\alpha_s^U) [R'](s) \\
    \Rightarrow \quad & [R'](s) - (1-\alpha_s^U) [R'](s) \leq \alpha_s^U [R'](s_{\max}) \\
    \Rightarrow \quad & [R'](s) \leq [R'](s_{\max}) 
\end{align}

\textbf{Case 2:}
We have $\alpha^L_s = [R](s) - \frac{\epsilon (1-\gamma)}{2}$ and $\alpha_s^U = [R](s) + \frac{\epsilon (1-\gamma)}{2}$. Plugging these into Equation~\eqref{eq:lower_bound_omni} we have
\begin{align}
&\alpha_s^L [R'](s_{\max}) + (1-\alpha_s^L) [R'](s_{\min}) \leq [R'](s) \label{eq:case_2_lower}\\
\Rightarrow \quad&   [R'](s_{\min}) \leq \frac{1}{(1-\alpha_s^L)} \left([R'](s) - \alpha_s^L [R'](s_{\max})\right)
\end{align}
Plugging this into the following equation, yields:
\begin{align}
    & [R'](s) \leq \alpha_s^U [R'](s_{\max}) + (1-\alpha_s^U) [R'](s_{\min}) \\
    \Rightarrow \quad & [R'](s) \leq \alpha_s^U [R'](s_{\max}) + (1-\alpha_s^U) \left(\frac{1}{(1-\alpha_s^L)} \left([R'](s) - \alpha_s^L [R'](s_{\max}) \right)\right) \\
    \Rightarrow \quad & (1-\alpha_s^L) [R'](s) \leq (1-\alpha_s^L)\alpha_s^U [R'](s_{\max}) + (1-\alpha_s^U) \left([R'](s) - \alpha_s^L [R'](s_{\max}) \right)
\end{align}
Plugging the values for $\alpha_s^L$ and $\alpha_s^U$ for Case 2 and reducing the resulting algebraic equation results in
\begin{align}
    [R'](s) \leq [R'](s_{\max}) 
\end{align}
We then plug this value into Equation~\eqref{eq:case_2_lower} we get
\begin{align}
    & \alpha_s^L [R'](s_{\max}) + (1-\alpha_s^L) [R'](s_{\min}) \leq [R'](s) \\
    \Rightarrow \quad& \alpha_s^L [R'](s) + (1-\alpha_s^L) [R'](s_{\min}) \leq [R'](s)  \\
    \Rightarrow \quad&  [R'](s_{\min}) \leq [R'](s) .
\end{align}

\textbf{Case 3:}
We have $\alpha^L_s = [R](s) - \frac{\epsilon (1-\gamma)}{2}$ and $\alpha_s^U = 1$. Thus, 
\begin{align}
    & [R'](s) \leq \alpha_s^U [R'](s_{\max}) + (1-\alpha_s^U) [R'](s_{\min}) \\
    \Rightarrow \quad & [R'](s) \leq [R'](s_{\max}) . 
\end{align}
Plugging this into the following equation yields:
\begin{align}
    &\alpha_s^L [R'](s_{\max}) + (1-\alpha_s^L) [R'](s_{\min}) \leq [R'](s) \\
    \Rightarrow \quad& \alpha_s^L [R'](s) + (1-\alpha_s^L) [R'](s_{\min}) \leq [R'](s) \\
    \Rightarrow \quad& [R'](s_{\min}) \leq [R'](s) \\
\end{align}

Thus, for every state $s \in \mathcal{S}\setminus \{s_{\max}, s_{\min} \}$, we have proved that we always have
\begin{align}
    [R'](s_{\min}) \leq [R'](s) \leq [R'](s_{\max}).
\end{align}
Therefore, it must be the case that  $s'_{\max} \equiv \max_s R'(s) = \max_s R(s) \equiv s_{\max}$ and $s'_{\min} \equiv \min_s R'(s) = \min_s R(s) \equiv s_{\min}$.

Combining the above results we have (assuming the robot answers "YES" to questions 1 and 2) that $[R](s_{\max}) = [R'](s_{\max}) = 1$ and $[R](s_{\min}) = [R'](s_{\min}) = 0$. Additionally, for the remaining states, $s \in \mathcal{S}\setminus \{s_{\max}, s_{\min} \}$, we have that 
\begin{eqnarray}
&&\alpha_s^L R'(s_{\max}) + (1-\alpha_s^L) R'(s_{\min}) \leq R'(s) \leq \alpha_s^U R'(s_{\max}) + (1-\alpha_s^U) R'(s_{\min}) \\
 &\Rightarrow&\alpha_s^L  [R'](s_{\max}) + (1-\alpha_s^L) [R'](s_{\min}) \leq [R'](s) \leq \alpha_s^U [R'](s_{\max}) + (1-\alpha_s^U) [R'](s_{\min}) \nonumber \\
  &\Rightarrow &\alpha_s^L  \leq [R'](s) \leq \alpha_s^U  \label{line:0_1_R}\\
    &\Rightarrow& \max([R](s) - \epsilon (1-\gamma)/2,0) \leq [R'](s) \leq \min([R](s) + \epsilon (1-\gamma)/2, 1)\\
    &\Rightarrow& |[R'](s) - [R](s)| \leq \epsilon (1-\gamma)/2.
    \end{eqnarray}
Thus, we have$ \|[R'] - [R] \|_\infty \leq \epsilon (1-\gamma)/2$ so by Lemma~\ref{lem:bounded_r_e_vav} we have verified $\epsilon$-value alignment via two policy preference queries as desired.
\end{proof}

\section{Value Alignment Verification for Action Queries}\label{app:brute_force_sa_vav}

In this section we discuss the difficulty of solving Equation~\eqref{eq:evav_problem} directly to find which states to query for actions. The approach detailed here will generally be intractable, but motivates the tractable heuristics discussed in Section \ref{sec:heuristics}.

We consider the problem of finding a subset of states where we will query the robot for an action they would take at that state. We want to optimize the following objective (copied from the main text for convenience):
\begin{align} \label{app-eq:evav_problem}
    &\min_{T \subseteq \mathcal{T}} |T|, \text{s.t.}\; \forall \pi' \in \Pi, \\
    &V^{*}_{R}(s) - V^{\pi'}_{R}(s)  > \epsilon \Rightarrow Pr[\text{$\pi'$ passes test $T$}] \leq \delta_{\rm fpr} \nonumber \\
    &V^{*}_{R}(s) - V^{\pi'}_{R}(s)  \leq \epsilon \Rightarrow Pr[\text{$\pi'$ fails test $T$}] \leq \delta_{\rm fnr} \nonumber
\end{align}
where the choice set $T \subset \mathcal{T}$ is the set of states where we query for actions from the robot's policy. We seek to use these actions to verify value alignment. Furthermore, we want to precompute a single test that will certify any agent.

We will discuss a naive approach that motivates our heuristics from Section \ref{sec:heuristics}.
We propose a breadth-first search to find the optimal set of test states for action queries.

First we need to establish how likely detecting $\epsilon$-misalignment from a single action query at each state is. Consider all the reward functions that have a policy that is rational under that reward function but is $\epsilon$-value misaligned under $R$: $\mathcal{R}' = \{R' | \exists \pi', \pi' \in OPT(R'), \pi' \mathrm{\ is\ } \epsilon-\mathrm{misaligned under} R\}$. For each policy optimal under a reward in $\mathcal{R}'$, rollout that policy at every state $N$ times. From this we can obtain a Monte-Carlo estimate of the probability of detecting the robot is $\epsilon$-misaligned by taking the ratio of rollouts where the robot takes an $\epsilon$-misaligned action $b$ such that 
\begin{equation}\label{eq:action_query_test}
    Q^*_{R}(s,\pi^*_{R}(s)) - Q^*_{R}(s,b) > \epsilon,
\end{equation}
to the number of rollouts $N$. 

We now perform breadth-first search to search to solve the combinatorial optimization problem of determining the subset of states that allow high-confidence value alignment verification. We use breadth-first search since we are interested in finding the minimal number of states to test such that we can detect all non-aligned agents with probability at least $\delta_{\rm fpr}$. We start with tests consisting of only one state and grow them via breadth-first search. The goal condition is that the probability the test fails to detect a misaligned agent is less than $\delta_{\rm fpr}$. We can define this probability as
\begin{equation}
    \max_{\pi' \in \Pi'} Pr[\text{$\pi'$ passes test T}] = \max_{\pi' \in \Pi'} \prod_{s \in T} \big( 1- Pr(\pi' \text{ detected at $s$})\big),
\end{equation}
where $\Pi'$ is the set of $\epsilon$ misaligned policies under $R$.

Thus, we perform breadth-first graph search, where the search progressively explored all subsets of states starting with singletons and returns the first subset of states such that $\max_{\pi' \in \Pi'} Pr[\text{$\pi'$ passes test T}] < \delta_{\rm fpr}$. Note that the above test will never fail an $\epsilon$-value aligned agent, since all such agents will never take an action $b$ that satisfies Equation~\eqref{eq:action_query_test} by definition. Thus, we have $\delta_{\rm fnr} = 0$. If we are willing to allow some false negatives (the test is allowed to fail some $\epsilon$-aligned agents), then we can adjust the test by keeping track of all policies that are $\epsilon$-value aligned and computing an analogous probability to that above for false positives.

While the above procedure will work for the simplest of domains it has several fundamental drawbacks: (1) We need to enumerate all policies in $\Pi$ and check whether they are $\epsilon$-value aligned or not, (2) We need to run multiple rollouts from each $\epsilon$-misaligned policy over multiple states to compute $Pr(\pi' \text{ detected at $s$})$ for every state, (3) We have to then run a combinatorial optimization. In comparison, the action query heuristics we discuss in the paper only require solving a single MDP for an optimal policy under the human's reward $R$. However, the heuristics are specifically for testing exact value alignment ($\epsilon = 0$, $\delta_{\rm fpr} = 0$) and do not consider false negatives. Future work should examine how to bridge the gap between these two extremes to see if there is a tractable middle ground that is amenable to high-confidence $\epsilon$-value alignment verification.

\section{Value Alignment Verification Heuristics}\label{app:heuristics}
In this section we discuss the value alignment verification heuristics in more detail. Note that all of the methods above are not guaranteed to verify value alignment and may give false positives. However, all are designed to never give a false negative.

\subsection{Critical State-Action Value Alignment Heuristic}
Prior work by Huang et al. \cite{huang2018establishing}, seeks to build human-agent trust by asking an agent for critical states which are defined as follows:
\begin{equation}
Q_{R}^*(s,\pi_{R}^*(s)) - \frac{1}{|\mathcal{A}|} \sum_{a \in \mathcal{A}} Q_{R}^*(s,a) > t
\end{equation}
for some user-defined $t$. If $t=0$, then all states will be critical states. On the otherhand, for large $t$, none of the states will be critical. Thus, $t$ must be carefully tuned to the scale of the reward function and to the particulars of the MDP. Huang et al. \cite{huang2018establishing} also proposed finding critical states in terms of states with policy entropy below some threshold $t$, but found that state-action value critical states performed better. Futhermore, using entropy would label every state as critical for a deterministic policy. State-action value critical states can also be computed for both deterministic and stochastic policies, thus we only compare against state-action value critical states.

One possible way to use critical states for a value alignment heuristic would be to ask an agent for its critical states and then see if those match the tester's critical states However, this is problematic since reward scale isn't fixed and there are an infinite number of reward functions that lead to the same policy \cite{ng2000algorithms}, so the gap in Q-values can be arbitrarily large. Thus $t$ would have to be carefully constructed and tuned for both the tester and the agent, making this impractical. Instead, we simply calculate the critical states for the tester under a tester-defined $t$ and then test whether the optimal action that the agent being tested would take in the tester's critical state is also optimal under the tester's value function.
 
This results in the following value alignment heuristic:

(1) Find critical states in true MDP for $t\geq 0$.

(2) Query the robot for their action in each critical state and check if this is an optimal action under the tester's reward function.

\subsection{Aligned Reward Polytope Black-Box Heuristic}
For this heuristic we have the tester compute $ARP(R)$ for the tester's reward function $R$, and then find the minimum set of equivalent constraints using linear programming as discussed in Section~\ref{app:halfspace_redundancy_removal}. 
To run a verification test we simply take the set of states corresponding to this minimal set of constraints. For each of these constraints we have
\begin{equation}
\wb^T(\Phi^{(s,a)}_{\pi^*} - \Phi^{(s,b)}_{\pi^*}) > 0
\end{equation}
for all $a \in \arg\max_{a'} Q^*(s,a')$. The test then consists of asking the agent being tested for the action the testee would take in state $s$ and checking if it is optimal under the tester's reward function.

\subsection{SCOT Trajectory-Based Heuristic}
We also adapt the set cover optimal teaching (SCOT) algorithm for value alignment verification \cite{brown2019machine}. As done in the original paper \cite{brown2019machine}, we first compute feature expectations, then we calculate the minimal set of constraints that define the consistent reward set (CRS)  using Corollary~\ref{app-corr:cont_crs}. We then rollout $m$ trajectories using the teacher's policy from each initial state and calculate the CRS of the rollouts using Corollary~\ref{cor:feasibleDemo}. We then run set cover and find the minimum set of rollouts of length $H$ that implicitly covers the CRS.

Given the machine teaching demos from SCOT we mask the actions and ask the agent being tested what action it would take in each state. We then compare this action with the machine teaching action. In particular, we implement this querying the robot agent for an action at each state $s$ and then checking if this action is optimal under the tester's reward function.

\subsection{Computational Complexity}
In terms of complexity, the CS heuristic is the least computationally expensive since it requires only solving for the optimal Q-values at each state and then selecting states with action-value gap larger than $t$. The ARP-bb heuristic is the next most computationally intensive heuristic. It also only requires solving for the optimal policy for a single MDP, but also requires computing $\mathbf{\Delta}$ and removing redundant constraints. If the policy is represented and learned using successor features \cite{barreto2017successor}, then we obtain $\mathbf{\Delta}$ simply by iterating over each state to find optimal and suboptimal actions. Alternatively, given an optimal policy, $\mathbf{\Delta}$ can be efficiently recovered via a vectorized version of policy evaluation, where expected feature vectors are propagated instead of expected values. Removing redundant constraints requires solving a LP. The complexity of this will depend on the number of rows (number of states with unique feature count normal vectors) and columns (number of features) of $\mathbf{\Delta}$. Finally, the SCOT heuristic is the most computationally intensive. It still requires solving one MDP (to get the optimal policy for R), but also requires removing redundant half-space constraints from $\mathbf{\Delta}$ and then running a greedy set-cover approximation.

\section{Case Study Continued}\label{app:case_study}

To illustrate the types of test queries found via value alignment verification, we consider two domains inspired by the AI safety grid worlds~\cite{leike2017ai}. The first domain, \textit{island navigation} is shown in Section~\ref{sec:case_study}. We now discuss another domain inspired by the AI safety gridworlds: lava world. This domain is shown in Figure~\ref{fig:lava_world}. Figure~\ref{subfig:lava_pi} shows the optimal policy under the tester's reward function
\begin{equation}
    R(s) = 50 \cdot \mathbf{1}_{\rm green}(s) - 1 \cdot \mathbf{1}_{\rm white}(s) - 50 \cdot \mathbf{1}_{\rm red}(s),
\end{equation}
where $\mathbf{1}_{\rm color}(s)$ is an indicator feature for the color of the grid cell. Shown in figures~\ref{subfig:lava_pref1} and \ref{subfig:lava_pref2} are the two preference queries generated by ARP-pref. In both cases the query consists of two trajectories (shown in black and orange for visualization), and the agent taking the test must decide which trajectory is preferable (we chose the colors such that the black trajectory is preferable to orange). We see that preference query 1 verifies that the agent would rather move the to terminal state (green) rather than visit white cells. The second preference verifies that the agent would rather visit white cells than red cells, and would rather take an indirect path to the goal state (green) rather than a more direct path that visits red cells. Note that the black trajectory in preference query 2 first goes up, which results in a self transition, then goes left to get out of the lava.
Shown in figures~\ref{subfig:lava_arpbb}, \ref{subfig:lava_scot}, and \ref{subfig:lava_cs} are the query states for ARP-bb, SCOT, and CS heuristics, respectively. In each of these tests the agent being tested is asked what action its policy would take in each of the states marked with a question mark. To pass the test, the agent must respond with an optimal action under the tester's policy in each of these states.
ARP-bb chooses two states where the half-spaces defined by the expected feature counts of following the optimal policy versus taking a suboptimal action and following the optimal policy fully define the ARP. 

\begin{figure}[t]
\centering
\begin{subfigure}[b]{0.24\textwidth}
    \centering
    \includegraphics[width=\textwidth]{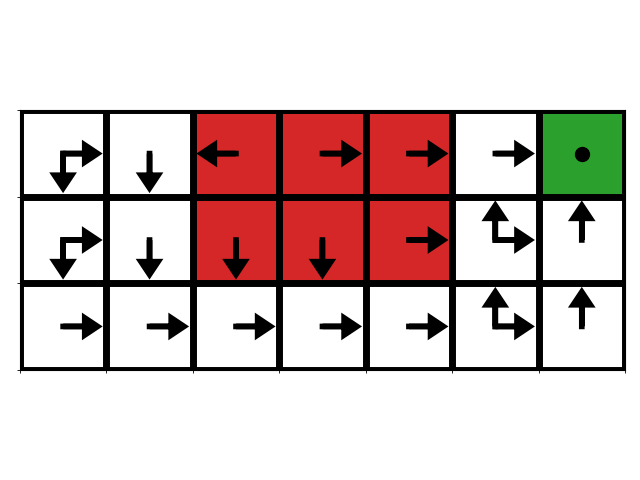}
    \caption{Optimal policy}
    \label{subfig:lava_pi}
\end{subfigure}
\hfill
\begin{subfigure}[b]{0.24\textwidth}
    \centering
    \includegraphics[width=\textwidth]{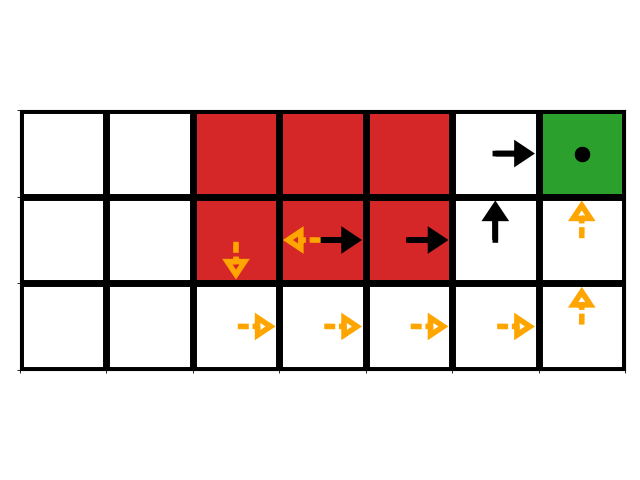}
    \caption{Preference query 1}
    \label{subfig:lava_pref1}
\end{subfigure}
\hfill
\begin{subfigure}[b]{0.24\textwidth}
    \centering
    \includegraphics[width=\textwidth]{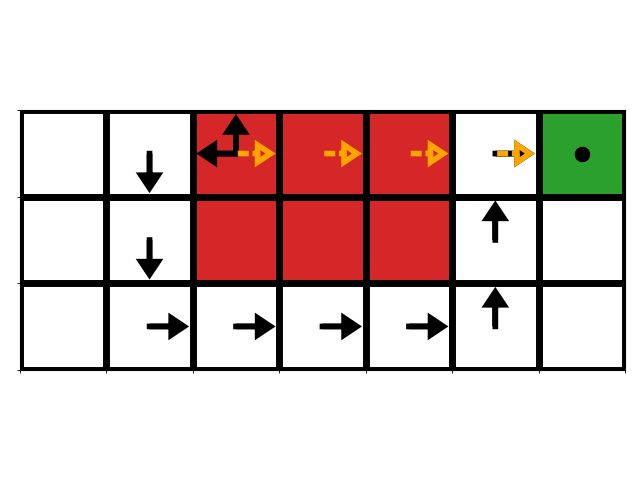}
    \caption{Preference query 2}
    \label{subfig:lava_pref2}
\end{subfigure}
\hfill
\begin{subfigure}[b]{0.24\textwidth}
    \centering
    \includegraphics[width=\textwidth]{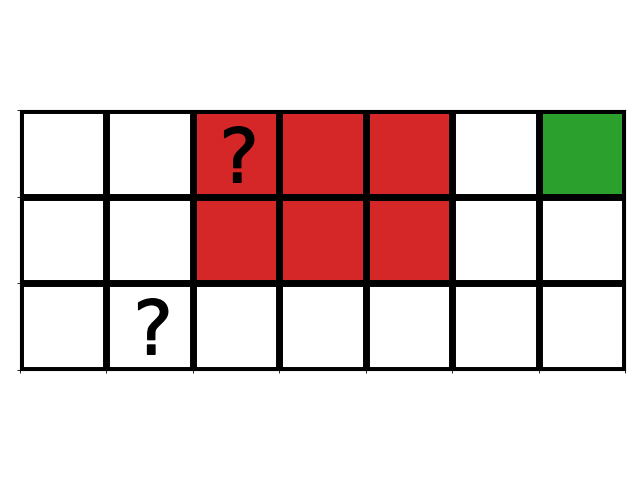}
    \caption{ARP black-box queries}
    \label{subfig:lava_arpbb}
\end{subfigure}

\begin{subfigure}[b]{0.24\textwidth}
    \centering
    \includegraphics[width=\textwidth]{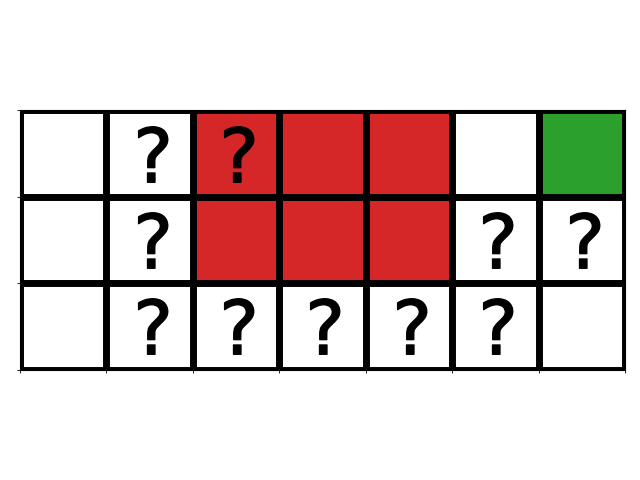}
    \caption{SCOT queries}
    \label{subfig:lava_scot}
\end{subfigure}
\begin{subfigure}[b]{0.24\textwidth}
    \centering
    \includegraphics[width=\textwidth]{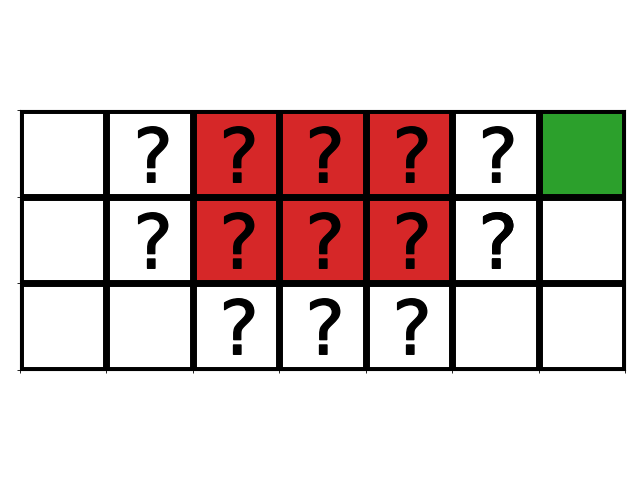}
    \caption{Critical state queries}
    \label{subfig:lava_cs}
\end{subfigure}
\caption{Example value alignment verification tests for the lava world domain.}
\label{fig:lava_world}
\end{figure}

\section{Value Alignment Verification with Idealized Human Tester}\label{app:vav_idealized}

In this appendix we compare the heuristic alignment methods with the exact alignment tests that query for the robot's reward function (ARP-$w$) and query for preferences over trajectories (ARP-pref). Since the tests are designed such that they accurately verify aligned agents, we constructed a suite of grid navigation domains with varying numbers of states and reward features. We generated 50 different misaligned agents by sampling random reward functions and comparing the resulting optimal policies to the optimal policy under a randomly-chosen ground-truth reward function. 
Figure~\ref{fig:tmpname}~(a) and (b) show that for a fixed number of features, the size of the test generated via the critical state heuristic with threshold $t=0.2$ (CS-0.2) scales poorly with the size of the grid world, even though the complexity of the reward function stays constant. The threshold $t$ has a large impact on the performance: small $t$ results in better accuracy at the cost of significantly more queries and larger $t$ results in significantly more false positives. We chose $t=0.2$ to minimize false positives while also attempting to keep the test size small. 
In Figure~\ref{fig:tmpname}~(c) and (d) we plot how the number of constraints grows as the reward function dimension increases and the MDP size is fixed. The plot for ARP-bb shows that the number of constraints grows with the size of the reward weight vector as expected. Conversely, the number of critical states has the undesirable effect of growing with the size of the MDP, regardless of the complexity of the underlying reward function. 

By construction, ARP-w requires only one query (querying for $\wb'$) to achieve perfect accuracy. Using trajectory preferences to define the ARP (ARP-pref) also has perfect accuracy, but requires more queries to the robot. 
SCOT has sample complexity that is lower than the critical state methods, but much higher than querying directly reward function weights since it queries for actions at each state along each machine teaching trajectory. We found empirically that SCOT has nearly perfect accuracy, but occasionally has false positives. Using the ARP inspired heuristic (ARP-bb) has low sample complexity and high accuracy, but sometimes has false positives as expected due to Theorem~\ref{thm:impossibility}.
These results give evidence that the testing method of choice depends on the capability of the robot and the complexity of the environment relative to the robot's reward function. If the robot can report a ground truth reward weight then ARP-w has the best performance. If the robot can only answer trajectory preference queries, then ARP-pref should be used. The heuristics (ARP-bb, SCOT, and CS) have higher query costs and lower accuracy, but are applicable when only given query access to the robot's policy and when the robot may not be perfectly rational. 



\begin{figure}
    \centering
    \begin{subfigure}[b]{0.3\textwidth}
    \centering
    \includegraphics[width=\textwidth]{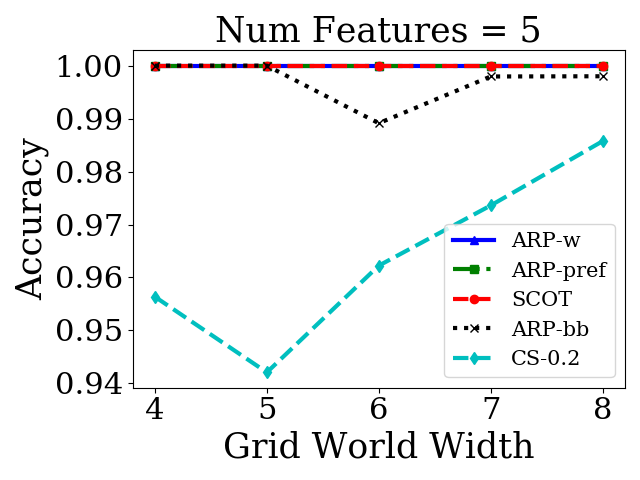}
    \caption{ARP black-box queries}
    \end{subfigure}
    \qquad
    \begin{subfigure}[b]{0.3\textwidth}
    \centering
    \includegraphics[width=\textwidth]{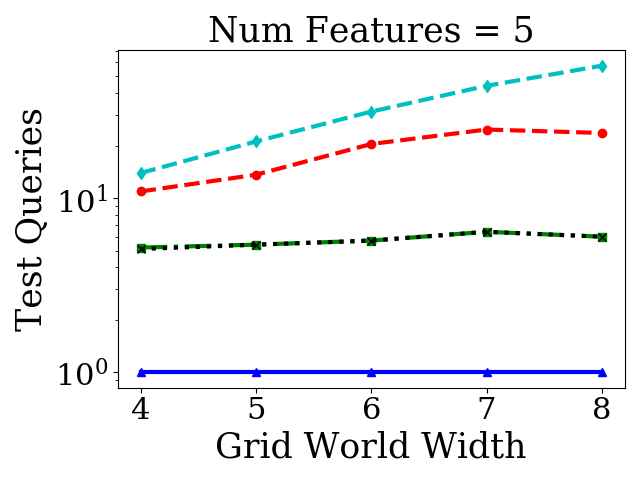}
    \caption{ARP black-box queries}
    \end{subfigure}

    \begin{subfigure}[b]{0.3\textwidth}
    \centering
    \includegraphics[width=\textwidth]{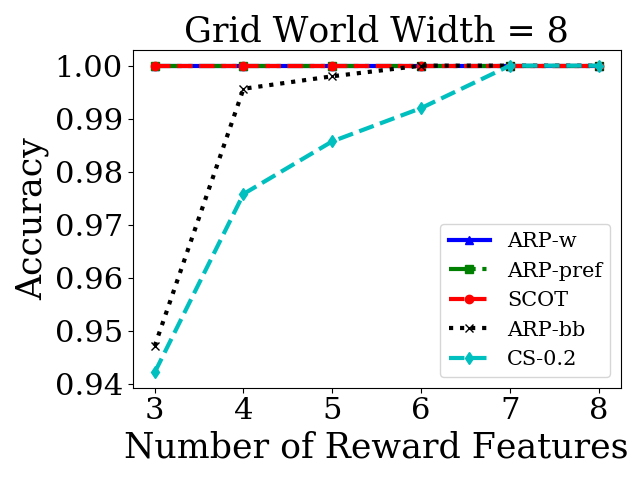}
    \caption{ARP black-box queries}
    \end{subfigure}
    \qquad
    \begin{subfigure}[b]{0.3\textwidth}
    \centering
    \includegraphics[width=\textwidth]{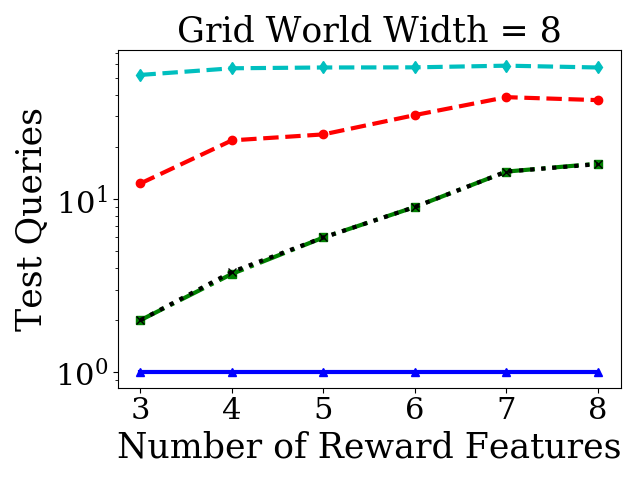}
    \caption{ARP black-box queries}
    \end{subfigure}

    \caption{Queries vs. accuracy (1 - false positive rate) for value alignment testing of misaligned agents. Exact alignment tests (ARP-w and ARP-pref) achieve good efficiency and perfect accuracy.}
    \label{fig:tmpname}
\end{figure}

\section{Details on Value Alignment Verification with Human Tester} \label{app:human-tester}

In this section we fully describe the pipelines used to determine $\epsilon$-alignment in the \emph{implicit human, implicit robot} setting. Each of these pipelines consist of the following steps:

\begin{enumerate}
    \item Preference elicitation, generating a posterior reward distribution and a number of potential test questions and answers
    \item Deduplication of test questions
    \item Filtering questions for $\epsilon$-alignment compatibility using the posterior reward distribution
    \item Removal of redundant questions
\end{enumerate}

Step (1) can be performed using any preference elicitation algorithm which produces a posterior reward distribution, although the questions may be of higher quality if the algorithm can operate over the same linear reward basis the test will operate over. We used the algorithm from \cite{biyik2019asking}, although \cite{pmlr-v87-biyik18a, sadigh2017active} and others with similar structure can be used as easily. Some preference elicitation algorithms that only produce a point estimate of the human's reward function are compatible with some methods of $\epsilon$-filtering, including the one used in the main paper.

Step (2) is necessary because active preference elicitation algorithms sometimes ask the same question multiple times if they believe humans to be noisily reporting their true preference.

Step (3) turns the test from an exact alignment test to an $\epsilon$-alignment test. The goal of this step is to remove questions that $\epsilon$-aligned reward functions will answer wrong. This is true of a question if the value gap between the trajectories under the true reward is less than epsilon $\wb^T \left(\Phi(\xi_a) - \Phi(\xi_b)\right) < \epsilon$. In the implicit human setting, the true reward function is unknown, so this true value gap must be estimated somehow from the posterior distribution. One can achieve this by approximating the true reward $\wb$ with the empirical mean reward $\hat{\wb}:= \mathbb{E}_{\wb \sim P( \cdot | \mathcal{P})}[\wb]$. The MAP reward can also be used, although we did not test this and expect the MAP and mean rewards to be similar.
If one is concerned about the confidence of the estimate, one can instead remove trajectories that are $\epsilon-\delta$ misaligned i.e. those for which $P(\wb^T \left(\Phi(\xi_a) - \Phi(\xi_b)\right) > \epsilon) < \delta$. This ensures that each question in the test has a $1-\delta$ probability of having an value gap of at least $\epsilon$. Empirically we found these two methods had similar results, and so relied on the simpler one.

Step (4) ensures that test is as small as possible. Each question in the test forms a half-space constraint over the possible reward functions. Some of these half-space constraints will be redundant. \cite{brown2019machine} describe a procedure for detecting which half-spaces are redundant by solving a specially constructed linear system of equations. See \ref{app:halfspace_redundancy_removal} for further details. 

The final $\epsilon$-alignment test consists of all the questions asked of the human during step (1) that are not removed by steps (2), (3), or (4). In general there is nothing to suggest that questions asked of the human during a preference elicitation process will make good test questions. In fact, such questions may be stricter than the ones that make up an $\epsilon$-ARP. Questions asked of the human are often between trajectories that are both suboptimal under the human's reward function. The ARP is constructed only using optimal-suboptimal pairs of trajectories, asking the robot only to have the correct preferences in optimal actions, and is agnostic about preferences over suboptimal actions, as the policy will never take those actions. By asking for preferences between suboptimal trajectories, we may be asking the robot to not only have the correct optimal actions at every state, but also the correct rankings between suboptimal actions.

However there are reasons to believe that preference elicitation algorithms that operate over the same linear reward features as the test will ask questions useful for an alignment test. These algorithms share much of the geometry of the ARP. Each answer to a question induces a (potentially soft) half-space constraint over the possible reward function of the human. These algorithms attempt to ask questions that remove the most volume from the posterior reward distribution \cite{sadigh2017active} or have the maximum expected information gain \cite{biyik2019asking}, which intuitively should result in high quality questions.

If one is not satisfied by these arguments, one could use the posterior reward distribution to generate new questions for the test. One could randomly generate test questions and answer those questions using the mean or MAP posterior reward. In practice we found this to have poor performance. With much larger test sizes, the suboptimal-suboptimal trajectory comparisons made the test so strict that nearly no rewards were passed. In future work we would like to generate test questions that will not be too strict by generating optimal trajectories under the posterior reward and comparing them to random suboptimal trajectories.

\section{Experiment Details}\label{app:exp_details}

\subsection{Exact vs Heuristics Grid Domains}
In all grid domains the transition dynamics are deterministic and actions corresponding to movement up, down, left, and right are available at every state. Actions that would lead the agent off of the grid result result in the agent staying in the same state. We ran experiments over different sized grid worlds with different numbers of features. For each grid world size and number of features we generated 50 random MDPs with features placed randomly and with a random ground-truth reward function. We then sampled 50 different reward function weights $w$ from the unit hypersphere. This bounds the Q-values of states, and so allowed us to tune over a bounded interval of $t$ hyperparameters for the critical-action state value alignment heuristic. For each reward we function we computed an optimal policy to create different agents for verification. Duplicate policies were removed.

\subsection{Half-space Normal Vector Redundancy removal}\label{app:halfspace_redundancy_removal}
All experiments (gridworlds and Driver) do duplication and redundancy filtering. Duplicate constraints are detected by computing cosine distance between the halfplane normal vectors. Any normal vectors that are within a small threshold (0.0001) of other normal vectors are deduplicated arbitrarily. Trivial (all-zero) constraints are also removed. There are several known ways to remove redundant constraints~\cite{paulraj2010comparative}. We remove redundant constraints using the exact linear programming method~\cite{paulraj2010comparative}, following the procedure from \citet{brown2019machine} which we will briefly summarize.

A redundant constraint is one that can be removed without changing the interior of the intersection of half-spaces. We can find redundant constraints efficiently using linear programming. To check if a constraint $a^Tx \leq b$ is binding we can remove that constraint and solve the linear program with $\max_x a^Tx$ as the objective. If the optimal solution is still constrained to be less than or equal to $b$ even when the constraint is removed, then the constraint can be removed. However, if the optimal value is greater than $b$ then the constraint is non-redundant. Thus, all redundant constraints can be removed by making one pass through the constraints, where each constraint is immediately removed if redundant.

As an example. Consider Figure~\ref{fig:ARP_toyproblem}. The hatched region on the right is the intersection of half-spaces that makes up the CRS. If we take away the boundaries we get the ARP. Note that there are several half-space constraints that do not tightly define the hatched region and are redundant.

\begin{figure}
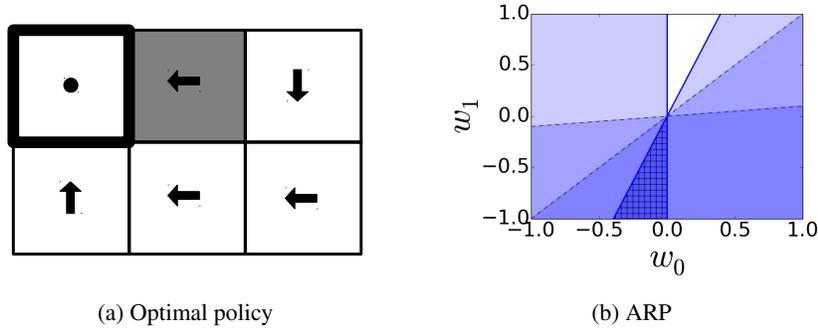

    \centering
    \begin{subfigure}[b]{0.3\textwidth}
    \centering
    \includegraphics[width=\textwidth]{figs/toy2x3policy.pdf}
    \caption{Optimal policy}
    \end{subfigure}
    \qquad
    \begin{subfigure}[b]{0.3\textwidth}
    \centering
    \includegraphics[width=\textwidth]{figs/feasibleToy2x3policy.png}
    \caption{ARP}
    \end{subfigure}
    \caption{Optimal policy and aligned reward polytope (ARP) for a simple gridworld with two features (white and gray) and a linear reward function ($w_0$: weight on white feature, $w_1$: weight on gray feature).}
    \label{fig:ARP_toyproblem}
\end{figure}

\subsection{Simulated Human in the Driver Environment}

In this section we more thoroughly describe the experimental procedure for the driving study in Section~\ref{sec:human} and present the full results, including false positive and false negative rates.

All preference elicitation parameters are as in \cite{biyik2019asking} unless otherwise specified. We run elicitation using strict queries. The information gain criterion was used. Each experiment is replicated ten times. Within each replication, the ground truth reward is randomly sampled from a unit-Gaussian and then normalized to unit length. $M$, the number of rewards sampled from the posterior in order to determine the expected information gain, was set to 100. Each replication terminated after asking different numbers of questions, but each replication contained at least 1000 question-answer pairs.

During test generation, the following combination of parameters were used: $\epsilon \in (0.0, 0.1, \hdots 5.0)$ and the number of simulated human preferences used $n \in (10, 25, 50, 100)$.

Simulated reward functions to evaluate the test were generated in a way to ensure balanced ground truth classes. In all cases, 100000 rewards functions were generated. The reward functions were always generated from a Gaussian distribution with mean equal to the ground truth reward function. The initial variance of this distribution was 1. Ground truth alignment for each test reward was determined by checking for agreement between the test reward and the ground truth reward on all $\epsilon$-compatible questions generated during preference elicitation (>1000) regardless of the value of $n$. If the initial batch of test rewards had between 45\% and 55\% aligned rewards for the given experimental parameters, we deemed the test reward set balanced and proceeded. If the initial set was unbalanced, we adapted the variance of the distribution until it either produced balanced test rewards or became implausibly large or small, in which case we used the last reasonable set of test rewards.

The full false positive, false negative, and accuracy graphs for these experiments are displayed below in Figure~\ref{app:full-sim-results}.

\begin{figure}
    \centering
    \begin{subfigure}[b]{0.4 \textwidth}
        \centering
        \includegraphics[width=\textwidth]{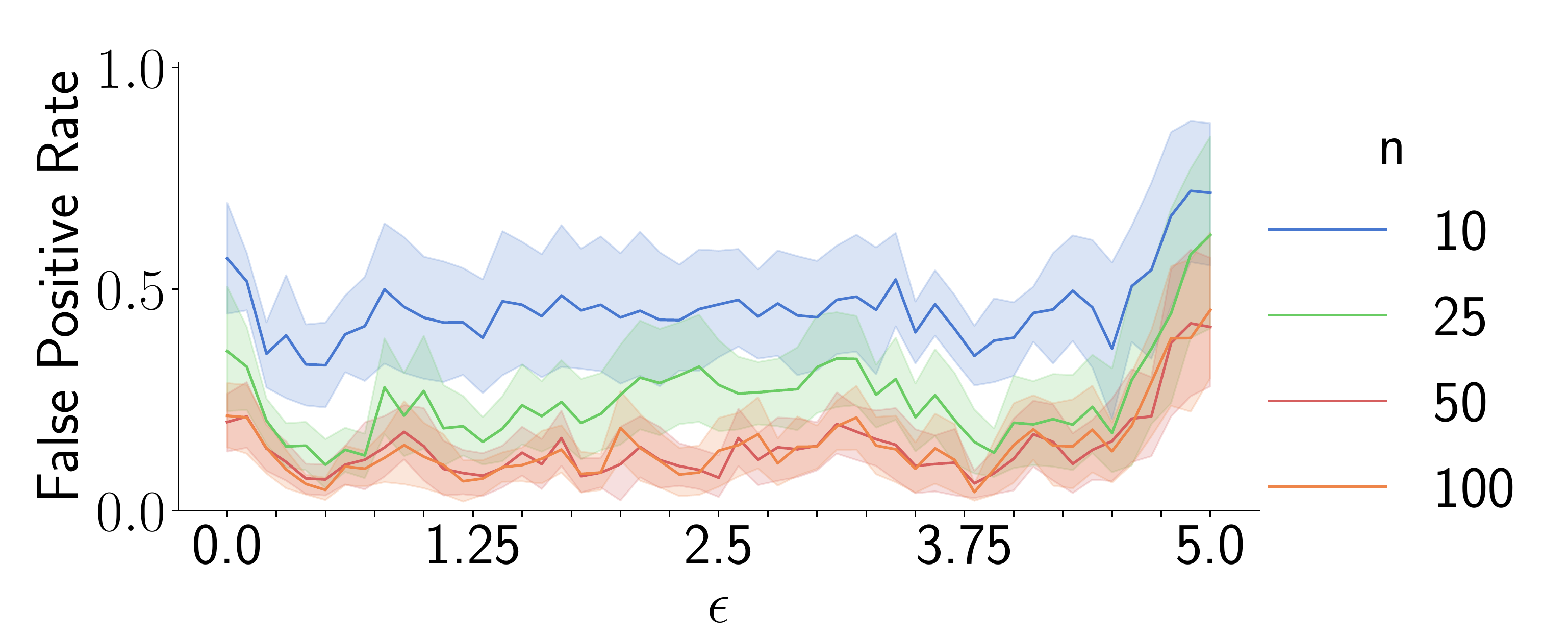}
    \end{subfigure}
    \begin{subfigure}[b]{0.4 \textwidth}
        \centering
        \includegraphics[width=\textwidth]{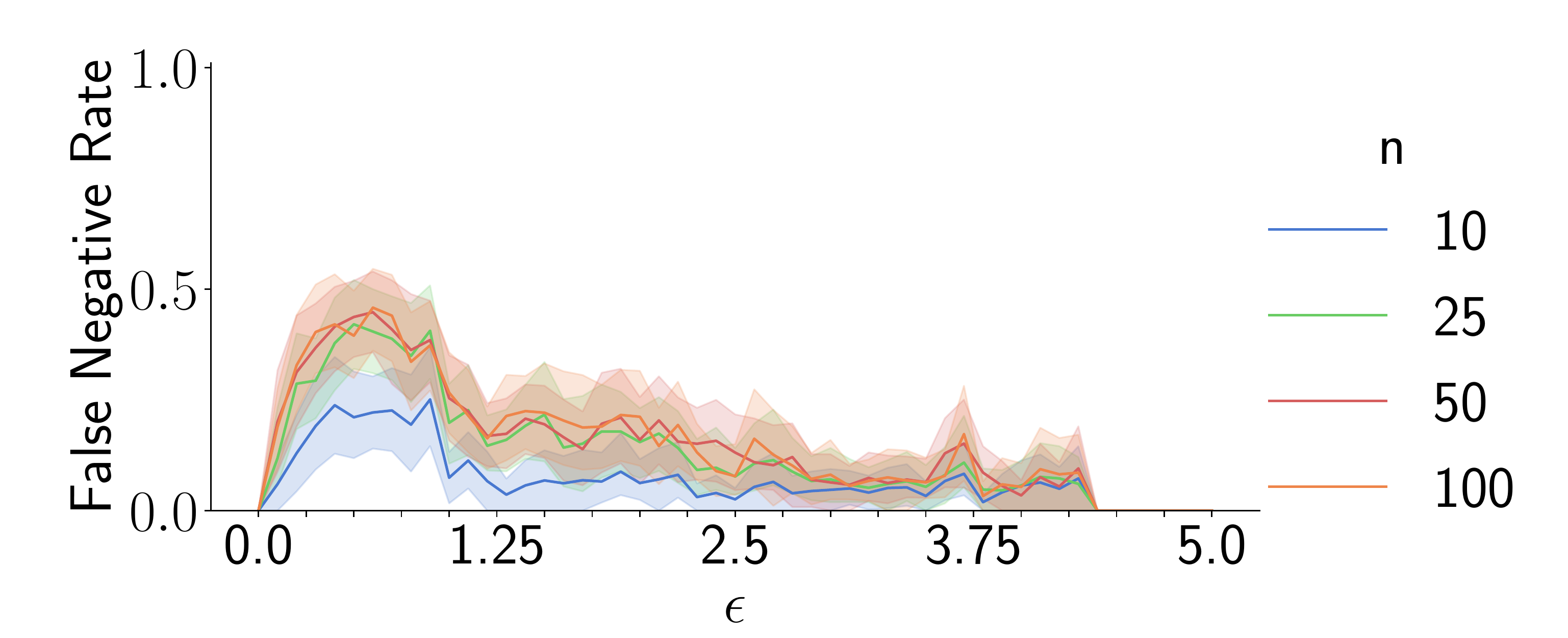}
    \end{subfigure}
    \begin{subfigure}[b]{0.4 \textwidth}
        \centering
        \includegraphics[width=\textwidth]{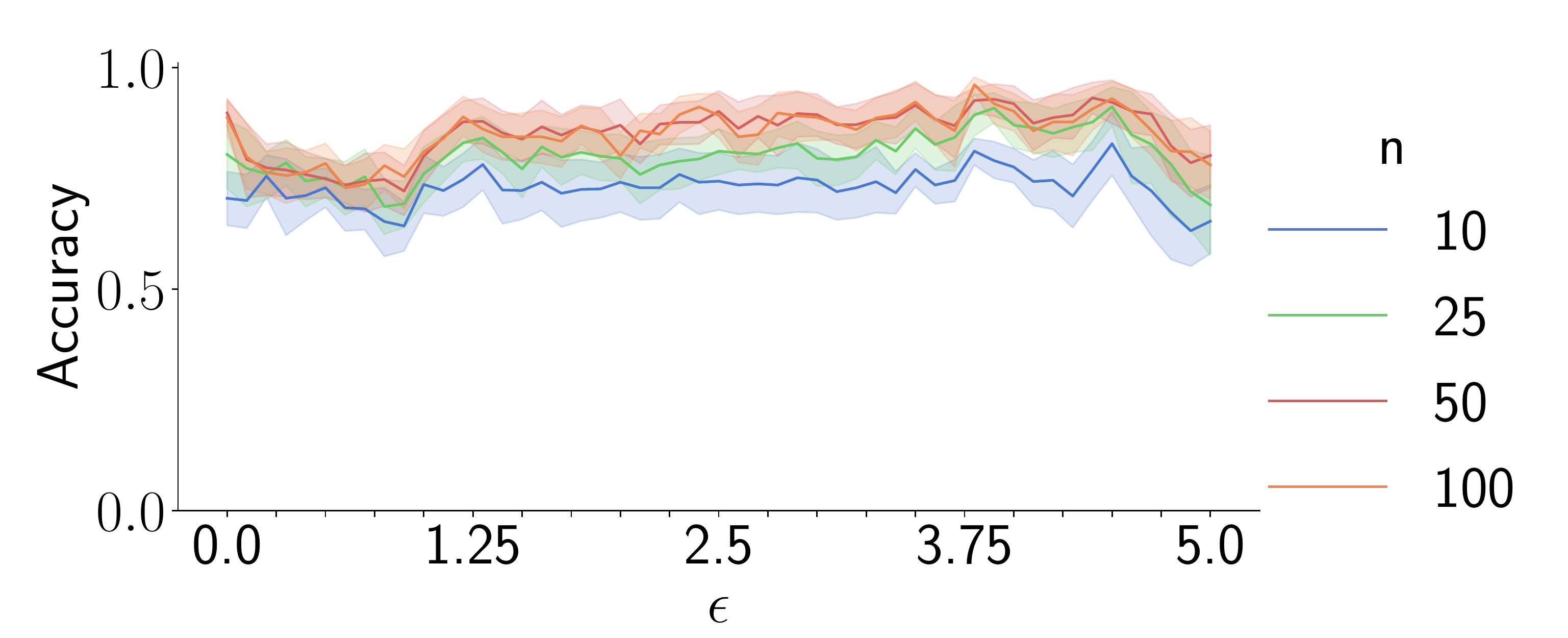}
    \end{subfigure}
    \caption{Performance of tests from simulated humans using mean reward $\epsilon$ filtering.}
    \label{app:full-sim-results}
\end{figure}

\subsection{Human Pilot Study}\label{app:human-exp}
\begin{figure}
\centering
\begin{subfigure}[b]{0.4 \textwidth}
    \centering
    \includegraphics[width=\textwidth]{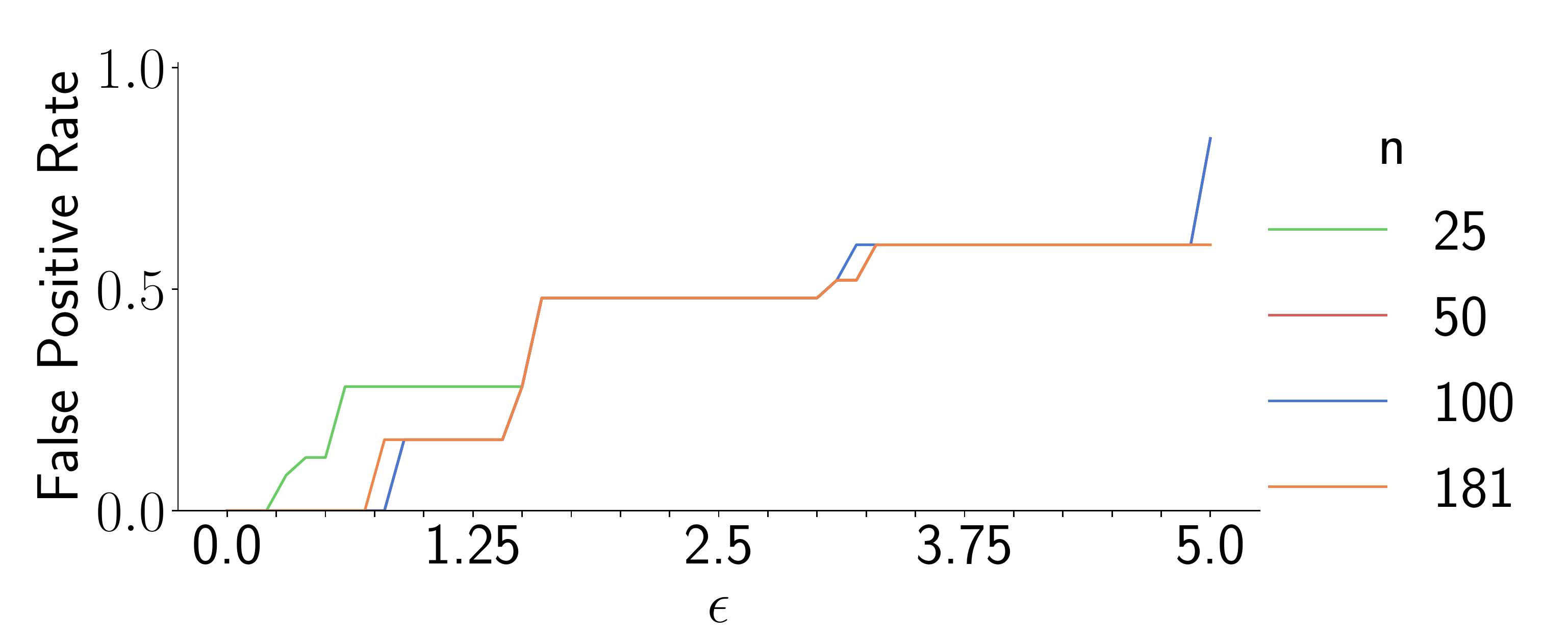}
\end{subfigure}
\begin{subfigure}[b]{0.4 \textwidth}
    \centering
    \includegraphics[width=\textwidth]{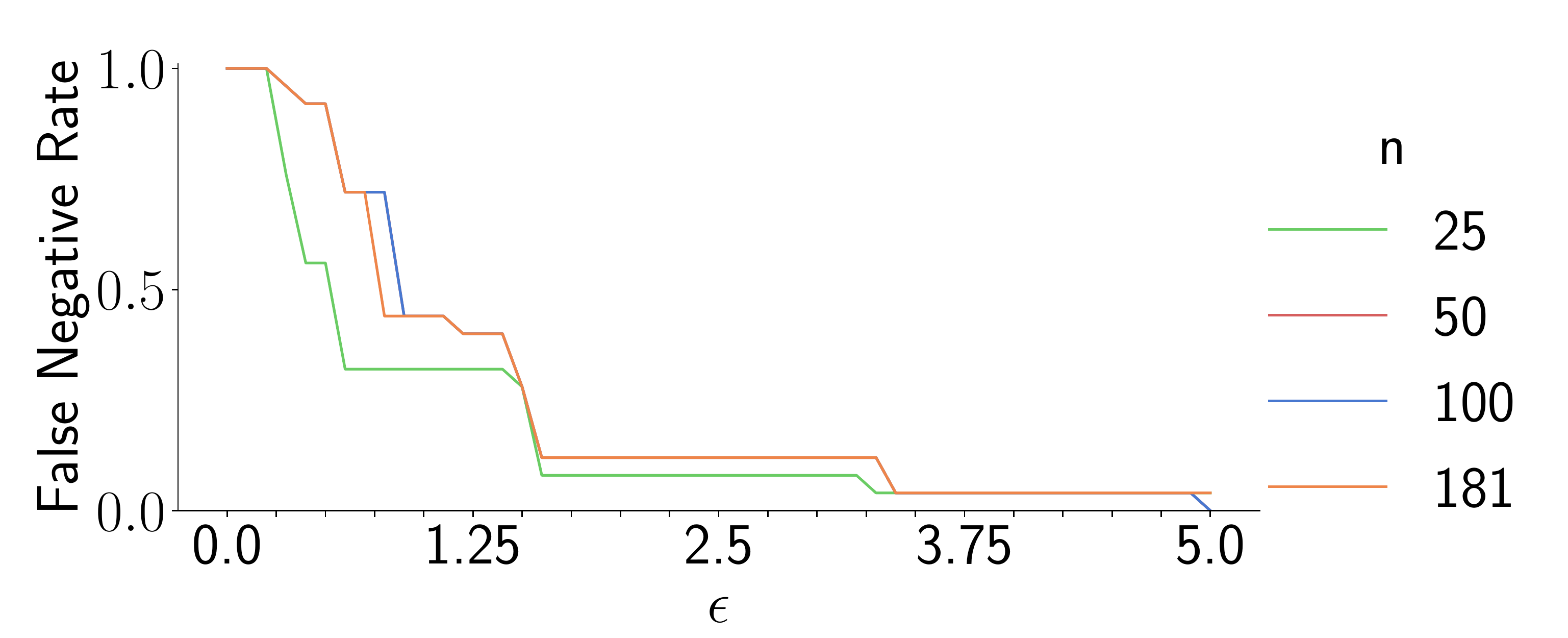}
\end{subfigure}
\begin{subfigure}[b]{0.4 \textwidth}
    \centering
    \includegraphics[width=\textwidth]{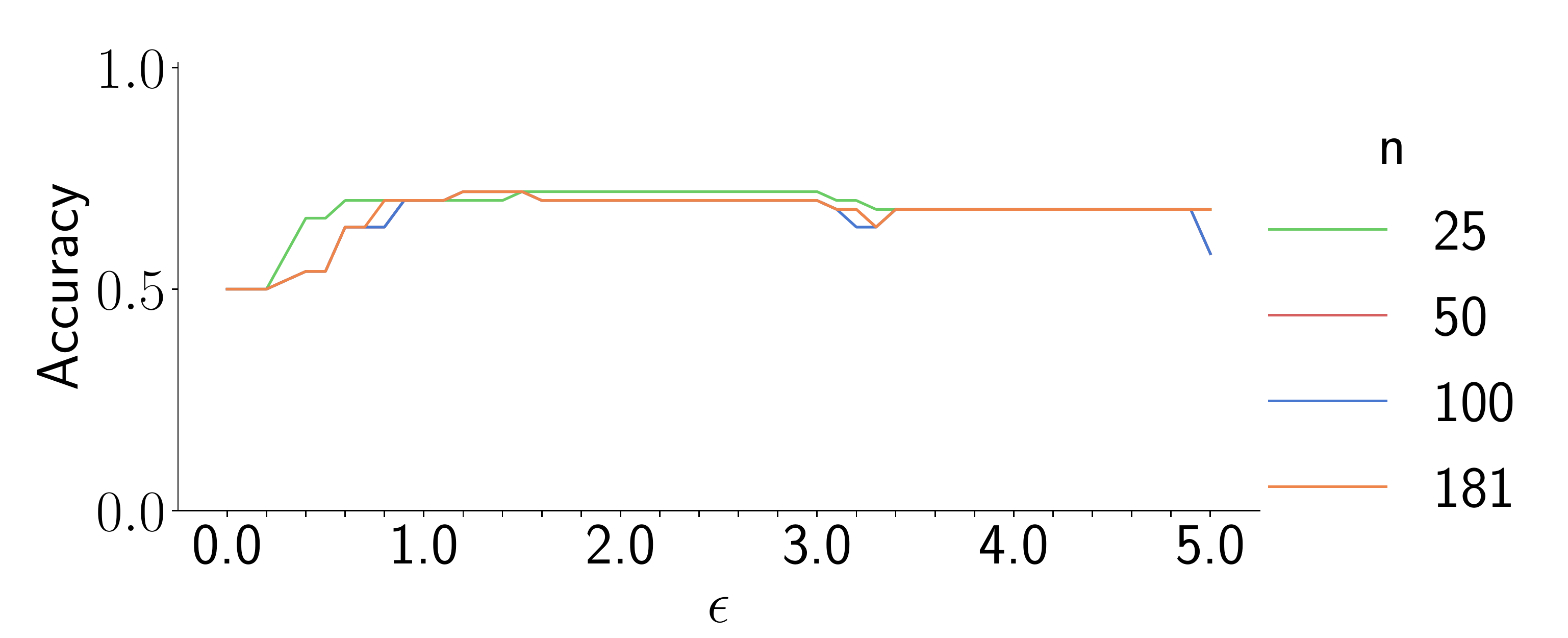}
\end{subfigure}

\caption{Detailed breakdown of mistakes from the human pilot study.}
\end{figure}

We performed a pilot study using real human preferences provided by the authors. The experimental procedure is the same as in the simulated case, except for the generation of test rewards and their labelling. Test rewards were generated from a Gaussian centered at the mean posterior reward with variance 0.5, which was selected after searching for a variance that would provide a balanced test set. Ground truth reward was determined by producing an optimal trajectory as in \cite{biyik2019asking}, manually inspecting the trajectory, and labelling the reward as aligned if the trajectory looked reasonable. This procedure is unjustified, as it does not examine reward functions in states that are hard to reach with a reasonable policy, and so a reward function labeled as aligned may not be aligned everywhere. It serves a reasonable proxy for the pilot study. The results are in figure \ref{app:human-exp}.

As epsilon increases, more of the questions are removed from the test. This necessarily increases the number of positive judgements the test provides, all else being equal. The accuracy initially increases with $\epsilon$ because the test has fewer false negatives as more noise questions are removed. At around $\epsilon=1.0$ most of the aligned agents pass, and any further removal of questions creates more false positives than it removes false negatives, lowering the overall accuracy. The cost of false positives and false negatives are often unequal, and so accuracy may not be the correct metric for your use case. The peak accuracy was 72\%.

\end{document}